\newtheorem{lem}{Lemma}
\newtheorem{thm}{Theorem}
\newtheorem{defn}{Definition}
\newtheorem{eg}{Example}
\newtheorem{propty}{Property}
\newtheorem{rem}{Remark}
\crefname{align}{}{}
\Crefname{align}{}{}
\crefname{thm}{theorem}{theorems}
\Crefname{thm}{Theorem}{Theorems}
\crefname{eg}{example}{examples}
\Crefname{eg}{Example}{Examples}
\crefname{clm}{claim}{claims}
\Crefname{clm}{Claim}{Claims}
\Crefname{coro}{Corollary}{Corollaries}
\Crefname{lem}{Lemma}{Lemmas}
\Crefname{sec}{Section}{Sections}
\crefname{app}{appendix}{appendices}
\Crefname{app}{Appendix}{Appendices}
\Crefname{part}{Part}{Parts}
\crefname{prop}{proposition}{propositions}
\Crefname{prop}{Proposition}{Propositions}
\Crefname{propty}{Property}{Properties}
\crefname{figure}{fig.}{figures}
\Crefname{figure}{Fig.}{Figures}
\crefname{defn}{definition}{definitions}
\Crefname{defn}{Definition}{Definitions}
\crefname{fact}{fact}{facts}
\Crefname{fact}{Fact}{Facts}
\crefname{appendix}{appendix}{appendices}
\Crefname{appendix}{Appendix}{Appendices}
\crefname{algo}{algorithm}{algorithms}
\Crefname{algo}{Algorithm}{Algorithms}
\crefname{algorithm}{algorithm}{algorithms}
\Crefname{algorithm}{Algorithm}{Algorithms}
\crefname{assm}{assumption}{assumptions}
\Crefname{assm}{Assumption}{Assumptions}
\crefname{conj}{conjecture}{conjectures}
\Crefname{conj}{Conjecture}{Conjectures}
\crefname{obs}{observation}{observations}
\Crefname{obs}{Observation}{Observations}
\crefname{rem}{remark}{remarks}
\Crefname{rem}{Remark}{Remarks}
\begin{document}

\title{Is There a Trade-Off Between Fairness and Accuracy? \\A Perspective Using Mismatched Hypothesis Testing}
\author{Sanghamitra Dutta,$^{1,2}$ Dennis Wei,$^1$ Hazar Yueksel,$^1$\\ Pin-Yu Chen,$^1$ Sijia Liu,$^1$ and Kush R. Varshney$^1$\\
\thanks{Author Contacts: S. Dutta (sanghamd@andrew.cmu.edu), D. Wei (dwei@us.ibm.com), H. Yueksel (hazar.yueksel@ibm.com), P.-Y. Chen (pin-yu.chen@ibm.com), S. Liu (sijia.liu@ibm.com) and K. R. Varshney (krvarshn@us.ibm.com).}
\thanks{This work was done when S. Dutta was a research intern at IBM Research.}
\thanks{This paper appears in the Proceedings of the 37th International Conference on Machine Learning, pp. 2803--2813, 2020.}
\normalsize $^1$IBM Research, $^2$Carnegie Mellon University   
}
\date{}

\maketitle

\begin{abstract}
A trade-off between accuracy and fairness is almost taken as a given in the existing literature on fairness in machine learning. Yet, it is not preordained that accuracy should decrease with increased fairness. Novel to this work, we examine fair classification through the lens of \emph{mismatched hypothesis testing}: trying to find a classifier that distinguishes between two ideal distributions when given two mismatched distributions that are biased. Using Chernoff information, a tool in information theory, we theoretically demonstrate that, contrary to popular belief, there always exist ideal distributions such that optimal fairness and accuracy (with respect to the ideal distributions) are achieved simultaneously: there is no trade-off. Moreover, the same classifier yields the lack of a trade-off with respect to ideal distributions while yielding a trade-off when accuracy is measured with respect to the given (possibly biased) dataset. To complement our main result, we formulate an optimization to find ideal distributions and derive fundamental limits to explain why a trade-off exists on the given biased dataset. We also derive conditions under which active data collection can alleviate the fairness-accuracy trade-off in the real world. Our results lead us to contend that it is problematic to measure accuracy with respect to data that reflects bias, and instead, we should be considering accuracy with respect to ideal, unbiased data.
\end{abstract}

\section{Introduction}
\label{sec:introduction}

This work addresses a fundamental question in the field of algorithmic fairness~\cite{calmon2017optimized,dwork2012fairness,agarwal2018reductions,hardt2016equality,ghassami2018fairness,kusner2017counterfactual,kilbertus2017avoiding,zemel2013learning}:
\begin{center}
\textit{Is there a trade-off between fairness and accuracy?}
\end{center}

The existence of this trade-off has been pointed out in several existing works~\cite{menon2018cost,chen2018my,zhao2019inherent} that also propose different theoretical approaches to characterize it. Yet, it is not preordained as to why such a trade-off should exist between fairness and accuracy. For instance, \cite{friedler2016possibility} and \cite{yeom2018discriminative} suggest that the observed features in a machine learning model (e.g., test scores) are a possibly noisy mapping from features in an abstract construct space (e.g., true ability) where there is no such trade-off. Then, why does correcting for biases worsen predictive accuracy in the real world? We believe there is value in stepping back and reposing the fundamental question.

In this work, our main assertion is that the trade-off between accuracy and fairness (in particular, equal opportunity~\cite{hardt2016equality}) in the real world is due to noisier (and hence biased) mappings for the unprivileged group due to historic differences in opportunity, representation, etc., making their positive and negative labels ``less separable.'' To concretize this idea, we adopt a novel viewpoint on fair classification: the perspective of mismatched hypothesis testing. In mismatched hypothesis testing, the goal is to find a classifier that distinguishes between two ``ideal'' distributions, but instead, one only has access to two mismatched distributions that are biased. Our most important result is to theoretically show that for a fair classifier with sub-optimal accuracy on the given biased data distributions, there always exist ideal distributions such that fairness and accuracy are in accord when accuracy is measured with respect to the ideal distributions. Through this perspective, there is no trade-off between fairness and accuracy.

Our contributions in this work are as follows:

\begin{itemize}[leftmargin=*]
\item \textit{Concept of separability to quantify accuracy-fairness trade-off in the real world:} For a group of people in an observed dataset, we quantify the ``separability'' into positive and negative class labels using Chernoff information, an information-theoretic approximation to the best exponent of the probability of error in binary classification. We demonstrate (in Theorem~\ref{thm:separability}) that if the Chernoff information of one group is lower than that of the other in the observed dataset, then modifying the best classifier using a group fairness criterion compromises the error exponent (representative of accuracy) of one or both the groups, explaining the accuracy-fairness trade-off. Not only do these tools demonstrate the existence of a trade-off (as also demonstrated in some existing works~\cite{menon2018cost,chen2018my} using alternative formulations), but they also enable us to approximately quantify the trade-off, e.g., how close can we bring the probabilities of false negative for two groups in an attempt to attain equal opportunity for a certain compromise on accuracy (see Fig.~\ref{fig:active} in Section~\ref{sec:numerical}). The existence of this trade-off prompts us to contend that accuracy of a classifier with respect to the existing (possibly biased) dataset is a problematic measure of performance. Instead, one should consider accuracy with respect to an ideal dataset that is an unbiased representation of the population.

\item \textit{Ideal distributions where fairness and accuracy are in accord:} Novel to this work, we examine the problem of fair classification through the lens of mismatched hypothesis testing. We show (in Theorem~\ref{thm:feasibility}) that there exist ideal distributions such that both fairness (in the sense of equal opportunity on both the existing and the ideal distributions) and accuracy (with respect to the ideal distributions) are in accord. We also formulate an optimization to show how to go about finding such ideal distributions in practice. The ideal distributions provide a target to shift the given biased distributions toward and to evaluate accuracy on. Their interpretation can be two-fold: (i) plausible distributions in the observed space resulting from an ``unbiased'' mapping from the construct space; or (ii) candidate distributions in the construct space itself (discussed further in Section~\ref{subsec:accord}).

\item \textit{Criterion to alleviate the accuracy-fairness trade-off in the real world:} Next, we also address another important question, i.e., when can we alleviate the accuracy-fairness trade-off in the real world that we must work in, specifically through additional data collection. We derive an information-theoretic criterion (in Theorem~\ref{thm:explainability}) under which collecting more features improves separability, and hence, accuracy in the real world, alleviating the trade-off. This can also inform our choice of the ideal distributions. Our analysis serves as a technical explanation for the success of active fairness~\cite{NoriegaCamperoBGP2019,BakkerNTSVP2019, chen2018my} that uses additional features to improve fairness.

\item \textit{Numerical example:} We demonstrate how the analysis works through a simple numerical example (with analytical closed-forms).

\end{itemize}

\textbf{Related Work:} We note that several existing works, such as \cite{garg2019tracking}, \cite{menon2018cost}, \cite{chen2018my}, and \cite{zhao2019inherent}, have also used information theory or Bayes risk to characterize the accuracy-fairness trade-off. However, computing Bayes risk is not straightforward. Indeed, even for Gaussians, one resorts to Chernoff bounds to approximate the Q-function. Chernoff information is an approximation for Bayes risk that has a tractable geometric interpretation (see Fig.~\ref{fig:trade-off}). This enables us to numerically compute the accuracy-fairness trade-off (Fig.~\ref{fig:active}), and also understand ``how much'' accuracy can be improved by data collection, going beyond the assertion that there is some improvement. To the best of our knowledge, existing works have pointed out the existence of a trade-off based on Bayes risk but have not provided a method to exactly compute it, motivating us to introduce the additional tool of Chernoff information to do so approximately. Furthermore, this work goes beyond characterizing the trade-off imposed by the given dataset. Our novelty lies in adopting the perspective of mismatched detection and demonstrating that there exist ideal distributions such that both fairness and accuracy are in accord when accuracy is measured with respect to the ideal distributions. Other very recent works related to accuracy-fairness trade-offs include~\cite{sabato2020bounding,kim2020model,blum2019recovering}.

The recent works of \cite{wick2019unlocking} and \cite{sharmadata} further elucidate the significance of Theorem~\ref{thm:feasibility} and how it presents an insight that contradicts ``the prevailing wisdom,'' i.e., there exists an ideal dataset for which fairness and accuracy are in accord. In a sense, our work provides a theoretical foundation that complements the empirical results of \cite{wick2019unlocking} and \cite{sharmadata}, clarifying when a trade-off exists and when it does not. 

There are also several existing methods of pre-processing data to generate a fair dataset~\cite{CalmonWVRV2018,feldman2015certifying,zemel2013learning}. Here, our goal is not to propose another competing strategy of fairness through pre-processing. Instead, our focus is to theoretically demonstrate that there exists an ideal dataset such that a fair classifier is also optimal in terms of accuracy, which has not been formally shown before. We also focus on equal opportunity rather than statistical parity (as in \cite{CalmonWVRV2018}).

Our tools share similarities with \cite{varshney2018interpretability} (that demonstrates how explainability can improve Chernoff information), as well as the theory of hypothesis testing in general \cite{lee2012generalized,cover2012elements}. Our contribution lies in using these tools in fair machine learning, where they have not been used to the best of our knowledge (e.g., in the previous analyses of~\cite{menon2018cost,zhao2019inherent,chen2018my}). 

\begin{rem}[Population Setting] In this work, we operate in the population setting (motivated from \cite{gretton2007kernel,ravikumar2009sparse,scott2013classification}), i.e., the limit as the number of samples goes to infinity, allowing use of the probability distributions of the data. This allows us to represent binary classifiers as likelihood ratio detectors (also called Neyman-Pearson (NP) detectors) and quantify the fundamental limits on the accuracy-fairness trade-off. Indeed, given any classifier, there always exists a likelihood ratio detector which is at least as good (see NP Lemma in \cite{cover2012elements}). 
\label{rem:population_setting} 
\end{rem}

\section{Preliminaries}
\label{sec:preliminaries}
\begin{figure}
\centering
\includegraphics[height=3cm]{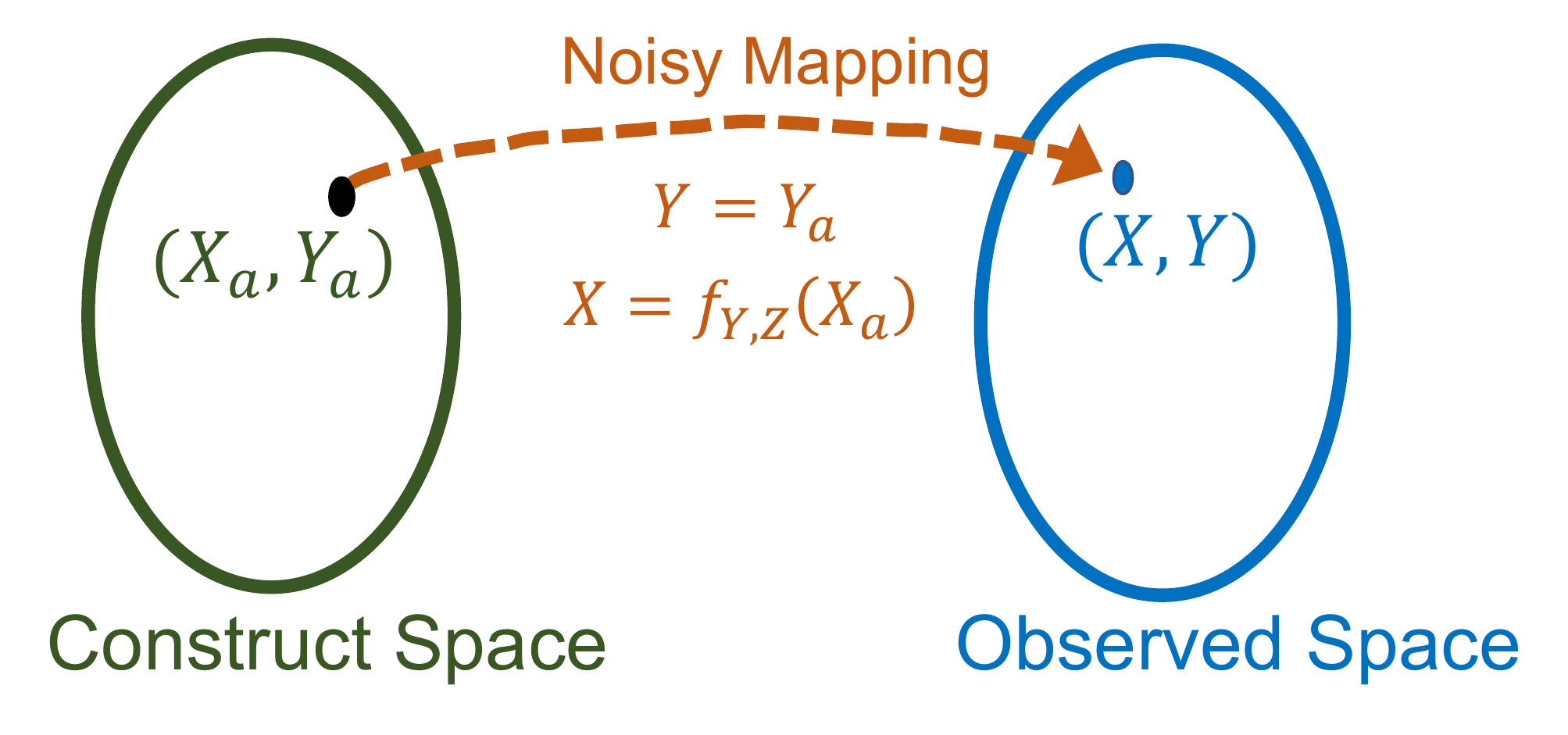}
\caption{Pictorial illustration of the setup. \label{fig:notations}}
\end{figure}
\textbf{Setup:}
In this work, we focus on binary classification, which arises commonly in practice in the fairness literature, e.g., in deciding whether a candidate should be accepted or rejected in applications such as hiring, lending, etc.
We let $Z$ denote the protected attribute, e.g., gender, race, etc. Without loss of generality, let $Z=0$ be the unprivileged group and $Z=1$ be the privileged group.

Inspired by \cite{yeom2018discriminative} and \cite{friedler2016possibility}, we assume that there is an abstract construct space where $X_a$ is the feature (e.g., true ability) and $Y_a$ is the true label (i.e., takes value $0$ or $1$).  The construct space is not directly accessible to us. In the real world, we instead have access to an observed space where $X$ denotes the feature vector and $Y$ denotes the true label (i.e., takes value $0$ or $1$). For the sake of simplicity, we assume $Y_a=Y$ based on \cite{yeom2018discriminative}.\footnote{This is consistent with the ``What You See Is What You Get'' worldview in \cite{yeom2018discriminative} where label bias can be ignored and our chosen measure of fairness, i.e., equal opportunity is justified as a measure of fairness.} The observed features are derived from features in the construct space as follows: $X=f_{Y,Z}(X_a)$ where $f_{Y,Z}(\cdot)$ is a possibly noisy mapping that can depend on $Y$ and $Z$ (also see Fig.~\ref{fig:notations}).

Let the features in the given dataset in the observed space have the following distributions: $X|_{Y=0,Z=0} {\sim} P_0(x)$ and $X|_{Y=1,Z=0} {\sim} P_1(x).$ Similarly, $X|_{Y=0,Z=1} {\sim} Q_0(x)$ and $X|_{Y=1,Z=1} {\sim} Q_1(x).$  
\begin{figure*}
\centering
\includegraphics[height=2.41cm]{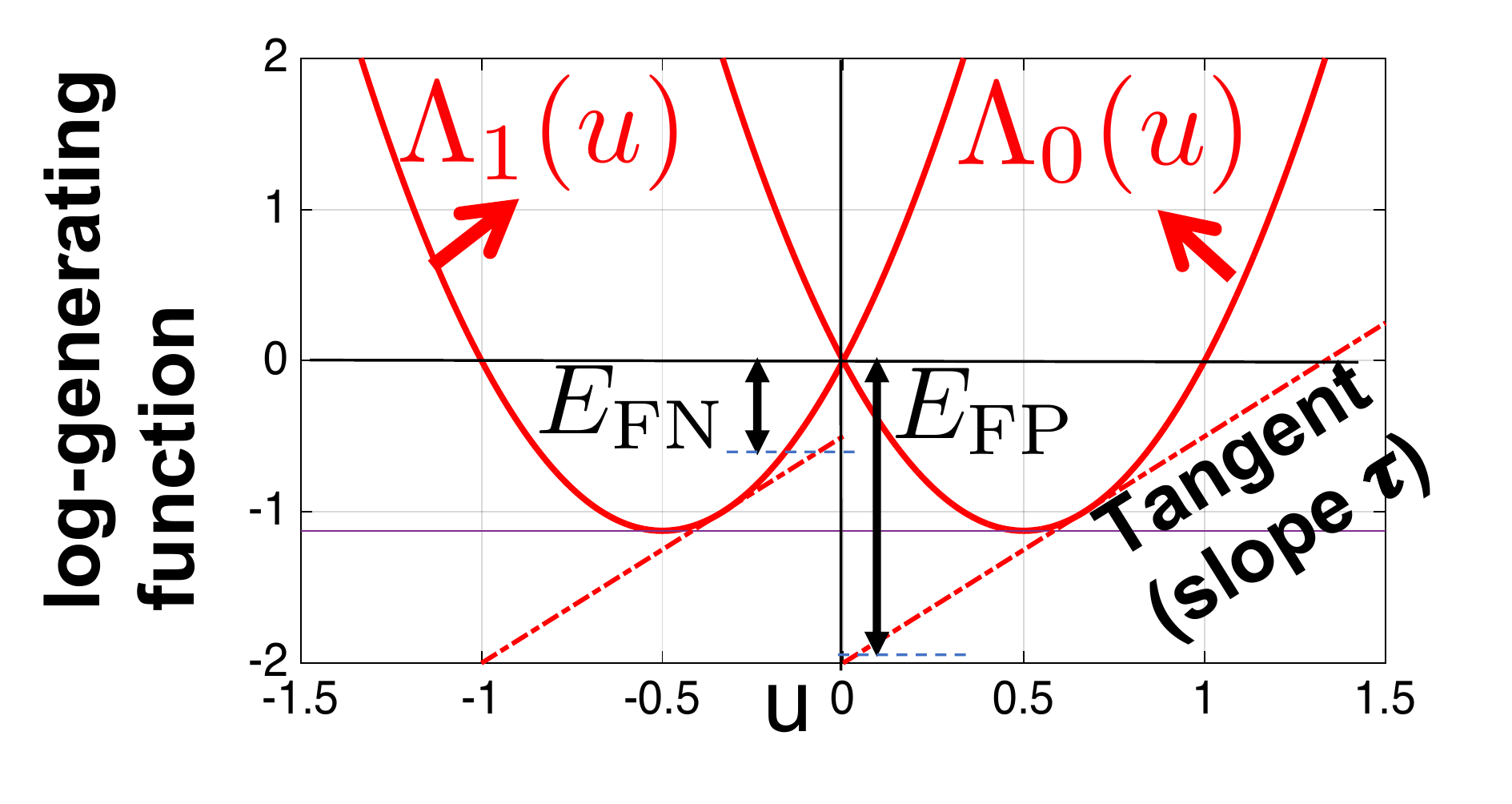}
\includegraphics[height=2.4cm]{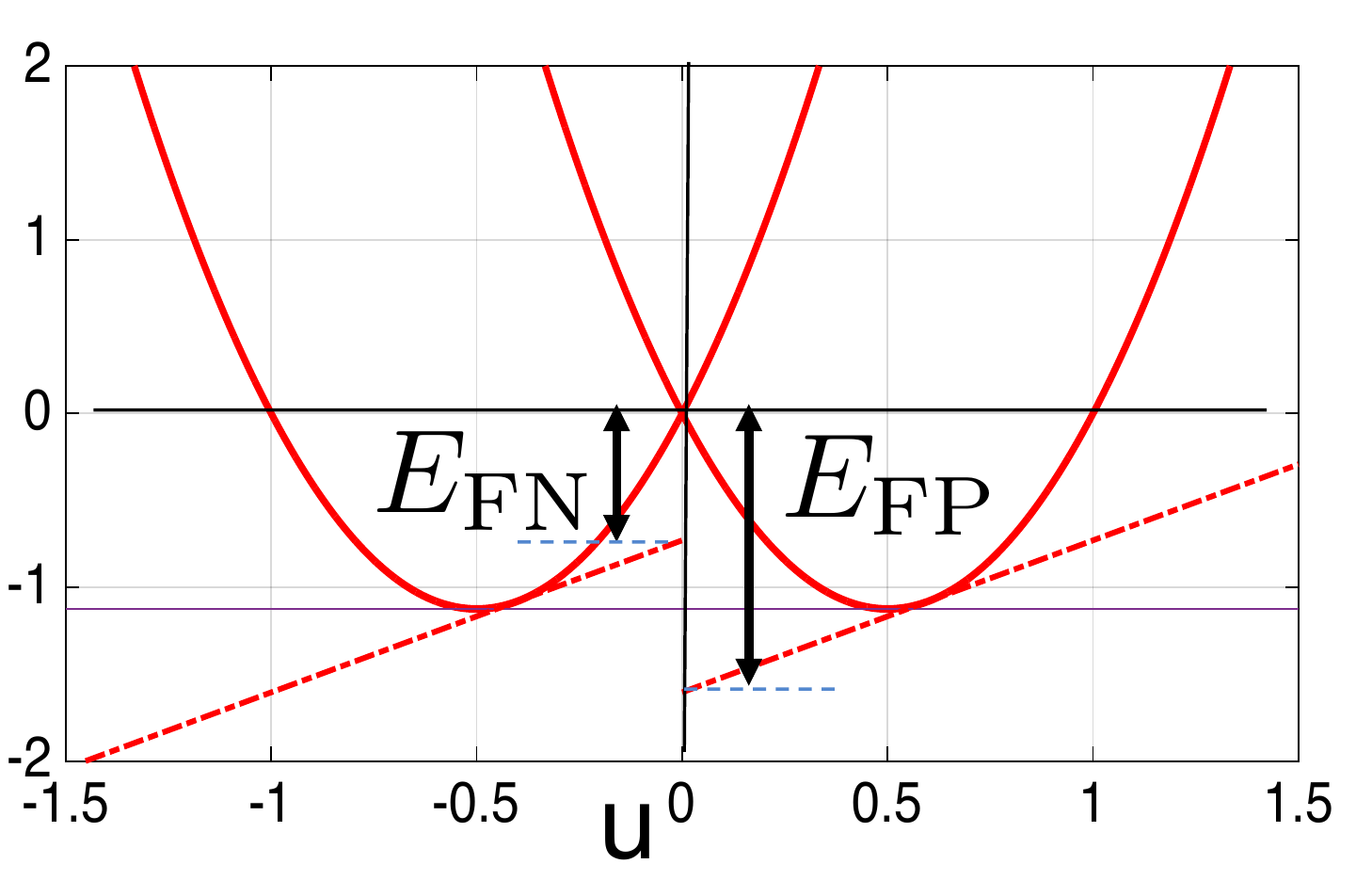}
\includegraphics[height=2.4cm]{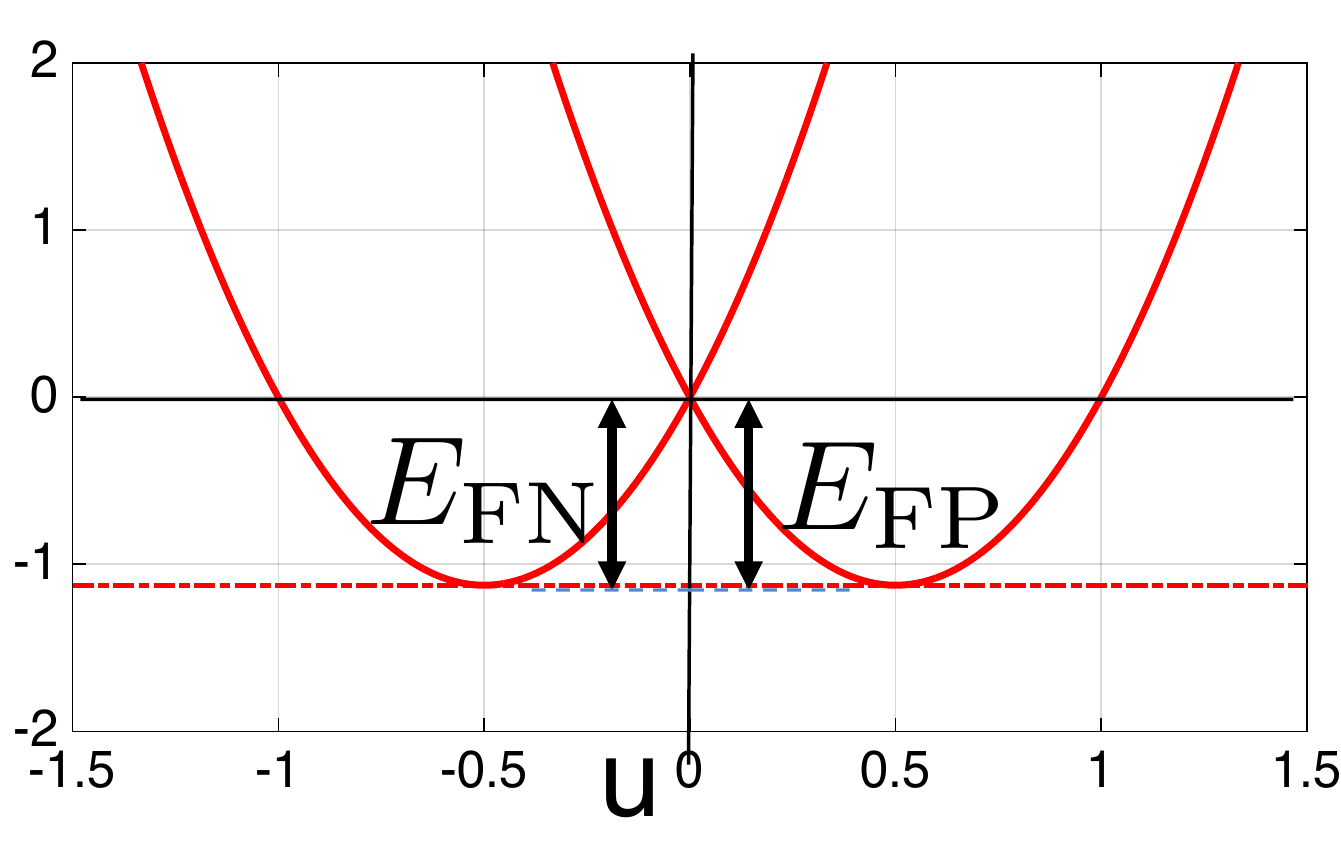}
\includegraphics[height=2.4cm]{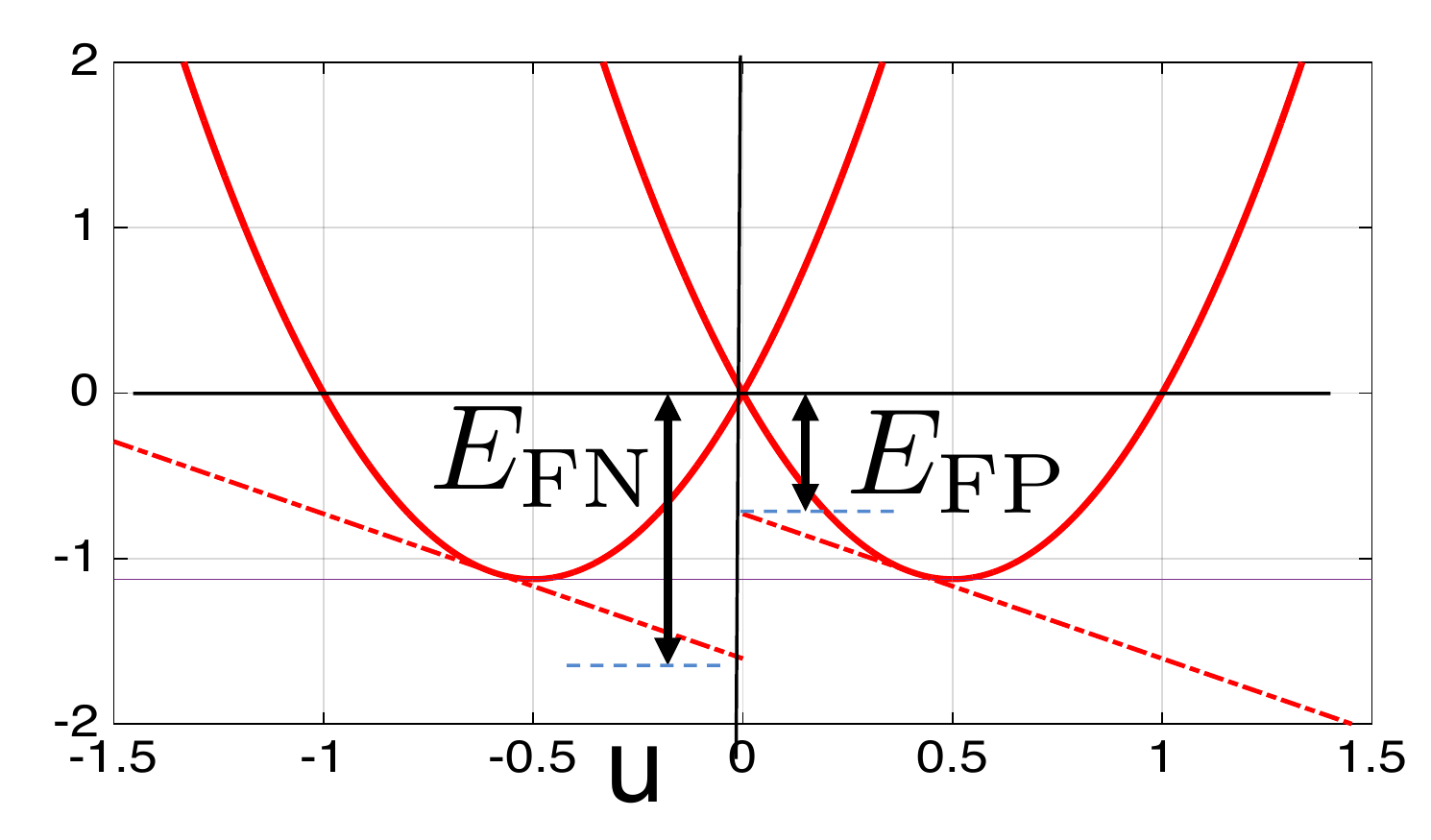}
\caption{Let $P_0(x){\sim}\mathcal{N}(1,1)$ and $P_1(x){\sim} \mathcal{N}(4,1)$. For a likelihood ratio detector $T(x){=}\log{\frac{P_1(x)}{P_0(x)}}\geq \tau$, we can compute the log-generating functions as follows:
$\Lambda_0(u)=\frac{9}{2}u(u-1)$ and 
$\Lambda_1(u)=\frac{9}{2}u(u+1)$ (derived in Appendix A.3). Note that, $\Lambda_0(u)$ is strictly convex with zeros at $u{=}0$ and $u{=}1$, and $\Lambda_1(u){=}\Lambda_0(u+1)$. We obtain $E_{\mathrm{FP},T}(\tau)$ and $E_{\mathrm{FN},T}(\tau)$ as the negative of the y-intercepts for tangents to $\Lambda_0(u)$ and $\Lambda_1(u)$ respectively with slope $\tau$. As we vary the slope of the tangent $(\tau)$, there is a trade-off between $E_{\mathrm{FP},T}(\tau)$ and $E_{\mathrm{FN},T}(\tau)$ until they both become equal at $\tau=0$ (third figure from left). The value of the exponent at $\tau{=}0$ (negative of the y-intercepts for tangents with $0$-slope) is defined as the Chernoff Information, given by: $\mathrm{C}(P_0,P_1){:=}E_{\mathrm{FP},T}(0){=}E_{\mathrm{FN},T}(0),$ which is equal to $9/8$ for this particular example.  \label{fig:exponents}} 
\end{figure*}
For each group $Z=z$, we will be denoting classifiers as $T_z(x)\geq\tau_z$, i.e., the prediction label is $1$ when $T_z(x)\geq \tau_z$ and $0$ otherwise. 
\begin{rem}[Decoupled Classifiers]
While such models may exhibit disparate treatment (explicit use of $Z$), the intent is to better mitigate disparate impact using the protected attribute explicitly in the decision making (along the spirit of fair affirmative action~\cite{dwork2012fairness,decoupled}). Furthermore, a classifier that does not use $Z$ becomes a special case of our classifier if $T_z$ and $\tau_z$ are the same for both groups. 
\end{rem}

Next, we state two basic assumptions: (\textbf{A1}) Absolute Continuity: $P_0(x)$, $P_1(x)$, $Q_0(x)$ and $Q_1(x)$ are greater than $0$ everywhere in the range of $x$. This ensures that likelihood ratio detectors such as $\log{\frac{P_1(x)}{P_0(x)}}\geq \tau_0$ and Kullback-Leibler (KL) divergences between any two of these distributions are well-defined.\footnote{Without this assumption, the definition of separability, i.e., Chernoff information (as we define later in \Cref{defn:separability}) can become infinite, and the problem is ill-posed.} (\textbf{A2}) Distinct Hypotheses: $\mathrm{D}(P_0{||}P_1)$, $\mathrm{D}(P_1{||}P_0)$, $\mathrm{D}(Q_0{||}Q_1)$ and $\mathrm{D}(Q_1{||}Q_0)$ are strictly greater than $0$, where $\mathrm{D}(\cdot{||}\cdot)$ is the KL divergence.

We let $P_{\mathrm{FP},T_z}(\tau_z)$ be the probability of false positive (wrongful acceptance of negative class labels; also called false positive rate (FPR)) over the group $Z=z$, i.e., 
$
P_{\mathrm{FP},T_z}(\tau_z) 
=\Pr{(T_z(X)\geq \tau_z|Y=0,Z=z)}. $
Similarly, $P_{\mathrm{FN},T_z}(\tau_z)$ is the probability of false negative (wrongful rejection of positive class labels; also called false negative rate (FNR)), given by: 
$
P_{\mathrm{FN},T_z}(\tau_z) 
=\Pr{(T_z(X) < \tau_z|Y=1,Z=z)}. $
The overall probability of error of a group is 
given by: 
$
P_{e,T_z}(\tau_z)
=\pi_0 P_{\mathrm{FP},T_z}(\tau_z)+ \pi_1 P_{\mathrm{FN},T_z}(\tau_z),$
where $\pi_0$ and $\pi_1$ are the prior probabilities of $Y=0$ and $Y=1$ given $Z=z$. For the sake of simplicity, we consider the case where $\pi_0=\pi_1=\frac{1}{2}$ given $Z=z$, and also equal priors on all groups $Z=z$. We include a discussion on how to extend our results for the case of unequal priors in Appendix E. Equal priors also correspond to the balanced accuracy measure~\cite{brodersen2010balanced} which is often favored over ordinary accuracy.

A well-known definition of fairness is \emph{equalized odds}~\cite{hardt2016equality}, which states that an algorithm is fair if it has equal probabilities of false positive (wrongful acceptance of true negative class labels) and false negative (wrongful acceptance of true positive class labels) for the two groups, i.e., $Z=0$ and $1$. A relaxed variant of this measure, widely used in the literature, is \emph{equal opportunity}, which enforces only equal false negative rate (or equivalently, equal true positive rate) for the two groups. In this work, we focus primarly on equal opportunity, although the arguments can be extended to other measures of fairness as well, e.g., statistical parity~\cite{agarwal2018reductions}. 

We assume that in the construct space, there is no trade-off between accuracy and equal opportunity, i.e., the Bayes optimal~\cite{cover2012elements} classifiers for the groups $Z=0$ and $Z=1$ also satisfy equal opportunity (equal probabilities of false negative). In this work, our objective is to explain the accuracy-fairness trade-off in the observed space and attempt to find ideal distributions with respect to which there is no trade-off. We now provide a brief background on error exponents of a classifier to help follow the rest of the paper.

\textbf{Background on Error Exponents of a Classifier:}
The error exponents of the FPR and FNR are given by $-\log{P_{\mathrm{FP},T_z}(\tau_z)}$ and $-\log{P_{\mathrm{FN},T_z}(\tau_z)}$. Often, we may not be able to obtain a closed-form expression for the exact error probabilities or their exponents, but the exponents are approximated using a well-known lower bound called the \emph{Chernoff bound} (see \Cref{lem:chernoff_exponent}; proof in Appendix A.1), that is known to be pretty tight~(see Remark~\ref{rem:tightness} and also \cite{motwani1995randomized,berend2015finite}). 

\begin{defn}
The Chernoff exponents of $P_{\mathrm{FP},T_z}(\tau_z)$ and $P_{\mathrm{FN},T_z}(\tau_z)$ are defined as: 
$$
E_{\mathrm{FP},T_z}(\tau_z)  =  \sup_{u{>}0} (u\tau_z - \Lambda_0(u)), \text{ and } $$
$$ E_{\mathrm{FN},T_z}(\tau_z) = \sup_{u<0} (u\tau_z - \Lambda_1(u)). $$
Here, $\Lambda_0(u)$ and $\Lambda_1(u)$ are called log-generating functions, given by $\Lambda_0(u) =  \log{\mathbb{E}[e^{u T_z(X)}|Y=0,Z=z] }$ and $\Lambda_1(u) =  \log{\mathbb{E}[e^{u T_z(X)}|Y=1,Z=z] }. $ \label{defn:chernoff}
\end{defn}
\begin{lem}[Chernoff Bound] The exponents satisfy:
$P_{\mathrm{FP},T_z}(\tau_z) {\leq} e^{-E_{\mathrm{FP},T_z}(\tau_z)}$ and
$P_{\mathrm{FN},T_z}(\tau_z) {\leq} e^{-E_{\mathrm{FN},T_z}(\tau_z)}.$\label{lem:chernoff_exponent}
\end{lem}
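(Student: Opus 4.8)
The plan is to derive both bounds from Markov's inequality applied to the exponentiated statistic, which is the textbook Chernoff-bounding argument, being careful about the sign of $u$ in each case. I would present the two inequalities as two nearly symmetric sub-arguments.

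First, for the false positive rate, fix any $u > 0$. Since $x \mapsto e^{ux}$ is strictly increasing, the event $\{T_z(X) \geq \tau_z\}$ coincides with $\{e^{u T_z(X)} \geq e^{u\tau_z}\}$, so by Markov's inequality applied conditionally on $Y=0, Z=z$,
$$
P_{\mathrm{FP},T_z}(\tau_z) = \Pr\!\left(e^{u T_z(X)} \geq e^{u\tau_z} \,\middle|\, Y=0, Z=z\right) \leq e^{-u\tau_z}\,\mathbb{E}\!\left[e^{u T_z(X)} \,\middle|\, Y=0, Z=z\right] = e^{-(u\tau_z - \Lambda_0(u))},
$$
using the definition $\Lambda_0(u) = \log \mathbb{E}[e^{u T_z(X)} \mid Y=0, Z=z]$. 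Since this holds for every $u>0$, I take the infimum over $u>0$ of the right-hand side, i.e.\ the supremum over $u>0$ of $u\tau_z - \Lambda_0(u)$, which is exactly $E_{\mathrm{FP},T_z}(\tau_z)$ from \Cref{defn:chernoff}, yielding $P_{\mathrm{FP},T_z}(\tau_z) \leq e^{-E_{\mathrm{FP},T_z}(\tau_z)}$.

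For the false negative rate, the only twist is that we now need $u < 0$ so that multiplying by $u$ reverses the inequality in the tail event. Fixing $u < 0$, the event $\{T_z(X) < \tau_z\}$ is contained in $\{e^{u T_z(X)} \geq e^{u\tau_z}\}$ (strict becomes non-strict, which only helps), so Markov's inequality conditionally on $Y=1, Z=z$ gives
$$
P_{\mathrm{FN},T_z}(\tau_z) \leq \Pr\!\left(e^{u T_z(X)} \geq e^{u\tau_z} \,\middle|\, Y=1, Z=z\right) \leq e^{-u\tau_z}\,\mathbb{E}\!\left[e^{u T_z(X)} \,\middle|\, Y=1, Z=z\right] = e^{-(u\tau_z - \Lambda_1(u))}.
$$
Taking the supremum over $u<0$ of $u\tau_z - \Lambda_1(u)$, which is $E_{\mathrm{FN},T_z}(\tau_z)$, completes the bound. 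The argument needs no additional hypotheses beyond the log-generating functions being well-defined (finite) at the relevant $u$; assumptions (A1)–(A2) are not even required here, though they matter later for the Chernoff information to be finite and positive.

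The main obstacle, such as it is, is purely bookkeeping: getting the direction of the inequality right for $u<0$ and noting that the exponent is optimized over opposite half-lines for FPR versus FNR (mirroring the geometric picture in Fig.~\ref{fig:exponents} where $E_{\mathrm{FP},T_z}$ comes from tangents to $\Lambda_0$ and $E_{\mathrm{FN},T_z}$ from tangents to $\Lambda_1$). There is no analytic difficulty — the bound is just Markov applied after an exponential change of variable, followed by optimizing the free parameter.
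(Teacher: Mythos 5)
Your argument is exactly the paper's (Appendix A.1): apply Markov's inequality to $e^{uT_z(X)}$ conditionally on $(Y,Z)$, using $u>0$ for the FPR tail and $u<0$ for the FNR tail, then optimize over $u$. The extra care you take in noting that $\{T_z(X)<\tau_z\}\subseteq\{e^{uT_z(X)}\geq e^{u\tau_z}\}$ for $u<0$ is a welcome refinement that the paper glosses over but changes nothing substantive.
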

\begin{rem}[Tightness of the Chernoff Bound]
\label{rem:tightness}
For Gaussian distributions, the tail probabilities are characterized by the Q-function which has both upper and lower bounds in terms of Chernoff exponents with constant factors that do not affect the exponent significantly~\cite{cote2012chernoff}. The Bhattacharya bound (a special case of Chernoff bound) both upper and lower bounds the Bayes error exponent~\cite{berisha2015empirically,bhattacharyya1946measure,kailath1967divergence}. 
\end{rem}

\textbf{Geometric Interpretation of Chernoff Exponents:}  
Chernoff exponents yield more insight than exact error exponents because of their geometric interpretation, as we discuss here (more details in Appendix A.2). 

For ease of understanding, we refer to Fig.~\ref{fig:exponents} where we illustrate the idea of Chernoff exponents with a numerical example. In general, the log-generating functions are convex and become $0$ at $u=0$ (see Appendix A.2). Furthermore, if a detector is well-behaved\footnote{For a detector $T_z(x){\geq}\tau_z,$ we would expect $T_z(X)$ to be high when $Y{=}1$, and low when $Y{=}0$ justifying the criteria $\mathbb{E}[T_z(X)|Y{=}1,Z{=}z]{>}0$ and $ \mathbb{E}[T_z(X)|Y{=}0,Z{=}z]{<}0$ for being well-behaved. A likelihood ratio detector $T_0(x){=}\log{\frac{P_1(x)}{P_0(x)}}{\geq} \tau_0$ is well-behaved under assumption A2 in Section~\ref{sec:preliminaries} because we have $\mathbb{E}[T_z(X)|Y{=}1,Z{=}z]{=}D(P_1||P_0)$ and $\mathbb{E}[T_z(X)|Y{=}0,Z{=}z]{=}-D(P_0||P_1)$. }, i.e., $\mathbb{E}[T_z(X)|Y{=}1,Z{=}z]{>}0$ and $ \mathbb{E}[T_z(X)|Y{=}0,Z{=}z]{<}0$, then $\Lambda_0(u)$ and $\Lambda_1(u)$ are strictly convex and attain their minima on either sides of the origin. The Chernoff exponents $E_{\mathrm{FP},T_z}(\tau_z)$ and $E_{\mathrm{FN},T_z}(\tau_z)$ can be obtained as the negative of the y-intercepts for tangents to $\Lambda_0(u)$ and $\Lambda_1(u)$ with slope $\tau_z$ (for $\tau_z \in (\mathbb{E}[T_z(X)|Y{=}0,Z{=}z],\mathbb{E}[T_z(X)|Y{=}1,Z{=}z])$). 

\begin{defn} The Chernoff exponent of the overall probability of error ${P_{e,T_z}(\tau_z)}$ is defined as: 
\begin{equation}E_{e,T_z}(\tau_z)= \min \{  E_{\mathrm{FP},T_z}(\tau_z), E_{\mathrm{FN},T_z}(\tau_z) \}.\nonumber\end{equation} 
\end{defn}

Recall that, under equal priors, we have $P_{e,T_z}(\tau_z)=\frac{1}{2}P_{\mathrm{FP},T_z}(\tau_z)+\frac{1}{2}P_{\mathrm{FN},T_z}(\tau_z)$.  The exponent of $P_{e,T_z}(\tau_z)$ is dominated by the minimum of the error exponents of $P_{\mathrm{FP},T_z}(\tau_z)$ and $P_{\mathrm{FN},T_z}(\tau_z)$, which in turn is bounded by the minimum of the Chernoff exponents of FPR and FNR (\Cref{defn:chernoff}). A higher $E_{e,T_z}(\tau_z)$ indicates higher accuracy, i.e., lower $P_{e,T_z}(\tau_z)$. To understand this, first consider likelihood ratio detectors of the form $T_0(x)=\log{\frac{P_1(x)}{P_0(x)}}$ for $Z=0$. As we vary $\tau_0$, there is a trade-off between $P_{\mathrm{FP},T_0}(\tau_0)$ and $P_{\mathrm{FN},T_0}(\tau_0)$, i.e., as one increases, the other decreases. A similar trade-off is also observed in their Chernoff exponents (see \Cref{fig:exponents}). $P_{e,T_0}(\tau_0)$ is minimized when $\tau_0=0$ (for equal priors) and $P_{\mathrm{FP},T_0}(0){=}P_{\mathrm{FN},T_0}(0)$.  For this optimal value of $\tau_0=0$, the Chernoff exponents of FPR and FNR also become equal, i.e., $E_{\mathrm{FP},T_0}(0){=}E_{\mathrm{FN},T_0}(0)$, and the maximum value of $E_{e,T_0}(\tau_0){=}\min \{  E_{\mathrm{FP},T_0}(\tau_0), E_{\mathrm{FN},T_0}(\tau_0) \}$ is attained. This exponent is called the Chernoff information~\cite{cover2012elements}. For completeness, we include a well-known result on Chernoff information from \cite{cover2012elements} with the proof in Appendix A.4. 

\begin{lem}
\label{lem:separability}
For two hypotheses $P_0(x)$ under $Y=0$ and $P_1(x)$ under $Y=1$, the Chernoff exponent of the probability of error of the Bayes optimal classifier is given by the Chernoff information:\footnote{When $P_0(x)$ and $P_1(x)$ are continuous distributions, the summation is replaced by an integral over $x$ (see Appendix A.3).}  \begin{equation}\mathrm{C}(P_0,P_1)=- \min_{ u \in (0,1)} \log{\biggl(\sum_{x} P_0(x)^{1-u}P_1(x)^{u}\biggr)}. 
\end{equation} 
\end{lem}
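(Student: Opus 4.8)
The plan is to specialize the error-exponent machinery already developed in \Cref{defn:chernoff} and \Cref{lem:chernoff_exponent} to the Bayes optimal classifier, which under equal priors is precisely the likelihood ratio detector $T_0(x) = \log\frac{P_1(x)}{P_0(x)} \geq 0$ (by the NP Lemma invoked in \Cref{rem:population_setting}), and then to exploit the convexity of the relevant log-generating function. So the lemma's claim about ``the Bayes optimal classifier'' is really a statement about this particular $T_0$ with threshold $\tau_0=0$, and it suffices to show $E_{e,T_0}(0) = \mathrm{C}(P_0,P_1)$.

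First I would compute the two log-generating functions explicitly. Writing $\Lambda(u) := \log\sum_x P_0(x)^{1-u}P_1(x)^{u}$, substituting $T_0(X) = \log\frac{P_1(X)}{P_0(X)}$ into \Cref{defn:chernoff} gives $\Lambda_0(u) = \Lambda(u)$ and $\Lambda_1(u) = \Lambda(u+1)$ (consistent with the relation $\Lambda_1(u)=\Lambda_0(u+1)$ noted in Fig.~\ref{fig:exponents}). In particular $\Lambda(0) = \Lambda(1) = 0$ since $P_0$ and $P_1$ are probability distributions, and $\Lambda$ is convex as a cumulant generating function; assumption (A2) upgrades this to strict convexity (it rules out the degenerate case $\Lambda \equiv 0$), so $\Lambda$ has a unique minimizer $u^* \in (0,1)$ with $\Lambda(u) < 0$ on $(0,1)$ and $\Lambda(u) \geq 0$ on $(-\infty,0] \cup [1,\infty)$.

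Next I would evaluate the Chernoff exponents at the Bayes threshold $\tau_0 = 0$. From \Cref{defn:chernoff}, $E_{\mathrm{FP},T_0}(0) = \sup_{u>0}(-\Lambda(u)) = -\inf_{u>0}\Lambda(u)$ and $E_{\mathrm{FN},T_0}(0) = \sup_{u<0}(-\Lambda(u+1)) = -\inf_{v<1}\Lambda(v)$ after the substitution $v = u+1$. Using the sign/shape information about $\Lambda$ from the previous step, both unconstrained infima are attained at the interior minimizer $u^*$ and equal $\min_{u\in(0,1)}\Lambda(u)$. Hence $E_{\mathrm{FP},T_0}(0) = E_{\mathrm{FN},T_0}(0) = -\min_{u\in(0,1)}\log\sum_x P_0(x)^{1-u}P_1(x)^{u}$, so $E_{e,T_0}(0) = \min\{E_{\mathrm{FP},T_0}(0), E_{\mathrm{FN},T_0}(0)\}$ equals the same quantity, which is exactly $\mathrm{C}(P_0,P_1)$. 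The continuous case follows by replacing the sum with an integral throughout.

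I expect the main obstacle to be the shape argument for $\Lambda$: one must justify that (i) $\Lambda$ is finite on a neighborhood of $[0,1]$ so that the tangent/supremum characterizations are meaningful (this is where (A1) enters), (ii) strict convexity holds (from (A2)), and (iii) the unconstrained infima over $\{u>0\}$ and $\{v<1\}$ genuinely coincide with the constrained minimum over $(0,1)$, which relies on $\Lambda \geq 0$ outside $(0,1)$. As an alternative I would mention the classical route of \cite{cover2012elements} via the log-linear tilt $P_u \propto P_0^{1-u}P_1^{u}$ and Sanov's theorem, identifying the two error exponents of the threshold test with $\mathrm{D}(P_u\,\|\,P_0)$ and $\mathrm{D}(P_u\,\|\,P_1)$ and using that they agree at $u^*$ with common value $\mathrm{C}(P_0,P_1)$; but I prefer the log-generating-function argument above since it reuses the paper's own definitions directly.
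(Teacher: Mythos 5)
Your argument is correct and follows essentially the same route as the paper's Appendix A.4 proof: identify the Bayes classifier with the likelihood-ratio detector at threshold $0$, observe $\Lambda_1(u)=\Lambda_0(u+1)$, and use the shape of the strictly convex $\Lambda_0$ (zeros at $0$ and $1$, interior minimizer) to show both one-sided suprema equal $-\min_{u\in(0,1)}\Lambda_0(u)=\mathrm{C}(P_0,P_1)$. The paper just outsources the shape facts to its Property~4 rather than rederiving them inline, as you do.
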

\textbf{Goals:}
Our metrics of interest for \emph{accuracy} are  $E_{e,T_0}(\tau_0)$ and $E_{e,T_1}(\tau_1)$ because a higher value of the Chernoff exponent of $P_{e,T_z}(\tau_z)$ implies a higher accuracy for the respective groups $Z{=}0$ and $Z{=}1$. Our metric of interest for \emph{fairness} is the difference of the Chernoff exponents of FNR, i.e., $|E_{\mathrm{FN}, T_0}(\tau_0)- E_{\mathrm{FN},T_1}(\tau_1)|$ (inspired from equal opportunity). A model is \emph{fair} when $|E_{\mathrm{FN}, T_0}(\tau_0)- E_{\mathrm{FN},T_1}(\tau_1)|=0$, and progressively becomes more and more unfair as this quantity $|E_{\mathrm{FN}, T_0}(\tau_0)- E_{\mathrm{FN},T_1}(\tau_1)|$ increases.

Our first goal is to quantify fundamental limits on the best accuracy-fairness trade-off in terms of our metrics of interest on an existing real-world dataset, i.e., given observed distributions $P_0(x)$, $P_1(x)$, $Q_0(x)$, and $Q_1(x)$. Next, our goal is to find ideal distributions where fairness and accuracy are in accord when accuracy is measured with respect to the ideal distributions.

\section{Main Results}

\subsection{Concept of Separability: Fundamental Limits on Accuracy-Fairness Trade-Off in the Real World}
\label{subsec:limit}

Given the setup in Section~\ref{sec:preliminaries}, we show that the trade-off between accuracy and equal opportunity in the observed space is due to noisier mappings for the unprivileged group making their positive and negative labels less separable. Let us first formally define our intuitive notion of separability.

\begin{defn} For a group of people with distributions $P_0(x)$ and $P_1(x)$  under hypotheses $Y{=}0$ and $Y{=}1$, we define the separability as their Chernoff information $\mathrm{C}(P_0,P_1)$. \label{defn:separability}
\end{defn}

\Cref{defn:separability} is motivated from \Cref{lem:separability} because Chernoff information essentially provides an information-theoretic approximation to the best classification accuracy (in an exponent sense) for a group of people in a given dataset. Next, we define unbiased mappings from a separability standpoint.
\begin{defn}Consider the setup in \Cref{sec:preliminaries}. The mapping $X=f_{Y,Z}(X_a)$ from the construct space to the observed space is said to be unbiased  if $\mathrm{C}(P_0,P_1)=\mathrm{C}(Q_0,Q_1).$ \label{defn:unbiased_mappings}
\end{defn}
Our next result demonstrates that the trade-off between fairness and accuracy arises due to a bias in the mappings from a separability standpoint, i.e., $\mathrm{C}(P_0,P_1)\neq \mathrm{C}(Q_0,Q_1).$ Because we assumed that $Z=0$ is the unprivileged group, we let $\mathrm{C}(P_0,P_1)$ be either equal to, or less than $\mathrm{C}(Q_0,Q_1)$.

\begin{thm}[Explaining the Trade-Off]
\label{thm:separability} 
For the setup in \Cref{sec:preliminaries}, one of the following is true:
\begin{enumerate}[leftmargin=*,topsep=0pt, itemsep=0pt]
\item Unbiased Mappings, i.e., $\mathrm{C}(P_0,P_1){=} \mathrm{C}(Q_0,Q_1)$: The Bayes optimal detectors $T_0(x)\geq \tau_0$ and $T_1(x)\geq \tau_1$ for the two groups with Chernoff exponents of the probability of error $\mathrm{C}(Q_0,Q_1)(=\mathrm{C}(P_0,P_1))$ also attain fairness, i.e., $|E_{\mathrm{FN}, T_0}(\tau_0)- E_{\mathrm{FN},T_1}(\tau_1)|=0$. 
\item Biased Mappings, i.e., $\mathrm{C}(P_0,P_1)< \mathrm{C}(Q_0,Q_1)$: The Bayes optimal detectors $T_0(x)\geq \tau_0$ and $T_1(x)\geq \tau_1$ for the two groups are not fair, i.e., $|E_{\mathrm{FN}, T_0}(\tau_0)- E_{\mathrm{FN},T_1}(\tau_1)|\neq 0$. Furthermore, no likelihood ratio detector can improve the Chernoff exponent of the probability of error for the unprivileged group beyond $\mathrm{C}(P_0,P_1)$.
\end{enumerate}
\end{thm}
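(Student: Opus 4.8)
The plan is to split the argument along the two stated cases, exploiting the geometric picture of Chernoff exponents described in the Preliminaries (tangents to the log-generating functions $\Lambda_0$ and $\Lambda_1$, meeting at $\tau_z=0$).

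\textbf{Case 1 (Unbiased mappings).} Here I would invoke the symmetry built into the setup. Since $\pi_0=\pi_1=1/2$, the Bayes optimal detector for group $Z=z$ is the likelihood ratio test at threshold $\tau_z=0$, and by \Cref{lem:separability} its error exponent equals the Chernoff information of that group. The key observation is that at the Bayes optimal threshold the two Chernoff exponents of FPR and FNR coincide: $E_{\mathrm{FP},T_z}(0)=E_{\mathrm{FN},T_z}(0)=\mathrm{C}(P_0,P_1)$ for $z=0$ (and analogously $=\mathrm{C}(Q_0,Q_1)$ for $z=1$), as explained in the geometric discussion (the $0$-slope tangent). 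So $E_{\mathrm{FN},T_0}(\tau_0)=\mathrm{C}(P_0,P_1)$ and $E_{\mathrm{FN},T_1}(\tau_1)=\mathrm{C}(Q_0,Q_1)$, and the hypothesis $\mathrm{C}(P_0,P_1)=\mathrm{C}(Q_0,Q_1)$ immediately gives $|E_{\mathrm{FN},T_0}(\tau_0)-E_{\mathrm{FN},T_1}(\tau_1)|=0$. This case is essentially a bookkeeping exercise once the $\tau_z=0$ characterization is in hand.

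\textbf{Case 2 (Biased mappings, $\mathrm{C}(P_0,P_1)<\mathrm{C}(Q_0,Q_1)$).} The ``furthermore'' clause is the cleaner half: I would argue that for \emph{any} likelihood ratio detector $T_0(x)=\log(P_1(x)/P_0(x))\ge\tau_0$, the error exponent $E_{e,T_0}(\tau_0)=\min\{E_{\mathrm{FP},T_0}(\tau_0),E_{\mathrm{FN},T_0}(\tau_0)\}$ is maximized over $\tau_0$ at $\tau_0=0$, where it equals $\mathrm{C}(P_0,P_1)$ by \Cref{lem:separability}; this follows from the geometry, since moving the tangent slope away from $0$ raises one $y$-intercept but lowers the other, and the minimum is what governs $E_{e,T_0}$. (One also appeals to Remark~\ref{rem:population_setting}/the NP lemma to reduce arbitrary classifiers to likelihood ratio detectors.) Hence no likelihood ratio detector beats $\mathrm{C}(P_0,P_1)$ for the unprivileged group. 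For the unfairness claim, the Bayes optimal detectors again run at $\tau_z=0$, so $E_{\mathrm{FN},T_0}(0)=\mathrm{C}(P_0,P_1)$ and $E_{\mathrm{FN},T_1}(0)=\mathrm{C}(Q_0,Q_1)$, and strict inequality $\mathrm{C}(P_0,P_1)<\mathrm{C}(Q_0,Q_1)$ forces $|E_{\mathrm{FN},T_0}(\tau_0)-E_{\mathrm{FN},T_1}(\tau_1)|=\mathrm{C}(Q_0,Q_1)-\mathrm{C}(P_0,P_1)\neq 0$.

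\textbf{Main obstacle.} The delicate point is justifying that at the Bayes optimal threshold $\tau_z=0$ one has exactly $E_{\mathrm{FP},T_z}(0)=E_{\mathrm{FN},T_z}(0)=\mathrm{C}(P_0,P_1)$, and that this is the maximizer of $\min\{E_{\mathrm{FP}},E_{\mathrm{FN}}\}$. This requires the strict convexity of $\Lambda_0,\Lambda_1$ (guaranteed by assumption \textbf{A2}, which makes the likelihood ratio detector ``well-behaved'' with $\mathbb{E}[T_z(X)|Y=0]=-D(P_0\|P_1)<0<D(P_1\|P_0)=\mathbb{E}[T_z(X)|Y=1]$), the relation $\Lambda_1(u)=\Lambda_0(u+1)$ (so the $0$-slope tangents to $\Lambda_0$ and $\Lambda_1$ have the same negative $y$-intercept, namely $\mathrm{C}(P_0,P_1)$), and a monotonicity argument showing $E_{\mathrm{FP},T_z}(\tau_z)$ is decreasing and $E_{\mathrm{FN},T_z}(\tau_z)$ increasing in $\tau_z$ on the relevant interval, so their minimum peaks where they cross at $\tau_z=0$. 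I would state these as consequences of the geometric discussion in the Preliminaries (and Appendix~A.2/A.4) rather than re-deriving them, keeping the proof of the theorem itself short.
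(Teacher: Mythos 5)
Your proposal takes essentially the same route as the paper: Case~1 and the unfairness half of Case~2 are read off from \Cref{lem:separability} at $\tau_z=0$, and the ``no improvement'' half is the content of the paper's auxiliary \Cref{lem:likelihood} (that for $\tau_0\neq 0$ exactly one of $E_{\mathrm{FP},T_0}(\tau_0),E_{\mathrm{FN},T_0}(\tau_0)$ drops strictly below $\mathrm{C}(P_0,P_1)$ while the other rises above it), which you correctly identify as the ``delicate point'' and sketch via the tangent picture. One small slip: the monotonicity directions are reversed --- since $P_{\mathrm{FP},T_z}(\tau_z)=\Pr(T_z(X)\geq\tau_z\mid Y=0)$ decreases in $\tau_z$ and $P_{\mathrm{FN},T_z}(\tau_z)=\Pr(T_z(X)<\tau_z\mid Y=1)$ increases, it is $E_{\mathrm{FP},T_z}(\tau_z)$ that is \emph{increasing} and $E_{\mathrm{FN},T_z}(\tau_z)$ that is \emph{decreasing} in $\tau_z$ --- but this does not change the conclusion that the minimum of the two peaks at their crossing $\tau_z=0$.
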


The first scenario is where the mappings are unbiased from a separability standpoint, and there is no trade-off between accuracy and fairness. The second scenario, which occurs more commonly in practice, is where discrimination is caused due to an inherent limitation of the dataset: the mappings from the construct space are biased and do not have enough separability information about one group compared to the other. For the rest of the paper, we will focus on the case of $\mathrm{C}(P_0,P_1)<\mathrm{C}(Q_0,Q_1)$. Under this scenario, the Chernoff exponents of FNR of the Bayes optimal detectors for the two groups are $\mathrm{C}(P_0,P_1)$ and $\mathrm{C}(Q_0,Q_1)$ which are unequal, and hence \emph{unfair}. An attempt to ensure fairness by using any alternate likelihood ratio detector for any of the groups will therefore only reduce accuracy (Chernoff exponent of the probability of error) for that group below the Bayes optimal (best) classifier for that group, explaining the accuracy-fairness trade-off. We formalize this intuition in Lemma~\ref{lem:trade-off} (used in proof of \Cref{thm:separability}; see Appendix B).
\begin{lem} 
\label{lem:trade-off}Let $\mathrm{C}(P_0,P_1){<}\mathrm{C}(Q_0,Q_1)$. 
Suppose that there are two likelihood ratio detectors $T_0(x){\geq} \tau_0$ and $T_1(x){\geq} \tau_1$, one for each group, such that $E_{\mathrm{FN},T_0}(\tau_0){=} E_{\mathrm{FN},T_1}(\tau_1).$ Then, at least one of the following statements is true:\\ (i) $E_{e,T_0}(\tau_0) < \mathrm{C}(P_0,P_1)$, or (ii) $E_{e,T_1}(\tau_1) < \mathrm{C}(Q_0,Q_1)$. 
\end{lem}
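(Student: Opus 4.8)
The plan is to reduce the lemma to two structural facts about the two groups' likelihood ratio detectors and then close by contradiction. Take $T_0(x)=\log\frac{P_1(x)}{P_0(x)}$ and $T_1(x)=\log\frac{Q_1(x)}{Q_0(x)}$ with free thresholds $\tau_0,\tau_1$ (every likelihood ratio test has this form after taking logs). The facts, for each group $z$, are: (a) $\tau_z\mapsto E_{\mathrm{FN},T_z}(\tau_z)$ is strictly decreasing and $\tau_z\mapsto E_{\mathrm{FP},T_z}(\tau_z)$ is strictly increasing over the relevant range of thresholds; and (b) the two exponent curves cross precisely at the Bayes-optimal threshold $\tau_z=0$, where by \Cref{lem:separability} their common value is the Chernoff information of the group's distributions ($\mathrm{C}(P_0,P_1)$ for $z=0$ and $\mathrm{C}(Q_0,Q_1)$ for $z=1$). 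Both facts are the convexity/Legendre-transform content already sketched around \Cref{fig:exponents}: under A2 the log-generating functions $\Lambda_0,\Lambda_1$ are strictly convex, vanish at $u=0$, and (since $\mathbb{E}[T_z(X)\mid Y=0,Z=z]=-D(P_0\|P_1)<0$ and $\mathbb{E}[T_z(X)\mid Y=1,Z=z]=D(P_1\|P_0)>0$, and similarly for group $1$) have minima strictly on opposite sides of the origin, so the tangent-line constructions defining $E_{\mathrm{FP},T_z}$ and $E_{\mathrm{FN},T_z}$ move monotonically with the slope $\tau_z$ and agree exactly at $\tau_z=0$. From facts (a) and (b) one extracts the two consequences needed below: $E_{e,T_z}(\tau_z)=\min\{E_{\mathrm{FP},T_z}(\tau_z),E_{\mathrm{FN},T_z}(\tau_z)\}$ is always at most the group's Chernoff information, with equality if and only if $\tau_z=0$; and $E_{\mathrm{FN},T_z}(\tau_z)$ equals the group's Chernoff information if and only if $\tau_z=0$.

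With these in hand, suppose toward a contradiction that both (i) and (ii) fail, i.e., $E_{e,T_0}(\tau_0)\ge\mathrm{C}(P_0,P_1)$ and $E_{e,T_1}(\tau_1)\ge\mathrm{C}(Q_0,Q_1)$. Since $E_{e,T_z}(\tau_z)$ is at most the group's Chernoff information, these are equalities, which by the characterization force $\tau_0=\tau_1=0$ and hence $E_{\mathrm{FN},T_0}(\tau_0)=\mathrm{C}(P_0,P_1)$ and $E_{\mathrm{FN},T_1}(\tau_1)=\mathrm{C}(Q_0,Q_1)$. But $\mathrm{C}(P_0,P_1)<\mathrm{C}(Q_0,Q_1)$ by hypothesis, so $E_{\mathrm{FN},T_0}(\tau_0)\ne E_{\mathrm{FN},T_1}(\tau_1)$, contradicting the assumption $E_{\mathrm{FN},T_0}(\tau_0)=E_{\mathrm{FN},T_1}(\tau_1)$. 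Hence at least one of (i), (ii) holds. (One can also present this constructively: if $E_{\mathrm{FN},T_0}(\tau_0)<\mathrm{C}(P_0,P_1)$ then $E_{e,T_0}(\tau_0)\le E_{\mathrm{FN},T_0}(\tau_0)<\mathrm{C}(P_0,P_1)$, giving (i); otherwise $\tau_0\le 0$ by fact (a), and if $\tau_0<0$ then $E_{e,T_0}(\tau_0)=E_{\mathrm{FP},T_0}(\tau_0)<\mathrm{C}(P_0,P_1)$ again gives (i); the leftover case $\tau_0=0$ makes $E_{\mathrm{FN},T_1}(\tau_1)=\mathrm{C}(P_0,P_1)<\mathrm{C}(Q_0,Q_1)=E_{\mathrm{FN},T_1}(0)$, so $\tau_1>0$ and $E_{e,T_1}(\tau_1)=E_{\mathrm{FN},T_1}(\tau_1)<\mathrm{C}(Q_0,Q_1)$ gives (ii).)

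The main obstacle I anticipate is purely the first step: pinning down, cleanly and rigorously, the strict monotonicity of the FPR/FNR exponent curves in the threshold and the fact that they have a single crossing at $\tau_z=0$ with common value the Chernoff information. The preliminaries supply all the ingredients (strict convexity of $\Lambda_0,\Lambda_1$ under A2, their value $0$ at the origin, minima on opposite sides of $0$, and \Cref{lem:separability} for the crossing value), but turning the tangent-line picture into a precise monotonicity statement --- e.g., via the fact that on the interval $\tau_z\in(-D(P_0\|P_1),D(P_1\|P_0))$ the suprema defining the exponents are attained at unique interior stationary points that depend continuously and monotonically on $\tau_z$ --- is the part that needs to be written out with care. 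After that, the contradiction (or constructive) argument is essentially a one-liner.
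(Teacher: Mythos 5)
Your proof is correct and follows essentially the same route as the paper's: both hinge on the Legendre-transform/tangent-line facts that the FPR and FNR Chernoff exponents bracket the group's Chernoff information strictly when $\tau_z\neq 0$ and coincide with it at $\tau_z=0$ (the content of the paper's \Cref{lem:separability} and \Cref{lem:likelihood}), and then conclude in a line. The only cosmetic difference is that you argue by contradiction from "both (i) and (ii) fail," whereas the paper notes that at most one of the two FNR exponents can equal its group's Chernoff information and then applies \Cref{lem:likelihood} directly to the other group.
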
 
The next two results show how current and reasonable approaches to fair classification can give rise to each of the two cases in Lemma~\ref{lem:trade-off}. 
\begin{figure*}
\centering
\includegraphics[height=3.7cm]{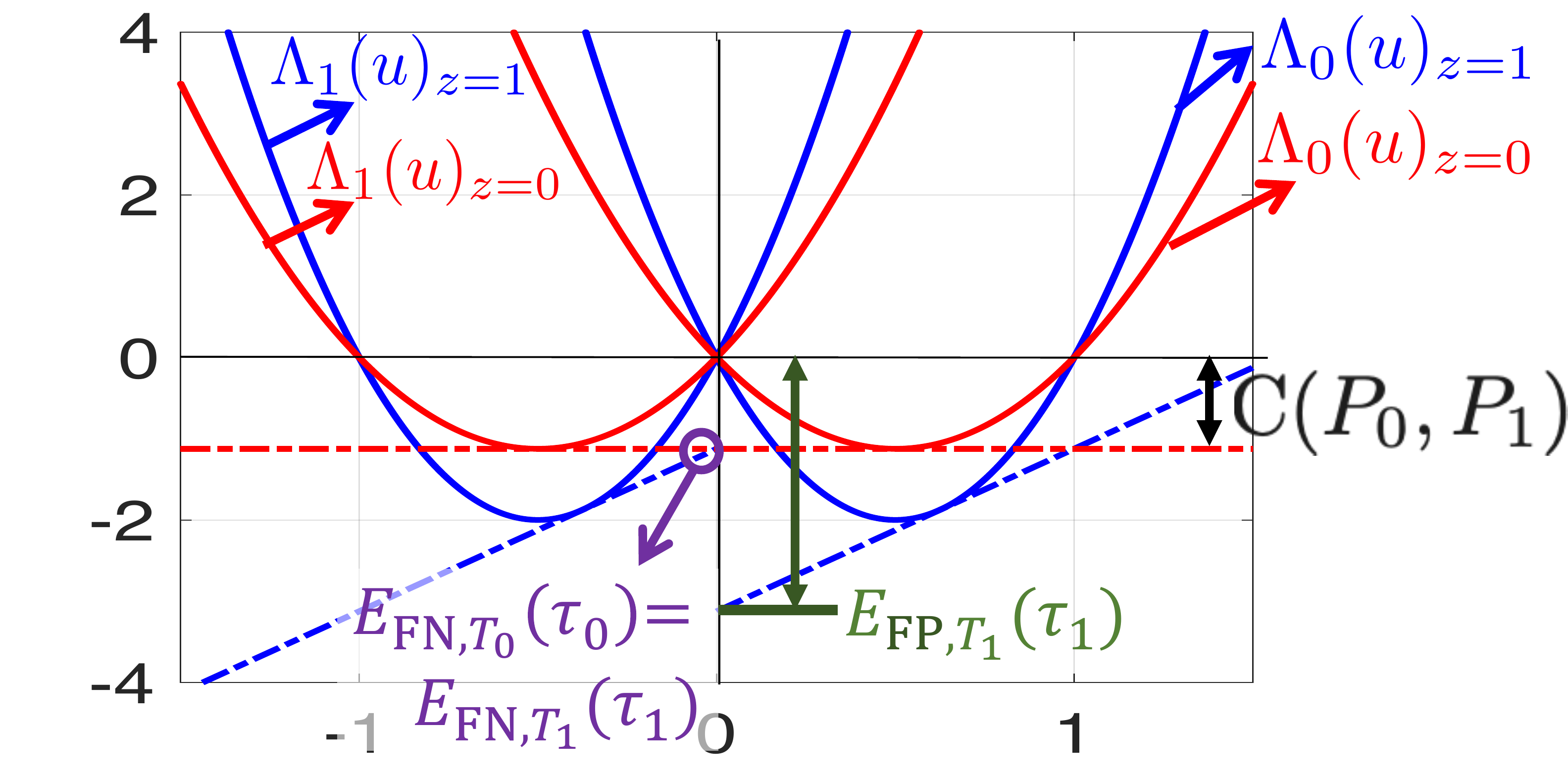}
\includegraphics[height=3.7cm]{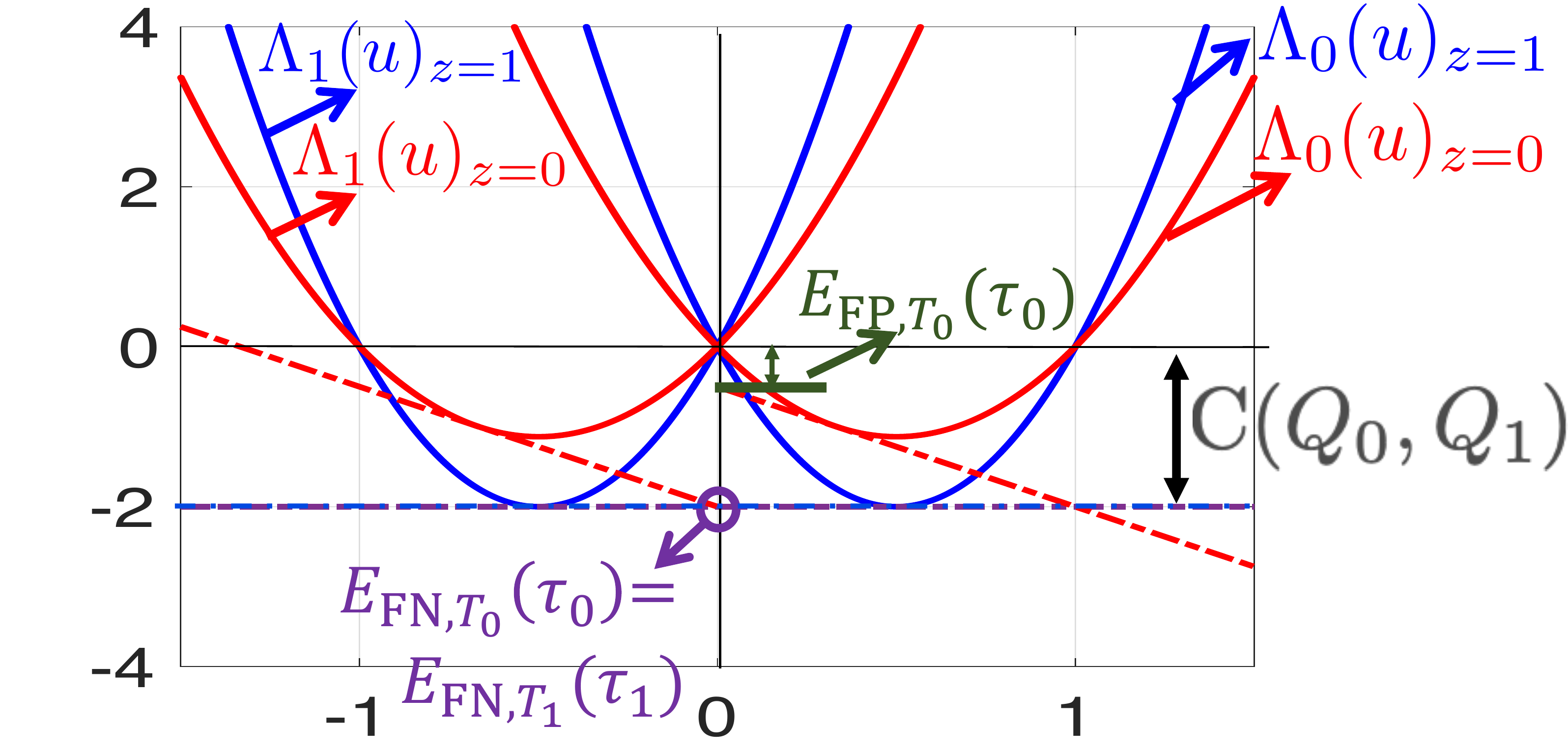}
\caption{Let the distributions for the unprivileged group ($Z=0$) be $P_0(x){\sim}\mathcal{N}(1,1)$ and $P_1(x){\sim} \mathcal{N}(4,1)$. Also, let the distributions of the privileged group be $Q_0(x){\sim}\mathcal{N}(0,1)$ and $Q_1(x){\sim} \mathcal{N}(4,1)$. In both the figures, the red and blue curves denote the log-generating functions for the likelihood ratio detectors for the groups $Z=0$ and $Z=1$ respectively (see Appendix A.3 for derivation). We have $\Lambda_0(u)_{z=1}=8u(u-1)$ and $\Lambda_1(u)_{z=1}=8u(u+1)$. Also, $\Lambda_0(u)_{z=0}=\frac{9}{2}u(u-1),$ and $\Lambda_1(u)_{z=0}=\frac{9}{2}u(u+1)$. Note that,  $\mathrm{C}(P_0,P_1){<}\mathrm{C}(Q_0,Q_1)$. (\textbf{Left}) This plot corresponds to the scenario of \Cref{lem:both_group}. The detector for the group $Z=0$ is the Bayes optimal detector with $\tau^*_0=0$ and  $E_{\mathrm{FN},T_0}(\tau^*_0)=E_{\mathrm{FP},T_0}(\tau^*_0)=C(P_0,P_1).$ The detector for the group $Z=1$ is a sub-optimal detector because in order to satisfy equal opportunity, we have to choose $\tau^*_1$ such that $E_{\mathrm{FN},T_1}(\tau^*_1) = E_{\mathrm{FN},T_0}(\tau^*_0)= C(P_0,P_1)$ and this is strictly less than $C(Q_0,Q_1)$. (\textbf{Right}) This plot corresponds to the scenario of \Cref{lem:one_group}. The detector for the group $Z=1$ is the Bayes optimal detector with $\tau^*_1=0$ and  $E_{\mathrm{FN},T_1}(\tau^*_1)=E_{\mathrm{FP},T_1}(\tau^*_1)=C(Q_0,Q_1).$ In order to satisfy equal opportunity, we have to choose $\tau^*_0$ such that $E_{\mathrm{FN},T_0}(\tau^*_0) = E_{\mathrm{FN},T_1}(\tau^*_1)= C(Q_0,Q_1)$ which is strictly greater that $C(P_0,P_1)$. However, this threshold $\tau^*_0$ makes $E_{\mathrm{FP},T_0}(\tau^*_0)$ lower that $C(P_0,P_1)$, leading to a sub-optimal detector for the group $Z=0$.  \label{fig:trade-off} }
\end{figure*}
Consider the following optimization problem, where the goal is to find classifiers of the form $T_0(x)\geq\tau_0$ and $T_1(x)\geq\tau_1$ for the two groups that maximize the Chernoff exponent of the probability of error under the constraint that they are \emph{fair} on the given dataset. 
\begin{align}
\begin{split}
&\max_{T_0,\tau_0,T_1,\tau_1}  \min\left\{ E_{\mathrm{FP},T_0}(\tau_0),E_{\mathrm{FN},T_0}(\tau_0),\right.\\ &\hspace{2.3cm} \left. E_{\mathrm{FP},T_1}(\tau_1),E_{\mathrm{FN},T_1}(\tau_1)\right\}
\end{split}\nonumber \\
&\text{such that } E_{\mathrm{FN},T_0}(\tau_0)=E_{\mathrm{FN},T_1}(\tau_1).
\label{opt:both_group}
\end{align} 
This optimization is in the spirit of existing works \cite{zafar2017fairness,agarwal2018reductions,donini2018empirical,celis2019} that maximize accuracy under fairness constraints. From the NP Lemma, we know that given any classifier, there exists a likelihood ratio detector which is at least as good in terms of accuracy. If we restrict $T_0(x)$ and $T_1(x)$ to be likelihood ratio detectors of the form $\log{\frac{P_1(x)}{P_0(x)}}$ and $\log{\frac{Q_1(x)}{Q_0(x)}}$, then \eqref{opt:both_group} has a unique solution $(\tau_0^*,\tau_1^*)$. 

\begin{lem} Let $\mathrm{C}(P_0,P_1){<}\mathrm{C}(Q_0,Q_1)$ and $T_0(x)$ and $T_1(x)$ be restricted to be likelihood ratio detectors. Then the detectors $T_0(x)\geq\tau^*_0$ and $T_1(x)\geq\tau^*_1$ that solve the optimization \eqref{opt:both_group}  
are the Bayes optimal detector for the unprivileged group ($\tau_0^*=0$) and a sub-optimal detector for the privileged group ($\tau_1^*>0$) with $E_{e,T_1}(\tau^*_1) < \mathrm{C}(Q_0,Q_1)$. \label{lem:both_group}
\end{lem}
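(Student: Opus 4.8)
The plan is to reduce the two-variable optimization to a one-variable problem via the equality constraint, and then to argue about each group separately using the geometric picture of Chernoff exponents from the preliminaries. Write $a(\tau_0)=E_{\mathrm{FP},T_0}(\tau_0)$, $b(\tau_0)=E_{\mathrm{FN},T_0}(\tau_0)$, $c(\tau_1)=E_{\mathrm{FP},T_1}(\tau_1)$, $d(\tau_1)=E_{\mathrm{FN},T_1}(\tau_1)$, where $T_0(x)=\log\frac{P_1(x)}{P_0(x)}$ and $T_1(x)=\log\frac{Q_1(x)}{Q_0(x)}$. Since these are likelihood ratio detectors, assumption A2 makes them well-behaved, so by the geometric interpretation (tangents to the strictly convex log-generating functions $\Lambda_0,\Lambda_1$) each of $a,c$ is continuous and strictly increasing and each of $b,d$ is continuous and strictly decreasing on its valid threshold interval $(\mathbb{E}[T_z(X)|Y{=}0],\mathbb{E}[T_z(X)|Y{=}1])$, with $a(0)=b(0)=\mathrm{C}(P_0,P_1)$, $c(0)=d(0)=\mathrm{C}(Q_0,Q_1)$, and $b(\tau_0)\to 0$ (resp.\ $d(\tau_1)\to 0$) as the threshold approaches the top of its interval. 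In particular $\min\{a(\tau_0),b(\tau_0)\}$ is uniquely maximized at $\tau_0=0$ with value $\mathrm{C}(P_0,P_1)$, and likewise for group $1$.

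First I would bound the objective from above. Using the constraint $b(\tau_0)=d(\tau_1)$, the objective equals $\min\{a(\tau_0),b(\tau_0),c(\tau_1)\}\le\min\{a(\tau_0),b(\tau_0)\}\le \mathrm{C}(P_0,P_1)$, and the last inequality is an equality only at $\tau_0=0$ (by the uniqueness above; recall $\mathrm{C}(P_0,P_1)>0$ by A1--A2). Hence the optimal value is at most $\mathrm{C}(P_0,P_1)$ and can be attained only at $\tau_0^*=0$, which is precisely the Bayes optimal detector for the unprivileged group.

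Next I would show the bound is attained, which simultaneously pins down $\tau_1^*$. Fix $\tau_0=0$; the constraint becomes $d(\tau_1)=b(0)=\mathrm{C}(P_0,P_1)$. Since $d(0)=\mathrm{C}(Q_0,Q_1)>\mathrm{C}(P_0,P_1)$ while $d(\tau_1)\to 0<\mathrm{C}(P_0,P_1)$ at the top of its interval, continuity and strict monotonicity of $d$ yield a unique $\tau_1^*>0$ with $d(\tau_1^*)=\mathrm{C}(P_0,P_1)$, lying strictly inside $(\mathbb{E}[T_1(X)|Y{=}0],\mathbb{E}[T_1(X)|Y{=}1])$. At this threshold, strict monotonicity of $c$ gives $c(\tau_1^*)>c(0)=\mathrm{C}(Q_0,Q_1)>\mathrm{C}(P_0,P_1)$, so the objective at $(0,\tau_1^*)$ equals $\min\{\mathrm{C}(P_0,P_1),\mathrm{C}(P_0,P_1),c(\tau_1^*)\}=\mathrm{C}(P_0,P_1)$, matching the upper bound. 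Therefore $(\tau_0^*,\tau_1^*)=(0,\tau_1^*)$ with $\tau_1^*>0$ is the unique optimizer, and $E_{e,T_1}(\tau_1^*)=\min\{c(\tau_1^*),d(\tau_1^*)\}=\mathrm{C}(P_0,P_1)<\mathrm{C}(Q_0,Q_1)$, i.e., the detector for the privileged group is strictly sub-optimal.

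The step I expect to require the most care is the first paragraph: rigorously deriving the continuity, strict monotonicity, boundary limits, and value-at-$0$ of $a,b,c,d$ directly from the definition of the Chernoff exponents as Legendre-type transforms of $\Lambda_0,\Lambda_1$. This relies on strict convexity of the log-generating functions (via the well-behavedness footnote under A2), on the fact that the maximizing $u$ in $E_{\mathrm{FP},T_z}$ is the slope-$\tau_z$ tangent point (positive for $\tau_z>\mathbb{E}[T_z(X)|Y{=}0]$, equal to $1$ at $\tau_z=0$) with the analogous statement for $E_{\mathrm{FN},T_z}$ with $u<0$, and on $\mathrm{C}(P_0,P_1)$ being finite and positive (A1--A2), which is what lets the intermediate-value argument place $\tau_1^*$ strictly inside the admissible interval. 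Once these monotonicity facts are established, the remaining comparison argument is short.
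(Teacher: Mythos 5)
Your proof is correct and takes essentially the same route as the paper's: reduce via the constraint, bound the objective above by $\mathrm{C}(P_0,P_1)$ with equality only at $\tau_0=0$ (the paper reaches this via its Lemma~\ref{lem:likelihood}, you via strict monotonicity of $a,b$), then use an intermediate-value argument on the privileged group's FNR exponent to produce $\tau_1^*>0$ (the paper parameterizes by the tangent point $u$ and works with $g(u)=u\Lambda_1'(u)-\Lambda_1(u)$; you parameterize by $\tau_1$ directly, which is an equivalent reparameterization through $\tau_1=\Lambda_1'(u)$). One small slip in your aside: at $\tau_z=0$ the maximizing $u$ in $E_{\mathrm{FP}}$ is the interior argmin of $\Lambda_0$, which lies strictly inside $(0,1)$, not $u=1$; this does not affect the argument since you only use $a(0)=\mathrm{C}(P_0,P_1)$.
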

As a proof sketch, we refer to \Cref{fig:trade-off} (Left). Let $\tau_0^*=0$, which ensures $E_{\mathrm{FN},T_0}(0)=E_{\mathrm{FP},T_0}(0)=\mathrm{C}(P_0,P_1)$. Now, the only value of slope $\tau_1^*$ that will satisfy $E_{\mathrm{FN},T_1}(\tau_1^*){=}E_{\mathrm{FN},T_0}(0)$ is a $\tau_1^*{>}0$ such that $E_{\mathrm{FN},T_1}(\tau_1^*){=}\mathrm{C}(P_0,P_1){<} \mathrm{C}(Q_0,Q_1),$ and hence $E_{\mathrm{FP},T_1}(\tau_1^*){>} \mathrm{C}(Q_0,Q_1).$ This leads to, $$\min\{ E_{\mathrm{FP},T_0}(0),E_{\mathrm{FN},T_0}(0), E_{\mathrm{FP},T_1}(\tau^*_1),E_{\mathrm{FN},T_1}(\tau^*_1)\}= \mathrm{C}(P_0,P_1).$$ 

For $\tau_0^*{\neq}0$, either $E_{\mathrm{FP},T_0}(\tau_0^*){<} \mathrm{C}(P_0,P_1){<} E_{\mathrm{FN},T_0}(\tau_0^*),$ or $E_{\mathrm{FN},T_0}(\tau_0^*){<} \mathrm{C}(P_0,P_1){<} E_{\mathrm{FP},T_0}(\tau_0^*),$ implying that, $$\min\{ E_{\mathrm{FP},T_0}(\tau_0^*),E_{\mathrm{FN},T_0}(\tau_0^*), E_{\mathrm{FP},T_1}(\tau^*_1),E_{\mathrm{FN},T_1}(\tau^*_1)\}< \mathrm{C}(P_0,P_1).$$

This situation of reducing the accuracy of the privileged group is often interpreted as causing \emph{active harm} to the privileged group. To avoid causing active harm while satisfying a fairness criterion, we may also consider a variant where we do not alter the optimal detector (or accuracy) of the privileged group (i.e., $E_{\mathrm{FN},T_1}(\tau_1)=E_{\mathrm{FP},T_1}(\tau_1)=\mathrm{C}(Q_0,Q_1)$ for the privileged group), but only vary the detector for the unprivileged group to achieve fairness. We propose the following optimization: 
\begin{align}
&\max_{T_0,\tau_0}  \min\{ E_{\mathrm{FP},T_0}(\tau_0),E_{\mathrm{FN},T_0}(\tau_0) \} \nonumber \\
&\text{such that }  E_{\mathrm{FN},T_0}(\tau_0)=\mathrm{C}(Q_0,Q_1).\label{opt:one_group}
\end{align}
Again, if we restrict $T_0(x)$ to be a likelihood ratio detector, then there exists a unique solution $\tau_0^*$ to optimization \eqref{opt:one_group}.

\begin{lem} Let ${T_0(x)}=\log{\frac{P_1(x)}{P_0(x)}}$ and we have ${\mathrm{C}(P_0,P_1)}<{\mathrm{C}(Q_0,Q_1)}$. The detector $T_0(x)\geq\tau^*_0$ that solves optimization \eqref{opt:one_group} 
is a sub-optimal detector for the unprivileged group with $E_{e,T_0}(\tau^*_0)<\mathrm{C}(P_0,P_1)$. \label{lem:one_group} 
\end{lem}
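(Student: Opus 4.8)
Since the lemma fixes $T_0(x)=\log\frac{P_1(x)}{P_0(x)}$, the only free quantity in \eqref{opt:one_group} is the threshold $\tau_0$, and because $E_{\mathrm{FN},T_0}(\cdot)$ is monotone the equality constraint alone pins down a unique $\tau_0^*$ (this is exactly the assertion made just before the lemma; it uses only monotonicity of $E_{\mathrm{FN},T_0}(\cdot)$ and that $\mathrm{C}(Q_0,Q_1)$ lies in its range, which is automatic when the likelihood-ratio statistic is unbounded, as in the Gaussian setting of \Cref{fig:trade-off}). The real content is therefore to locate $\tau_0^*$ and evaluate $E_{e,T_0}(\tau_0^*)$. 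I would start by recording the behaviour of the two scalar maps $\tau_0\mapsto E_{\mathrm{FN},T_0}(\tau_0)=\sup_{u<0}(u\tau_0-\Lambda_1(u))$ and $\tau_0\mapsto E_{\mathrm{FP},T_0}(\tau_0)=\sup_{u>0}(u\tau_0-\Lambda_0(u))$. Under A1--A2 the log-generating functions of the likelihood-ratio statistic are strictly convex, with $\Lambda_0(0)=\Lambda_1(0)=0$, $\Lambda_0'(0)=\mathbb{E}[T_0(X)\mid Y{=}0]=-\mathrm{D}(P_0\|P_1)<0$ and $\Lambda_1'(0)=\mathbb{E}[T_0(X)\mid Y{=}1]=\mathrm{D}(P_1\|P_0)>0$ (Appendix A.2--A.3). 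From this I would extract: (i) $E_{\mathrm{FN},T_0}(\cdot)$ is continuous and strictly decreasing wherever it is positive, with $E_{\mathrm{FN},T_0}(0)=\mathrm{C}(P_0,P_1)$; and (ii) $E_{\mathrm{FP},T_0}(\cdot)$ is continuous, equals $0$ on $(-\infty,-\mathrm{D}(P_0\|P_1)]$, is strictly increasing on $(-\mathrm{D}(P_0\|P_1),\infty)$, and $E_{\mathrm{FP},T_0}(0)=\mathrm{C}(P_0,P_1)$. Each map is a supremum of affine functions of $\tau_0$ whose slopes $u$ have a fixed sign (giving monotonicity), strictness comes from strict convexity of $\Lambda_0,\Lambda_1$, and the values at $\tau_0=0$ are \Cref{lem:separability} together with the tangent-line picture of \Cref{fig:exponents}.

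\textbf{Location of $\tau_0^*$.} By A2 applied to the privileged group, $Q_0\neq Q_1$, so $\mathrm{C}(Q_0,Q_1)>0$, and with (i) this makes $\tau_0^*$ well defined. Since by hypothesis $\mathrm{C}(Q_0,Q_1)>\mathrm{C}(P_0,P_1)=E_{\mathrm{FN},T_0}(0)$ while $E_{\mathrm{FN},T_0}(\cdot)$ is strictly decreasing, necessarily $\tau_0^*<0$.

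\textbf{Value of the objective.} Now I would plug $\tau_0^*$ into $E_{\mathrm{FP},T_0}$. If $\tau_0^*\le-\mathrm{D}(P_0\|P_1)$ then $E_{\mathrm{FP},T_0}(\tau_0^*)=0$ by (ii), which is $<\mathrm{C}(P_0,P_1)$ (positive by A1--A2); if $-\mathrm{D}(P_0\|P_1)<\tau_0^*<0$, strict increase of $E_{\mathrm{FP},T_0}$ on that interval gives $E_{\mathrm{FP},T_0}(\tau_0^*)<E_{\mathrm{FP},T_0}(0)=\mathrm{C}(P_0,P_1)$. In either case
\[ E_{\mathrm{FP},T_0}(\tau_0^*)<\mathrm{C}(P_0,P_1)<\mathrm{C}(Q_0,Q_1)=E_{\mathrm{FN},T_0}(\tau_0^*), \]
so $E_{e,T_0}(\tau_0^*)=\min\{E_{\mathrm{FP},T_0}(\tau_0^*),E_{\mathrm{FN},T_0}(\tau_0^*)\}=E_{\mathrm{FP},T_0}(\tau_0^*)<\mathrm{C}(P_0,P_1)$. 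Since $\mathrm{C}(P_0,P_1)=E_{e,T_0}(0)$ is the Bayes-optimal exponent for the unprivileged group (\Cref{lem:separability}), this is exactly the claimed strict sub-optimality. The argument reproduces the right-hand panel of \Cref{fig:trade-off} and mirrors the reasoning behind \Cref{lem:both_group}.

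\textbf{Expected main obstacle.} Essentially everything hinges on step~1: verifying that $\tau_0\mapsto E_{\mathrm{FN},T_0}(\tau_0)$ and $\tau_0\mapsto E_{\mathrm{FP},T_0}(\tau_0)$ are continuous and \emph{strictly} monotone on the relevant ranges, with the right values at $\tau_0=0$. This is where strict convexity of the log-generating functions (hence assumptions A1--A2, ensuring the optimizing exponent $u^*(\tau_0)$ is a genuinely nonzero number rather than the degenerate $u=0$) must be invoked with care, and where the boundary regime $\tau_0^*\le-\mathrm{D}(P_0\|P_1)$ — in which $E_{\mathrm{FP},T_0}(\tau_0^*)$ collapses to $0$ — has to be handled explicitly, though it only strengthens the conclusion. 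Once step~1 is nailed down, the remainder is the short chain of inequalities above.
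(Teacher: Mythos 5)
Your proof is correct, and it takes a genuinely different route from the paper. The paper's proof of \Cref{lem:one_group} works in $u$-space: it introduces the auxiliary function $g(u)=u\,\tfrac{d\Lambda_1}{du}-\Lambda_1(u)$, notes $g(u_1^*)=\mathrm{C}(P_0,P_1)$ at the minimizer $u_1^*\in(-1,0)$ of $\Lambda_1$ and $g(u)\to\infty$ as $u\to-\infty$, applies the intermediate value theorem to produce a $u_a<u_1^*$ with $g(u_a)=\mathrm{C}(Q_0,Q_1)$, sets $\tau_0^*=\Lambda_1'(u_a)<0$, and finally invokes \Cref{lem:likelihood} (the dichotomy $E_{\mathrm{FN},T_0}<\mathrm{C}(P_0,P_1)<E_{\mathrm{FP},T_0}$ or vice versa whenever $\tau_0\neq 0$) to conclude. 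You instead work directly in $\tau$-space: you prove strict monotonicity and continuity of $\tau_0\mapsto E_{\mathrm{FN},T_0}(\tau_0)$ (decreasing) and $\tau_0\mapsto E_{\mathrm{FP},T_0}(\tau_0)$ (increasing, with a zero plateau for $\tau_0\le-\mathrm{D}(P_0\|P_1)$), then use only the anchor values $E_{\mathrm{FN},T_0}(0)=E_{\mathrm{FP},T_0}(0)=\mathrm{C}(P_0,P_1)$ from \Cref{lem:separability} to read off $\tau_0^*<0$ and hence $E_{\mathrm{FP},T_0}(\tau_0^*)<\mathrm{C}(P_0,P_1)$. This buys you the uniqueness of $\tau_0^*$ automatically (which the paper states before the lemma but does not re-derive inside the proof), and it sidesteps \Cref{lem:likelihood} entirely; the price is that you must prove a strictly stronger monotonicity fact than the dichotomy in \Cref{lem:likelihood}. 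One small caution: your parenthetical that $\mathrm{C}(Q_0,Q_1)$ lying in the range of $E_{\mathrm{FN},T_0}(\cdot)$ is ``automatic when the likelihood-ratio statistic is unbounded'' is a bit imprecise — the clean way to justify it here is via \Cref{propty:likelihood}, which gives that $\Lambda_1'$ is continuous, strictly increasing, and onto $(-\infty,\infty)$, from which the Fenchel–Legendre transform ranges over all of $[0,\infty)$ as $\tau_0$ decreases; that is in fact exactly the mechanism behind the paper's $g(u)\to\infty$ limit.
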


As a proof sketch, we refer to \Cref{fig:trade-off} (Right). If we choose $\tau_0^*\neq 0$, we get a sub-optimal detector for the unprivileged group with $E_{e,T_0}(\tau^*_0)<\mathrm{C}(P_0,P_1)$. The full proofs for \Cref{lem:both_group,lem:one_group} are provided in Appendix B.3.

\begin{rem}[Equal priors on $Z$] Along the lines of balanced accuracy measures, the optimization assumes equal priors on $Z=0$ and $Z=1$ as well. We refer to Appendix E.2 for modification of the optimization to account for unequal priors on $Z=0$ and $Z=1$.
\end{rem}

\begin{rem}[Generalization to other fairness measures] While we focus on equal opportunity here, the idea extends to other fairness measures as well. For example, if the best likelihood detectors for each group, i.e., $T_0(x)\geq 0$ and $T_1(x)\geq 0$ do not satisfy statistical parity~\cite{agarwal2018reductions}, while there are other pairs of detectors for the two groups that do satisfy the criterion, then for at least one of the two groups, a sub-optimal detector is being used.
\end{rem}

\subsection{The Mismatched Hypothesis Testing Perspective: Ideal Distributions with no Accuracy-Fairness Trade-Off}
\label{subsec:accord}

Here, we will show that there exist ideal distributions such that fairness and accuracy are in accord. Since the trade-off arises due to insufficient separability of the unprivileged group in the observed space, we are specifically interested in finding ideal distributions for the unprivileged group that match the separability of the privileged, and the same detector that achieved fairness with sub-optimal accuracy in Lemma~\ref{lem:one_group} now achieves optimal accuracy with respect to the ideal distributions. We show the existence of such ideal distributions and also provide an explicit construction.

\begin{thm}[Existence of Ideal Distributions]
\label{thm:feasibility} For the setup in \Cref{sec:preliminaries}, let $\mathrm{C}(P_0,P_1)<\mathrm{C}(Q_0,Q_1)$. Let us choose the Bayes optimal detector $T_1(x)=\log{\frac{Q_1(x)}{Q_0(x)}}{\geq }0$ for the group $Z=1$. Then, for group $Z=0$, there exist $\widetilde{P}_0(x)$ and $\widetilde{P}_1(x)$ of the form $\widetilde{P}_0(x)=\frac{P_0(x)^{(1-w)}P_1(x)^w }{\sum_x P_0(x)^{(1-w)}P_1(x)^w }$  and $\widetilde{P}_1(x)=\frac{P_0(x)^{(1-v)}P_1(x)^v }{\sum_x P_0(x)^{(1-v)}P_1(x)^v }$ for $w,v\in \mathcal{R}$ such that:
\begin{itemize}[leftmargin=*,topsep=0pt,itemsep=0pt]
\item (Fairness on given data) The Bayes optimal detector for the ideal distributions, i.e., $\widetilde{T_0}(x){=}\log{\frac{\widetilde{P}_1(x)}{\widetilde{P}_0(x)}}{\geq }0$ is equivalent to the detector $T_0(x)=\log{\frac{P_1(x)}{P_0(x)}}{\geq }\tau_0^*$ of Lemma~\ref{lem:one_group} that satisfies equal opportunity on the given dataset, i.e., $E_{\mathrm{FN},T_0}(\tau_0)=E_{\mathrm{FN},T_1}(0)=\mathrm{C}(Q_0,Q_1)$.
\item (Accuracy and Fairness on ideal data) The Chernoff exponent of the probability of error of the Bayes optimal detector on the ideal distributions, i.e., $ \mathrm{C}(\widetilde{P}_0,\widetilde{P}_1)=\mathrm{C}(Q_0,Q_1)$, and is hence greater than $\mathrm{C}(P_0,P_1)$.
\end{itemize} 
\end{thm}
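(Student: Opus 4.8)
I would reduce the whole statement to a one-dimensional convexity picture for $\mu(t):=\log\sum_x P_0(x)^{1-t}P_1(x)^t$, which by (A2) is strictly convex, satisfies $\mu(0)=\mu(1)=0$, has $\mathrm{C}(P_0,P_1)=-\min_{t\in(0,1)}\mu(t)$, and whose derivative is strictly increasing with $\mu'(0)=-\mathrm{D}(P_0\|P_1)$, $\mu'(1)=\mathrm{D}(P_1\|P_0)$, and $\mu'$ vanishing at the interior minimizer. Writing the normalizers as $Z_t=e^{\mu(t)}$, a one-line computation gives $\log\frac{\widetilde P_1(x)}{\widetilde P_0(x)}=(v-w)\log\frac{P_1(x)}{P_0(x)}+\bigl(\mu(w)-\mu(v)\bigr)$, so for any $w<v$ the detector $\widetilde T_0(x)=\log\frac{\widetilde P_1(x)}{\widetilde P_0(x)}\ge 0$ is exactly $\log\frac{P_1(x)}{P_0(x)}\ge \tau$ with $\tau=\frac{\mu(v)-\mu(w)}{v-w}$, the secant slope of $\mu$ over $[w,v]$. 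Hence the first bullet holds as soon as we can choose $w<v$ whose secant slope equals the $\tau_0^*$ of \Cref{lem:one_group}; the equal-opportunity identity $E_{\mathrm{FN},T_0}(\tau_0^*)=E_{\mathrm{FN},T_1}(0)=\mathrm{C}(Q_0,Q_1)$ is then inherited from \Cref{lem:one_group} together with the fact that $T_1(x)\ge 0$ is the Bayes-optimal detector for $(Q_0,Q_1)$, for which both error exponents equal $\mathrm{C}(Q_0,Q_1)$ (\Cref{lem:separability}).

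Next I would compute the Chernoff information of the ideal pair. Substituting the tilted exponents into $\mathrm{C}(\widetilde P_0,\widetilde P_1)=-\min_{u\in(0,1)}\log\sum_x\widetilde P_0(x)^{1-u}\widetilde P_1(x)^u$ and changing variables to $s=w+(v-w)u$ collapses the integrand to $\mu(s)-\ell_{w,v}(s)$, where $\ell_{w,v}$ is the affine interpolant of $(w,\mu(w))$ and $(v,\mu(v))$; thus $\mathrm{C}(\widetilde P_0,\widetilde P_1)=\max_{t\in[w,v]}\bigl(\ell_{w,v}(t)-\mu(t)\bigr)$, i.e. the maximal vertical gap between the chord of $\mu$ over $[w,v]$ and the graph of $\mu$. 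By strict convexity this gap is attained at the unique interior $s^\dagger$ with $\mu'(s^\dagger)$ equal to the chord slope; combined with the previous paragraph the chord slope is $\tau_0^*$, so $s^\dagger=(\mu')^{-1}(\tau_0^*)$, and since \Cref{lem:one_group} forces $\tau_0^*\in(-\mathrm{D}(P_0\|P_1),0)$ (negative because it must raise the FNR exponent above $\mathrm{C}(P_0,P_1)$, and above $-\mathrm{D}(P_0\|P_1)$ for the exponent formulas to apply), $s^\dagger$ lies in $(0,1)$.

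The two requirements --- secant slope $=\tau_0^*$ and maximal gap $=\mathrm{C}(Q_0,Q_1)$ --- are then met together by sliding a single line. Fix the slope to $\tau_0^*$ and consider the family $L_b(t)=\mu(s^\dagger)+\tau_0^*(t-s^\dagger)+b$ for $b\ge 0$: $L_0$ is tangent to $\mu$ at $s^\dagger$ (hence below $\mu$), and for $b>0$ the strictly convex function $\mu-L_b$ has minimum $-b$ at $s^\dagger$, so $L_b$ meets the graph of $\mu$ at exactly two points $w(b)<s^\dagger<v(b)$, spreading apart continuously as $b$ grows. The chord over $[w(b),v(b)]$ is precisely $L_b$ (it passes through both crossing points), so its slope is $\tau_0^*$ by construction and its maximal gap from $\mu$ is exactly $b$. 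Taking $b=\mathrm{C}(Q_0,Q_1)$ and $w:=w(b)$, $v:=v(b)$ therefore yields both bullets at once: $\widetilde T_0\ge 0$ is the \Cref{lem:one_group} detector, and $\mathrm{C}(\widetilde P_0,\widetilde P_1)=b=\mathrm{C}(Q_0,Q_1)>\mathrm{C}(P_0,P_1)$. Note that $w,v$ need not lie in $[0,1]$, which is exactly the freedom that makes the ideal distributions strictly more separable than $(P_0,P_1)$.

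The step needing the most care is checking that the line $L_b$ with $b=\mathrm{C}(Q_0,Q_1)$ really does cross the graph of $\mu$ in two finite points --- equivalently, that $\mu-L_b$ becomes positive at both ends of the effective domain of $\mu$. In the population setting with the standard families this is automatic: e.g. for Gaussians $\mu$ is a finite quadratic on all of $\mathbb{R}$, and since $\mu'(t)<\mu'(0)=-\mathrm{D}(P_0\|P_1)<\tau_0^*$ for $t<0$ and $\mu'(t)>\mu'(1)=\mathrm{D}(P_1\|P_0)>0>\tau_0^*$ for $t>1$, the function $\mu(t)-\tau_0^* t$ (hence $\mu-L_b$) tends to $+\infty$ at $\pm\infty$. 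In general one only needs $\mathrm{C}(Q_0,Q_1)$ to be below the supremal attainable gap, which is where assumptions (A1)--(A2) enter.
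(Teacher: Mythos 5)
Your proof is correct and takes a genuinely different (and arguably more transparent) route than the paper's. The paper also begins with your secant-slope observation for $\widetilde{T}_0$, but then fixes $v=1$ at the outset, solves $\tau_0^*=-\Lambda_0(w)/(1-w)$ for a $w<0$, and establishes $\mathrm{C}(\widetilde{P}_0,\widetilde{P}_1)=\mathrm{C}(Q_0,Q_1)$ by a chain of algebraic substitutions (substitute $v=1$, change variable $u'=(1-u)(w-1)$, invoke \Cref{propty:fl}, and match to $E_{\mathrm{FN},T_0}(\tau_0^*)=\mathrm{C}(Q_0,Q_1)$). You instead first derive the coordinate-free identity $\mathrm{C}(\widetilde{P}_0,\widetilde{P}_1)=\max_{s\in[w,v]}\bigl(\ell_{w,v}(s)-\mu(s)\bigr)$ valid for \emph{any} pair $(w,v)$ --- the maximal chord-to-curve gap --- and then produce the pair by sliding a line of slope $\tau_0^*$ until that gap equals $\mathrm{C}(Q_0,Q_1)$. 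What this buys: the second bullet becomes geometrically obvious, the pair $(w,v)$ is seen to be the unique chord of the prescribed slope and gap, and the paper's a priori choice $v=1$ is explained a posteriori by the fact that this chord must pass through $(1,0)$. One parenthetical claim is off: \Cref{lem:one_group} guarantees only $\tau_0^*<0$, not $\tau_0^*>-\mathrm{D}(P_0\|P_1)$; if $\mathrm{C}(Q_0,Q_1)$ is large relative to the $P$-side divergences the threshold can drop below $-\mathrm{D}(P_0\|P_1)$ (in which case $E_{\mathrm{FP},T_0}(\tau_0^*)=0$ and $s^\dagger\notin(0,1)$). Fortunately your construction uses only that $s^\dagger=(\mu')^{-1}(\tau_0^*)$ exists and that $\mu-L_b\to\infty$ at both ends --- both of which follow from strict convexity and the surjective derivative in \Cref{propty:likelihood}, independently of whether $s^\dagger\in(0,1)$ --- so the argument stands.
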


The proof is provided in Appendix C. The first criterion demonstrates that one can always find ideal distributions such that the \emph{fair} detector with respect to the given distributions (see Lemma~\ref{lem:one_group}) is in fact the Bayes optimal detector with respect to the ideal distributions. Note that there exist multiple pairs of $(v,w)$ such that 
$\widetilde{P}_0(x)=\frac{P_0(x)^{(1-w)}P_1(x)^w }{\sum_x P_0(x)^{(1-w)}P_1(x)^w }$  and $\widetilde{P}_1(x)=\frac{P_0(x)^{(1-v)}P_1(x)^v }{\sum_x P_0(x)^{(1-v)}P_1(x)^v }$ satisfy the first criterion of the theorem.

The second criterion goes a step further and demonstrates that among such pairs of ideal distributions, one can always find at least one pair such that they are just as separable as the privileged group (i.e.,$C(\widetilde{P}_0,\widetilde{P}_1)=C(Q_0,Q_1)$). The Bayes optimal detector for the unprivileged group with respect to the ideal distributions, i.e., $\widetilde{T_0}(x)=\log{\frac{\widetilde{P}_1(x)}{\widetilde{P}_0(x)}}{\geq }0$ is thus not only \emph{fair} on the given dataset but also satisfies equal opportunity on the ideal data because its Chernoff exponent of FNR is also equal to that of the privileged group, i.e., $C(Q_0,Q_1)$. Note that, in order to satisfy the second criterion, we restrict ourselves to choosing $v=1$ which leads to an appropriate value of $w$.

\begin{rem}[Uniqueness] Theorem~\ref{thm:feasibility} provides a proof of existence of ideal distributions along with an explicit construction. In general, there may exist other pairs of distributions, which are not of the particular form mentioned in \Cref{thm:feasibility}, but might satisfy the two conditions of the theorem. Therefore, given only $P_0(x)$ and $P_1(x)$, the ideal distributions are not necessarily unique unless further assumptions are made about their desirable properties. 
\end{rem}

In order to go about finding such ideal distributions in practice, we therefore propose an additional desirable property of such an ideal dataset. We require the ideal dataset to be a useful representative of the given dataset. This motivates a constraint that $\pi_0 \mathrm{D}(\widetilde{P}_0||P_0) + \pi_1 \mathrm{D}(\widetilde{P}_1||P_1)$ be as small as possible, i.e., the KL divergences of the ideal distributions from their respective given real-world distributions are small. Building on this perspective, we formulate the following optimization for specifying two ideal distributions $\widetilde{P}_0$ and $\widetilde{P}_1$ for the unprivileged group: 
\begin{align} 
\min_{\widetilde{P}_0,\widetilde{P}_1} & \pi_0 \mathrm{D}(\widetilde{P}_0||P_0) + \pi_1 \mathrm{D}(\widetilde{P}_1||P_1) \nonumber \\
 \text{such that, } &E_{\mathrm{FN},\widetilde{T_0}}(0)=\mathrm{C}(Q_0,Q_1),\label{opt:KL}
\end{align}
where $\widetilde{T_0}(x)=\log{\frac{\widetilde{P}_1(x)}{\widetilde{P}_0(x)}}\geq 0$ is the Bayes optimal detector with respect to the ideal distributions and $E_{\mathrm{FN},\widetilde{T_0}}(0)$ is the Chernoff exponent of the probability of false negative for this detector when evaluated on the given distributions $P_0(x)$ and $P_1(x)$. \Cref{thm:feasibility} already shows that the aforementioned optimization is feasible.

The results of this subsection can be extended to optimization \eqref{opt:both_group}, or to other measures of fairness altogether, e.g., statistical parity, or to other kinds of constraints such as minimal individual distortion.

\textbf{Relation to the construct space:} The ideal distributions for the unprivileged group, in conjunction with the given distributions of the privileged group, have two interpretations: (i) They could be viewed as plausible distributions in the observed space if the mappings were unbiased from a separability standpoint (recall \Cref{defn:unbiased_mappings}). (ii) Given our limited knowledge of the construct space, they could also be viewed as candidate distributions in the construct space itself if the mappings for the group $Z=1$ were identity mappings. This can be justified because we do not have much knowledge about the construct space (or even its dimensionality) except through the observed data. It is not unfathomable to assume they would have a separability of at least $\mathrm{C}(Q_0,Q_1)$, which is the separability exhibited by the privileged group in the observed space. Theorem~\ref{thm:feasibility} thus also demonstrates that the construct space is non-empty.

\begin{rem}[Explicit Use of an Ideal Dataset]
Several existing methods~\cite{CalmonWVRV2018,feldman2015certifying,kamiran2012data} propose pre-processing the given dataset to generate an alternate dataset that satisfies certain fairness and utility (representation) properties, in the same spirit as optimization~\eqref{opt:KL}, and train models on them. The trained detector may be sub-optimal with respect to the given dataset but is deemed to be fair. The results in this subsection help to explain why these approaches result in an 
accuracy-fairness trade-off on the given dataset, and also demonstrate that both accuracy and fairness can improve simultaneously when the accuracy is measured with respect to the alternate/ideal dataset. Optimization~\eqref{opt:KL} is also reminiscent of the formulation of \cite{jiang2019identifying}, who posit that a given biased label function is closest to an ideal unbiased label function in terms of KL divergence. In that work however, the KL divergence is applied to conditional label distributions $p_{Y|X}$ as opposed to conditional feature distributions $p_{X|Y}$. Furthermore, \cite{jiang2019identifying} do not analytically characterize trade-offs.
\end{rem}

\begin{rem}[Implicit Use of an Ideal Dataset]
Existing methods that fall in this category include training with fairness regularization in the loss function or post-processing the output to meet a fairness criterion. Instead of explicitly generating an ideal dataset, these methods aim to find a classifier that satisfies a fairness criterion on the given dataset, with minimal compromise of accuracy on the given dataset (recall optimizations \eqref{opt:both_group} and \eqref{opt:one_group}). Here, we show that there exist ideal distributions corresponding to these fair detectors such that a sub-optimal detector on the given dataset can be optimal with respect to the ideal dataset. 
\end{rem}

\subsection{Active Data Collection: Alleviating Real-World Trade-Offs with Improved Knowledge}
\label{subsec:explainability}

The inherent limitation of disparate separability between groups in the given dataset, discussed in \Cref{subsec:limit}, can in fact be overcome but with an associated cost: active data collection. In this section, we demonstrate when gathering more features can help in improving the Chernoff information of the unprivileged group without affecting that of the privileged group. Gathering more features helps us classify members of the unprivileged group more carefully with additional separability information that was not present in the initial dataset. In fact, this is the idea behind active fairness \cite{NoriegaCamperoBGP2019,BakkerNTSVP2019, chen2018my}. Our analysis below also serves as a technical explanation for the success of active fairness. {We note that while we discuss the scenario of additional data collection for the group $Z=0$ here, the result holds for any group or sub-group (also see Remark~\ref{rem:active_fairness_groups}).}

Let $X'$ denote the additional features so that $(X,X')$ is now used for classification of the group $Z{=}0$. Note that $X'$ could also easily be other forms of additional information including extra explanations to go along with the data or decision, similar to \cite{varshney2018interpretability}. Let $(X,X')$ have the following distributions: $(X,X')|_{Y=0,Z=0}\sim W_0(x,x')$ and $(X,X')|_{Y=1,Z=0}\sim W_1(x,x')$, where $Y$ is the true label. Note that, $P_0(x)=\sum_{x'}W_0(x,x')$ and $P_1(x)=\sum_{x'}W_1(x,x')$. Our goal is to derive the conditions under which the separability improves with addition of more features, i.e., $\mathrm{C}(W_0,W_1)>\mathrm{C}(P_0,P_1)$. 

\begin{thm}[Improving Separability]
The Chernoff information $\mathrm{C}(W_0,W_1)$ is strictly greater than $\mathrm{C}(P_0,P_1)$ if and only if $X'$ and $Y$ are not independent of each other given $X$ and $Z=0$, i.e., the conditional mutual information $I(X';Y|X,Z=0)>0$.
\label{thm:explainability} 
\end{thm}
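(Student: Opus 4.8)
The plan is to write both Chernoff informations in terms of the convex ``Chernoff functions''
\[
g(u):=\log\sum_x P_0(x)^{1-u}P_1(x)^u,\qquad h(u):=\log\sum_{x,x'} W_0(x,x')^{1-u}W_1(x,x')^u,
\]
so that by \Cref{lem:separability} we have $\mathrm{C}(P_0,P_1)=-\min_{u\in(0,1)}g(u)$ and $\mathrm{C}(W_0,W_1)=-\min_{u\in(0,1)}h(u)$. The core estimate is a termwise Hölder inequality: for each fixed $x$ and each $u\in(0,1)$, using the conjugate exponents $p=\tfrac{1}{1-u}$ and $q=\tfrac{1}{u}$,
\[
\sum_{x'} W_0(x,x')^{1-u}W_1(x,x')^u\;\le\;\Bigl(\sum_{x'}W_0(x,x')\Bigr)^{1-u}\Bigl(\sum_{x'}W_1(x,x')\Bigr)^u=P_0(x)^{1-u}P_1(x)^u,
\]
with equality if and only if the vectors $(W_0(x,x'))_{x'}$ and $(W_1(x,x'))_{x'}$ are proportional. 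Summing over $x$ gives $h(u)\le g(u)$ for every $u\in(0,1)$ (and $h=g=0$ at $u=0,1$ since all four objects are probability distributions), hence $\mathrm{C}(W_0,W_1)\ge\mathrm{C}(P_0,P_1)$ with no assumption at all.

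Next I would identify the Hölder equality case with the vanishing of the conditional mutual information. Under assumption A1 all masses are strictly positive, so proportionality of $(W_0(x,\cdot))$ and $(W_1(x,\cdot))$ is the same as $W_1(x,x')/W_0(x,x')$ being constant in $x'$, i.e.\ $W_0(x,x')/P_0(x)=W_1(x,x')/P_1(x)$, which says precisely that the conditional law of $X'$ given $X$ (and $Z=0$) is the same under $Y=0$ and $Y=1$. Averaging over $x$, this holds for all $x$ if and only if $X'\perp Y\mid X,Z=0$, i.e.\ $I(X';Y\mid X,Z=0)=0$. This already yields the ``only if'' direction: if $I(X';Y\mid X,Z=0)=0$ then the termwise equality holds for all $x$ and all $u$, so $h\equiv g$ and $\mathrm{C}(W_0,W_1)=\mathrm{C}(P_0,P_1)$.

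For the ``if'' direction, assume $I(X';Y\mid X,Z=0)>0$. Then there is an $x_0$ at which $(W_0(x_0,\cdot))$ and $(W_1(x_0,\cdot))$ are \emph{not} proportional, so the Hölder bound at $x_0$ is strict for \emph{every} $u\in(0,1)$; since every other term only decreases (weakly), we obtain the strict pointwise bound $h(u)<g(u)$ for all $u\in(0,1)$. To conclude I would invoke assumption A2: $g$ is convex with $g(0)=g(1)=0$, $g'(0^+)=-\mathrm{D}(P_0\|P_1)<0$ and $g'(1^-)=\mathrm{D}(P_1\|P_0)>0$, so its minimizer $u^\star$ lies in the \emph{open} interval $(0,1)$, and therefore
\[
\min_{u\in(0,1)}h(u)\;\le\;h(u^\star)\;<\;g(u^\star)\;=\;\min_{u\in(0,1)}g(u),
\]
giving $\mathrm{C}(W_0,W_1)>\mathrm{C}(P_0,P_1)$. (That $\mathrm{C}(W_0,W_1)$ is itself well-defined and positive follows since $\mathrm{D}(W_0\|W_1)\ge\mathrm{D}(P_0\|P_1)>0$ by the data-processing inequality for marginalization, though this is not needed for the displayed chain.)

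I expect the only genuinely delicate step to be this last passage from ``$h<g$ pointwise on $(0,1)$'' to ``$\min h<\min g$'': pointwise strict inequality of functions need not imply strict inequality of their minima, and what rescues the argument is precisely that $g$ attains its minimum at an \emph{interior} point of $(0,1)$ — a consequence of the distinct-hypotheses assumption A2 — where the strict Hölder bound is in force. Everything else (Hölder and its equality case, the bookkeeping equating that equality case with $I(X';Y\mid X,Z=0)=0$, and the marginalization identities $P_y(x)=\sum_{x'}W_y(x,x')$) is routine.
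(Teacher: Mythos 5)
Your proposal is correct and essentially mirrors the paper's appendix proof: the paper applies the weighted AM--GM inequality $a^{1-u}b^u\le(1-u)a+ub$ to the conditional ratios $W_y(x,x')/P_y(x)$ and sums over $x'$, which is precisely your termwise H\"older bound, with the same equality-case characterization identifying proportionality with $I(X';Y\mid X,Z=0)=0$. Your explicit care in passing from pointwise $h<g$ on $(0,1)$ to $\min h<\min g$ via the interior minimizer is well-placed and is handled implicitly in the paper through the earlier-established property (\Cref{propty:likelihood}) that the log-generating function for a likelihood ratio detector attains its minimum in the open interval $(0,1)$.
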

The proof is provided in Appendix D. Note that, in general $\mathrm{C}(W_0,W_1)\geq \mathrm{C}(P_0,P_1) $ because intuitively separability can only improve or remain the same with additional findings (see Appendix D).
We attempt to identify the scenario where the inequality is strict.

Let ${x'}$ be a deterministic function of $x$, i.e., $f(x)$. Then
$W_0(x,x'){=}
P_0(x)$ if $x'{=}f(x)$, and $0$ otherwise. Similarly, $W_1(x,x'){=}
P_1(x)$ if $x'{=}f(x)$, and $0$ otherwise, leading to $\mathrm{C}(W_0,W_1){=}\mathrm{C}(P_0,P_1)$. This agrees with the intuition that if $X'$ is fully determined by $X$, then it does not improve the separability beyond what one could achieve using $X$ alone. Therefore, for $\mathrm{C}(W_0,W_1){>}\mathrm{C}(P_0,P_1)$, we require $X'$ to contribute some information that helps in separating hypotheses $Y=0$ and $Y=1$ better, that essentially leads to $X'$ not being independent of $Y$ given $X$ and $Z=0$. If new data improves the separability of the group $Z=0$, its accuracy-fairness trade-off is alleviated (see Fig.~\ref{fig:active} in Section~\ref{sec:numerical}).

\begin{rem}[Broader Interpretation of Active Data Collection] Active data collection can be interpreted as a more careful examination of a patient in healthcare applications, or a manual reconsideration before an automated rejection by an algorithm in hiring or lending applications, or examining any additional ``informative'' feature collected with the candidate's consent that improves decision making for societal welfare. One may argue that additional data collection from members of unprivileged groups as a way to improve their outcomes is an undue burden on them, and thus unfair. Although this may be true for \emph{unconsented} surveillance of entire populations, the allocation decisions that we investigate herein (e.g.\ hiring, lending, healthcare) tend to be ones in which the applicant willingly and consensually seeks an opportunity from an institution that has controls in place to deal with their data soundly. For example in hiring, it is the desire of applicants to progress from a simple resume check to an in-person interview; this opportunity allows them to provide more information so that their strengths can be understood by decision makers better (cf. the Rooney rule in hiring professional football coaches~\cite{emelianov2020fair}). Similarly in healthcare, if patients report pain symptoms, they would rather not be dismissed, but would like a physician to spend more time with them and conduct tests to obtain better diagnosis and care. In these and similar other examples, collecting additional data is a way to alleviate the trade-off. One may further argue that even in the consenting setting, additional data collection for members of unprivileged groups erects additional hoops for them to jump through, but in fact, additional features can always be collected for \textbf{all} groups of people.

We note that while we discuss additional feature collection for $Z=0$, active data collection can be performed for all groups/sub-groups of people as required by the application, and our results apply. E.g., if the collected data/features are more informative for any group/sub-group denoted by $Z=z'$ (i.e., $I(X';Y|X,Z=z')>0$), then they will improve the separability (Chernoff Information) of that group. New ideal distributions can also be found using the techniques of \Cref{subsec:accord} that are more plausible as both candidate observed-space distributions under unbiased mappings or candidate construct-space distributions. The new ideal distributions will have better separability if the new data improves the separability of all groups (elaborated further in Section~\ref{sec:discussion}). 
\label{rem:active_fairness_groups}
\end{rem}

\begin{figure}
\centerline{\includegraphics[height=5.3cm]{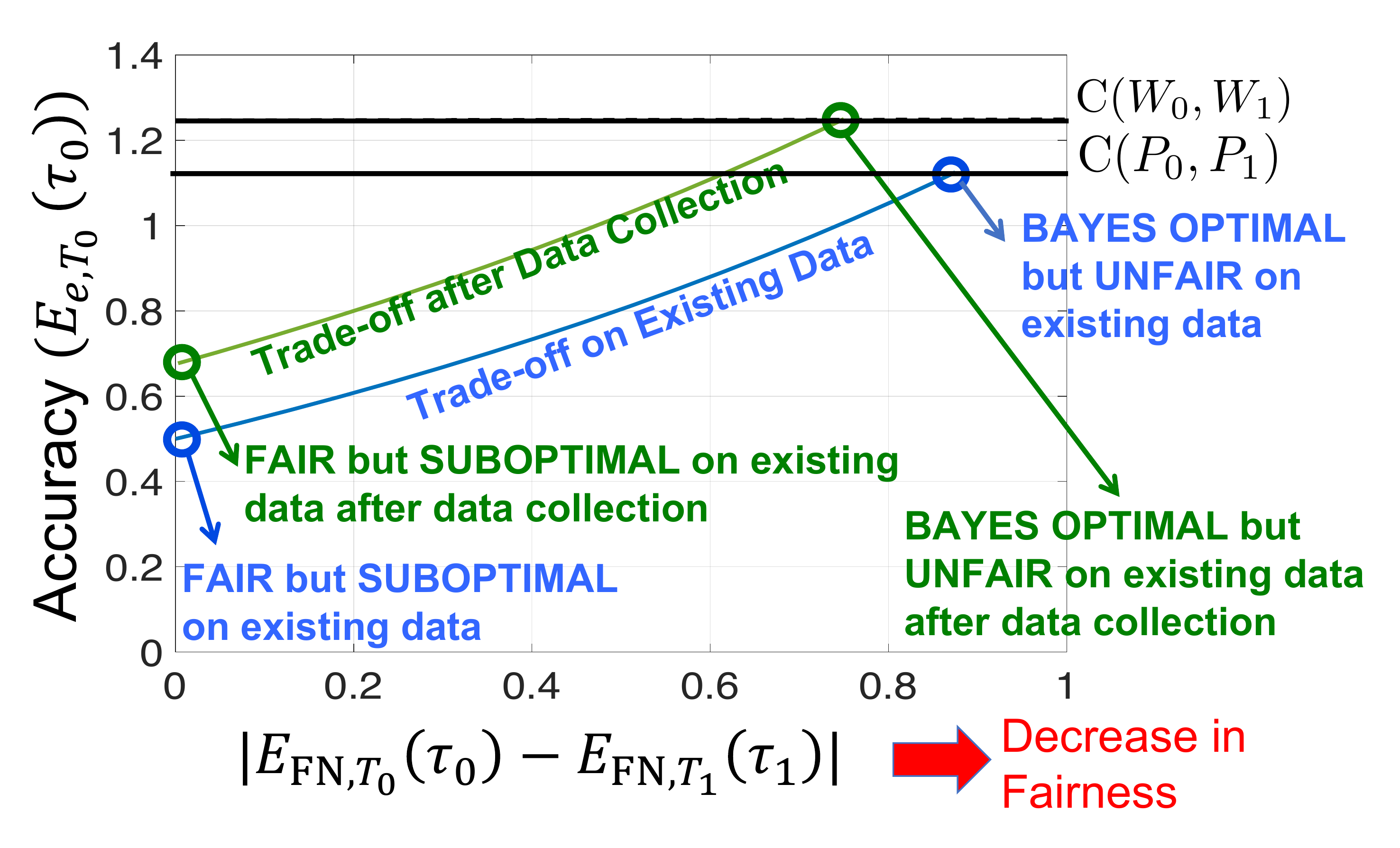}}
\caption{Computation of the trade-off between fairness and accuracy using a numerical example: For the unprivileged group, we let $P_0(x){\sim} \mathcal{N}(1,1)$ and $P_1(x){\sim} \mathcal{N}(4,1).$ We restrict the detector of the privileged group to its Bayes optimal detector with $C(Q_0,Q_1)=2$. The blue curve denotes the trade-off between accuracy and fairness in the existing dataset for the unprivileged group. Now suppose we are able to collect an additional feature $X'$ for the unprivileged group such that $(X,X')|_{Y=0,Z=0} \sim \mathcal{N}((1,1),\mathbf{I})$ and $(X,X')|_{Y=1,Z=0} \sim \mathcal{N}((4,2),\mathbf{I})$, where $\mathbf{I}$ is the $2\times 2$ identity matrix. The green curve shows how active data collection alleviates the trade-off between fairness and accuracy.  \label{fig:active} }
\end{figure}

\section{Numerical Example}
\label{sec:numerical}
We use a simple numerical example to show how our theoretical concepts and results can be computed in practice. 

\begin{eg}
\label{eg:numerical} Let the exam score for $Z=0$ be $P_0(x){\sim} \mathcal{N}(1,1)$ and $P_1(x){\sim} \mathcal{N}(4,1)$, and that for $Z=1$ be $Q_0(x){\sim}\mathcal{N}(0,1)$ and $Q_1(x){\sim}\mathcal{N}(4,1)$. 
\end{eg}

Let us restrict ourselves to likelihood ratio detectors of the form $T_0(x)=\log{\frac{P_0(x)}{P_1(x)}}\geq \tau_0$ and $T_1(x)=\log{\frac{Q_0(x)}{Q_1(x)}}\geq \tau_1$ for the two groups. The log generating functions for $Z=1$ can be computed analytically as:  $\Lambda_0(u)_{z=1}=8u(u-1)$ and $\Lambda_1(u)_{z=1}=8u(u+1)$ (derivation in Appendix A.3) and the Chernoff information can be computed as $\mathrm{C}(Q_0,Q_1)=2$. 

Now, for the unprivileged group $Z=0$, the log generating functions can be computed as $\Lambda_0(u)_{z=0}=\frac{9}{2}u(u-1)$ and $\Lambda_1(u)_{z=0}=\frac{9}{2}u(u+1)$ (again see Appendix A.3 for derivation). The Chernoff information is $\mathrm{C}(P_0,P_1)=9/8$.

\textbf{Accuracy-Fairness Trade-off in Real World:}   We restrict the detector for the privileged group to be the Bayes optimal detector $T_1(x){=}\log{\frac{Q_1(x)}{Q_0(x)}} \geq 0$ (equivalent to $x\geq 2$). For this detector, $E_{\mathrm{FP},T_1}(0){=}E_{\mathrm{FN},T_1}(0)= \mathrm{C}(Q_0,Q_1)=2$. 

Now, for $Z{=}0$, the Bayes optimal detector $T_0(x){=}\log{\frac{P_1(x)}{P_0(x)}}{\geq} 0$ (or, $x{\geq} 1.5$) will be unfair since $$E_{\mathrm{FN}, T_0}(0){=} \mathrm{C}(P_0,P_1) {<} E_{\mathrm{FN},T_1}(0).$$ Using the geometric interpretation of Chernoff information (recall Fig.~\ref{fig:trade-off}), we can compute the Chernoff exponents of FPR and FNR, i.e., $E_{\mathrm{FP},T_0}(\tau_0)$ and $E_{\mathrm{FN},T_0}(\tau_0)$ as the negative of the y-intercept of the tangents to $\Lambda_0(u)_{z=0}$ and $\Lambda_1(u)_{z=0}$ for detectors $T_0(x){=}\log{\frac{P_1(x)}{P_0(x)}}{\geq}\tau_0$. This enables us to numerically plot the trade-off between accuracy ($E_{e, T_0}(\tau_0){=}\min\{E_{\mathrm{FP}, T_0}(\tau_0),E_{\mathrm{FN}, T_0}(\tau_0) \}$) and fairness ($|E_{\mathrm{FN}, T_0}(\tau_0){-}E_{\mathrm{FN}, T_1}(\tau_0)|$) by varying $\tau_0$ as shown by the blue curve in Fig.~\ref{fig:active}. 

Note that, the detector that satisfies fairness (equal opportunity) on the given distributions can also be computed analytically as $\log{\frac{P_1(x)}{P_0(x)}}{\geq}\tau^*_0$ where $\tau_0^*{=}{-}3/2$ (equivalent to $x{\geq} 2$). This leads to equal exponent of FNR, i.e., $E_{\mathrm{FN}, T_0}(-3/2){=} 2{=}E_{\mathrm{FN},T_1}(0)$ but for this detector $E_{\mathrm{FP}, T_0}(\tau^*_0){=}1/2$ leading to reduced Chernoff exponent of overall error probability (represents accuracy), i.e., $E_{e, T_0}(\tau^*_0){=}\min\{E_{\mathrm{FP}, T_0}(\tau^*_0),E_{\mathrm{FN}, T_0}(\tau^*_0) \}=\min \{1/2,2\}=1/2$ which is less than $\mathrm{C}(P_0,P_1)=9/8$. 

\textbf{Ideal Distributions:} We refer to Fig.~\ref{fig:ideal}. It turns out that one pair of ideal distributions prescribed by Theorem~\ref{thm:feasibility} is $\widetilde{P}_0{=}Q_0$ and $\widetilde{P}_1{=}P_1{=}Q_1$. The Bayes optimal detector with respect to the ideal distributions for $Z=0$ is given by $\log{\frac{\widetilde{P}_1(x)}{\widetilde{P}_0(x)}} {\geq} 0$ (equivalent to $x{\geq} 2$). Note that, this is equivalent to the detector $\log{\frac{P_1(x)}{P_0(x)}}\geq \tau_0^*$ where $\tau_0^*{=}-3/2$ which satisfied equal opportunity on the given dataset. This detector is now Bayes optimal with respect to the ideal distributions $\widetilde{P}_0$ and $\widetilde{P}_1$, and has a Chernoff exponent of the overall probability of error equal to $C(\widetilde{P}_0,\widetilde{P}_1)=2$ when measured with respect to the ideal distributions. Thus, we demonstrate that both fairness (in the sense of equal opportunity on existing dataset as well as ideal dataset) and accuracy (with respect to the ideal distributions) are in accord. Note that, one may also find alternate pairs of ideal distributions using optimization \eqref{opt:KL} or any variant of the optimization, e.g., using statistical parity.

\begin{figure}
\centerline{\includegraphics[height=3cm]{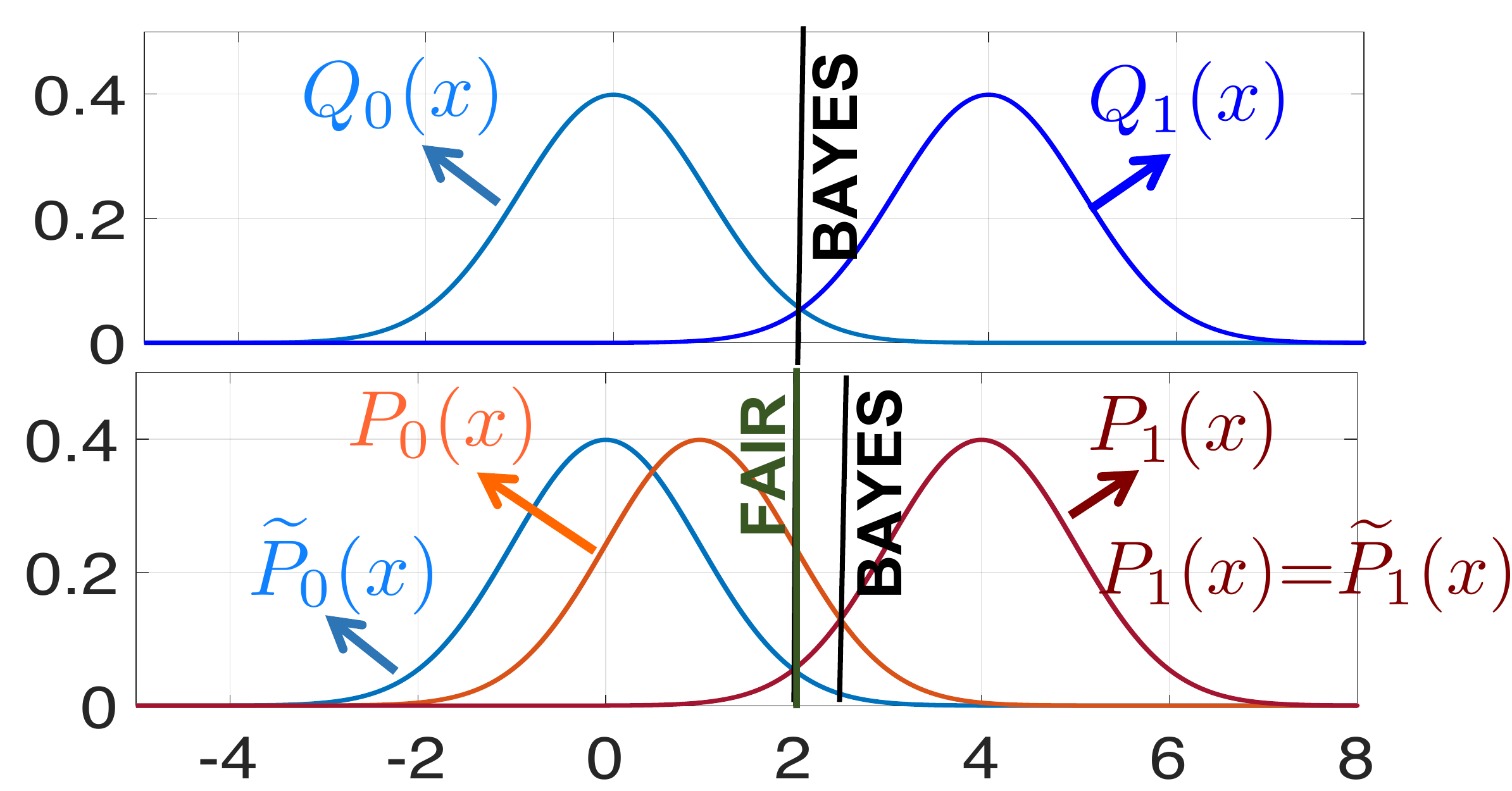}}
\caption{(Top) For the distributions in \Cref{eg:numerical}, we denote the Bayes optimal detector $\log{\frac{Q_1(x)}{Q_0(x)}}{\geq}0$ (equivalent to $x\geq2$) for the privileged group $Z=1$. (Bottom) For $Z=0$, the optimal detector $\log{\frac{P_1(x)}{P_0(x)}}{\geq}0$ does not satisfy equal opportunity on the given dataset but a sub-optimal detector does (notice the equal area corresponding to false negative rate for two groups). However, there exist ideal distributions given by $\widetilde{P}_0=Q_0$ and $\widetilde{P}_1=P_1=Q_1$ such that this detector is optimal w.r.t.\ the ideal distributions, and also achieves fairness w.r.t.\ both existing and ideal distributions. \label{fig:ideal}}
\end{figure}
\textbf{Active Data Collection:} Now suppose we are able to collect an additional feature $X'$ for $Z=0$ such that $(X,X')|_{Y=0,Z=0} \sim \mathcal{N}((1,1),\mathbf{I})$ and $(X,X')|_{Y=1,Z=0} \sim \mathcal{N}((4,2),\mathbf{I})$, where $\mathbf{I}$ is the $2\times 2$ identity matrix. The log generating functions can be derived as: $\Lambda_0(u)=5u(u-1)$ and $\Lambda_1(u)=5u(u+1).$ Note that, the Chernoff information (separability) $C(W_0,W_1)=5/4$ which is greater than $C(P_0,P_1)=9/8$. Thus, the collection of the new feature has improved the separability of the unprivileged group. 

Now, we examine the effect of active data collection on the accuracy-fairness trade-off in the real world. We again refer to Fig.~\ref{fig:active} (green curve). Consider the likelihood ratio detector for $Z=0$ based on the total set of features, i.e., $T_0(x,x')=\log\frac{W_0(x,x')}{W_1(x,x')} \geq \tau_0.$ To satisfy our fairness constraint, we need to choose a $\tau_0^*$ such that $E_{\mathrm{FN},T_0}(\tau_0^*){=}E_{\mathrm{FN},T_1}(0)= \mathrm{C}(Q_0,Q_1)=2$. Upon solving, we obtain that $\tau_0^*=5-\sqrt{40} \approx -1.32$. For this value of $\tau_0^*$, we obtain $E_{\mathrm{FP},T_0}(\tau^*_0){=}7-\sqrt{40})\approx 0.68.$ The Chernoff exponent of the probability of error for this \emph{fair} detector is given by $\min \{E_{\mathrm{FN},T_0}(\tau^*_0), E_{\mathrm{FP},T_0}(\tau^*_0) \} =\min\{2, 0.68\}=0.68$ which is greater than $0.5$ (the Chernoff exponent of the probability of error for the fair detector before collection of the additional feature $X'$).

\section{Discussion} 
\label{sec:discussion}
The trade-off between accuracy and fairness has been a topic of active debate in recent years. This work demystifies this problem by introducing the concept of separability: a quantification of the best accuracy attainable on a group of people. Separability can be viewed as the inherent ``informativeness'' of the data to be able to correctly make a particular decision. We assert that the trade-off between accuracy and fairness (equal opportunity) on observed datasets could be due to an inherent difference in informativeness regarding the two groups of people, possibly due to noisier representations for the unprivileged group due to historic differences in representation, opportunity, etc. Informativeness does not necessarily depend on how many features or data-points, e.g., even the distribution of a single relevant feature can be more informative than that of a bunch of less relevant features combined. This work examines the problem of accuracy-fairness trade-off from the lens of data informativeness.

We show that if there is a difference in separability (accuracy of the best classifiers) on two groups of people, then the best classifiers will not be fair on the given dataset. Any attempt to change the classifiers to attain fairness can affect the accuracy for one or both the groups. This intuition explains the observed trade-off on the given dataset. Our results also provide novel analytical insights that can quantify this accuracy-fairness trade-off approximately. Our Chernoff information based analysis can help estimate the respective separabilities on two groups of people, even before any classification algorithm is applied, albeit with estimation challenges~\cite{Chernoff1, Chernoff2, Chernoff3} that we are looking into as future work. 

We also show that there exist ideal distributions where fairness and accuracy can be in accord. Even the same classifier that compromises accuracy to attain fairness on a given dataset can attain both fairness and optimal accuracy on the ideal dataset. We believe that our demonstration that fairness and accuracy are in accord with respect to ideal datasets will motivate the use of accuracy with respect to an ideal dataset as a performance metric in algorithmic fairness research \cite{sharmadata,wick2019unlocking}. 

These ideal datasets also have intellectual connection with the framework of \cite{friedler2016possibility,yeom2018discriminative}. They can be viewed as unbiased distributions in the observed space if the mappings for both groups of people are equally noisy (or equally informative), or even ``candidate'' distributions in the ideal construct space itself. We note that for the latter interpretation though, we operate on an implicit assumption: the separability of both the groups in the construct space is equal to the highest separability among all the groups in the observed space. In essence, what this means is that the observed data for the group with highest Chernoff information is being assumed to be as informative as the ideal construct space data, so that further data collection cannot improve the separability beyond that (further implications of this assumption is revisited in the next paragraph).

Lastly, our results also show when and by how much  active data collection can alleviate the accuracy-fairness trade-off in the real world. If it turns out that active data collection improves the separability of all groups (including the group with highest Chernoff information), then it also helps us improve our working candidate distributions for the ideal construct space as well, i.e., we now know that the separability of all the groups in the ideal construct space is at least as much as the new highest Chernoff information among all the groups. We can therefore update our speculation about the ideal distributions for all groups accordingly.

\section*{Acknowledgements}

The authors would like to thank Pulkit Grover, Shubham Sharma, and the anonymous reviewers for their valuable suggestions. \nocite{gallager2012detection,boucheron2013concentration}

\bibliography{example_paper}
\bibliographystyle{unsrt}

\clearpage
\onecolumn
\appendix
\section{Background on Chernoff Information}
\label{app:background}

In this section, we provide a brief background on Chernoff bounds and Chernoff information, leading to the derivation of the results under equal priors, i.e., $\pi_0=\pi_1=\frac{1}{2}$. We discuss the case of unequal priors in \Cref{app:unequal}.

Consider a detector of the form $T(x)\geq\tau$ for classification between two hypothesis $H_0:X\sim P_0(x)$ and $H_1:X \sim P_1(x)$. Recall that the log-generating functions for this detector are defined as follows: 
\begin{align}
\Lambda_0(u) =  \log{\mathbb{E}[e^{u T(X)}|H_0] },  \text{  and }
\Lambda_1(u) =  \log{\mathbb{E}[e^{u T(X)}|H_1] }. 
\end{align}

\subsection{Proof of \Cref{lem:chernoff_exponent}}
\label{app:chernoff_bound}
We first state the Chernoff bound (see Chapter 2.2 in \cite{boucheron2013concentration}) here, which is a well-known tight bound for approximating error probabilities. For a random variable $T$,
\begin{equation}
\Pr{(T \geq \tau)} = \Pr{(e^{uT} \geq e^{u\tau})} \leq \frac{\mathbb{E}[e^{uT}]}{e^{u\tau}} \ \ \forall u>0.\label{eq:chernoff_bound}
\end{equation}


\begin{proof}[Proof of \Cref{lem:chernoff_exponent}]
Using the Chernoff bound, we can bound $P_{\mathrm{FP}}^{(T)}(\tau)$ as follows:
\begin{align}
P_{\mathrm{FP}}^{(T)}(\tau) & = \Pr{(T(X) \geq \tau|H_0)}  \leq  \frac{\mathbb{E}[e^{uT(X)}|H_0]}{e^{u\tau}}  = \frac{e^{\Lambda_0(u)}}{e^{u\tau}}\ \ \forall u>0.
\end{align}
Thus,
$
  -\log{P_{\mathrm{FP}}^{(T)}(\tau) }
\geq \sup_{u>0} \left( u\tau- \Lambda_0(u)\right) = E_{\mathrm{FP}}^{(T)}(\tau).
$ Similarly, using the Chernoff bound, we have
\begin{align}
P_{\mathrm{FN}}^{(T)}(\tau) & = \Pr{(T(X) < \tau|H_1)}   \leq  \frac{\mathbb{E}[e^{uT(X)}|H_1]}{e^{u\tau}} = \frac{e^{\Lambda_1(u)}}{e^{u\tau}}\ \ \forall u<0.
\end{align}
Thus,
$
 -\log{P_{\mathrm{FN}}^{(T)}(\tau) } \geq \sup_{u<0} \left( u\tau- \Lambda_1(u)\right)=E_{\mathrm{FN}}^{(T)}(\tau).$
\end{proof}

\subsection{Properties of log-generating functions}
\label{app:properties_log_gen}
Here, we state some useful properties of the log-generating functions that are used later in the other proofs/explanations.

\begin{propty}[Convexity] The log-generating functions $\Lambda_0(u)$ and $\Lambda_1(u)$ are convex in $u$.
\label{propty:convexity}
\end{propty}

\begin{proof}[Proof of \Cref{propty:convexity}] The proof follows directly using H\"{o}lder's inequality. For any $u$ and $v$, and $\alpha\in [0,1]$,
\begin{align}
\mathbb{E}[e^{(\alpha u + (1-\alpha)v)T(X)}|H_0] &= \mathbb{E}[e^{\alpha u T(X)} e^{(1-\alpha)vT(X)}|H_0] \leq \left(\mathbb{E}[|e^{\alpha u T(X)}|^{\frac{1}{\alpha}}|H_0] \right)^{\alpha} \left(\mathbb{E}[|e^{(1-\alpha)vT(X)}|^{\frac{1}{1-\alpha}}|H_0] \right)^{1-\alpha}.
\label{eq:holder}
\end{align}

This leads to,
\begin{align}
 \Lambda_0(\alpha u + (1-\alpha)v) 
= \log{\mathbb{E}[e^{(\alpha u + (1-\alpha)v)T(X)}|H_0] } 
& \leq \alpha \log{\mathbb{E}[e^{u T(X)}|H_0] } + (1-\alpha)\log{\mathbb{E}[e^{v T(X)}|H_0] } \nonumber \\
& = \alpha \Lambda_0(u) + (1-\alpha)\Lambda_0(v).
\end{align}

The proof is similar for $\Lambda_1(u)$.
\end{proof}

\begin{propty}[Zero at origin] The log-generating functions $\Lambda_0(u)$ and $\Lambda_1(u)$ are both $0$ at $u=0$.
\label{propty:zero}
\end{propty}

\begin{proof}[Proof of \Cref{propty:zero}] The proof follows by substituting $u=0$ in the expressions of 
$\Lambda_0(u)$ and $\Lambda_1(u)$.
\end{proof}

Next, we prove some properties for the log-generating functions when the detector is \emph{well-behaved}. In general, when using a detector of the form $T(x)\geq \tau,$ we would expect $T(X)$ to be high when $H_1$ is true, and low when $H_0$ is true. We call a detector \emph{well-behaved} if $\mathbb{E}[T(X)|H_0]<0$ and $ \mathbb{E}[T(X)|H_1]>0$. The next property provides more intuition on what the log-generating functions look like for \emph{well-behaved} detectors. 

\begin{propty}[Log-generating functions of well-behaved detectors]  
Suppose that $\mathbb{E}[T(X)|H_0] <0 $ and $\mathbb{E}[T(X)|H_1] >0 $, and $P_0(x)$ and $P_1(x)$ are non-zero for all $x$. Then, the following holds:
\begin{itemize}[itemsep=0pt, topsep=0pt]
\item $\Lambda_0(u)$ and $\Lambda_1(u)$ are strictly convex.
\item $\Lambda_0(u)> 0$ if $u< 0$. 
$\Lambda_1(u)> 0$ if $u> 0$.
\end{itemize}
\label{propty:well_behaved}
\end{propty}

\begin{proof}[Proof of \Cref{propty:well_behaved}]

The convexity of $\Lambda_0(u)$ is proved in \Cref{propty:convexity}. Now $\Lambda_0(u)$ is strictly convex if, for all distinct reals $u$ and $v$, and $\alpha \in (0,1)$, we have,
$$\Lambda_0(\alpha u + (1-\alpha)v) < \alpha \Lambda_0(u) + (1-\alpha)\Lambda_0(v).$$ For the sake of contradiction, let us assume that there exists $u$ and $v$ with $v>u$ such that, $$\Lambda_0(\alpha u + (1-\alpha)v) = \alpha \Lambda_0(u) + (1-\alpha)\Lambda_0(v).$$ This indicates that H\"{o}lder's inequality holds with exact equality in \eqref{eq:holder}, which could happen if and only if $ae^{uT(x)}=be^{vT(x)}$ almost everywhere with respect to the probability measure $P_0(x)$ for constants $a$ and $b$, i.e., $(v-u)T(x)=\log{a/b}$. Thus,
\begin{align}
\mathbb{E}[T(X)|H_0] = \frac{1}{(v-u)}\log{a/b} = \mathbb{E}[T(X)|H_1],
\end{align}
where the last step holds because $P_1(x)$ and $P_0(x)$ are both non-zero everywhere (absolutely continuous with respect to each other). But, this is a contradiction since $\mathbb{E}[T(X)|H_0]< 0 < \mathbb{E}[T(X)|H_1]$. Thus, $\Lambda_0(u)$ is strictly convex. A similar proof can be done for $\Lambda_1(u)$.

For proving the next claim, consider the derivative of $\Lambda_0(u)$.
\begin{align}
\frac{d \Lambda_0(u)}{d u} = \frac{\mathbb{E}[ e^{uT(X)}T(X)|H_0]}{e^{\Lambda_0(u)}}.
\end{align}

The derivative of $\Lambda_0(u)$ at $u=0$ is given by $\mathbb{E}[T(X)|H_0]$ which is strictly less than $0$. Because $\Lambda_0(u)$ is strictly convex in $u$ and $\Lambda_0(0)=0$, if $ \frac{d \Lambda_0(u)}{d u}|_{u=0} <0$, then $\Lambda_0(u)> 0$ for all $u< 0$. 

A similar proof holds for the last claim as well, since the derivative of $\Lambda_1(u)$ at $u=0$ is given by $\mathbb{E}[T(X)|H_1]$ which is strictly greater than $0$, and $\Lambda_1(0)=0$.

\end{proof}

Next, we examine the properties of the log-generating functions for likelihood ratio detectors. Consider the likelihood ratio detector $T_0(x)=\log{\frac{P_1(x)}{P_0(x)}}$. The two conditions $\mathbb{E}[T(X)|H_0]< 0$ and $ \mathbb{E}[T(X)|H_1]>0$ become equivalent to $\mathrm{D}(P_0||P_1)>0$ and $\mathrm{D}(P_1||P_0)>0$ where $\mathrm{D}(\cdot||\cdot)$ denotes the Kullback-Leibler (KL) divergence between the two distributions $P_0(x)$ and $P_1(x)$. Thus, a likelihood ratio detector always satisfies these conditions as long as the KL divergences are well-defined and non-zero. 

\begin{propty}(Log-generating functions of likelihood ratio detectors) Let
$T_0(x)=\log{\frac{P_1(x)}{P_0(x)}}$, and $P_0(x)$ and $P_1(x)$ be non-zero for all $x$ with $\mathrm{D}(P_0||P_1)$ and $\mathrm{D}(P_1||P_0)$ strictly greater than $0$. Then, the following properties hold:
\begin{itemize}[itemsep=0pt, topsep=0pt]
\item  $\Lambda_0(u)$ is $0$ at $u=0$ and $1$, and
 $\Lambda_1(u)$ is $0$ at $u=0$ and $-1$. 
\item $\Lambda_1(u)= \Lambda_0(u+1).$
\item $\mathrm{C}(P_0,P_1)>0.$
\item $\Lambda_0(u)$ and $\Lambda_1(u)$ are continuous, differentiable and strictly convex.
\item The derivatives of $\Lambda_0(u)$ and $\Lambda_1(u)$ are continuous, monotonically increasing and take all values between $-\infty$ and $\infty$.
\item $\Lambda_0(u)$ attains its global minima for $u$ in $(0,1)$.
\item $\Lambda_1(u)$ attains its global minima for $u$ in $(-1,0)$.
\end{itemize}
\label{propty:likelihood}
\end{propty}

We first introduce the arithmetic mean-geometric mean (AM-GM) inequality.
\begin{lem}[AM-GM inequality]
The following inequality is satisfied for $u \in (0,1)$ and $a,b \geq 0$:
\begin{equation}
a^{1-u}b^{u} \leq (1-u)a+ub,
\end{equation} 
where the equality holds if and only if $a=b$.\label{lem:AM-GM}
\end{lem}

\begin{proof}[Proof of \Cref{propty:likelihood}]
The first claim can be verified by direct substitution.

To show that $\Lambda_1(u)= \Lambda_0(u+1),$ observe that,
\begin{align*}
&\Lambda_1(u)  = - \log{\sum_x P_1(x)^{1+u} P_0(x)^u}  = -\log{\sum_x P_1(x)^{1+u} P_0(x)^{1-(1+u)}} =\Lambda_0(u+1).
\end{align*}

Next, we will show that $\mathrm{C}(P_0,P_1)>0$. Observe that, $\mathrm{C}(P_0,P_1)=-\log{\sum_x P_0(x)^{1-u^*} P_1(x)^{u^*}}$ for some $u^*\in (0,1)$. Now, there is at least one $x'$ with $P_0(x')>0$ and $P_1(x')>0$ such that $P_0(x')\neq P_1(x')$ because $\mathrm{D}(P_0||P_1)>0$ and $\mathrm{D}(P_1||P_0)>0$. This leads to a strict AM-GM inequality (\Cref{lem:AM-GM}) as follows:
\begin{align*}
P_0(x')^{1-u^*} P_1(x')^{u^*} < (1-u^*)P_0(x') + u^*P_1(x').
\end{align*}

For all other $x\neq x'$, 
\begin{align*}
P_0(x)^{1-u^*} P_1(x)^{u^*} \leq (1-u^*)P_0(x) + u^*P_1(x).
\end{align*}
Thus, 
\begin{align}
&\sum_x P_0(x)^{1-u^*} P_1(x)^{u^*} {<} \sum_x (1-u^*)P_0(x) + u^*P_1(x) =1 \nonumber \\
& \implies -\log{\sum_x P_0(x)^{1-u^*} P_1(x)^{u^*}} >0.
\end{align}
Thus, $\mathrm{C}(P_0,P_1)>0$. A similar proof extends for continuous distributions as well where the strict inequality holds at least over a set of $x'$s that is not measure $0$.

We move on to the next claim. Since both $P_0(x)$ and $P_1(x)$ are strictly greater than $0$ for all $x$, we have $ P_0(x)^{1-u} P_1(x)^{u}$ to be well-defined and continuous for all values of $u$, including $u=0$ and $u=1$. Thus, $\Lambda_0(u)$ is continuous over the range $(-\infty,\infty)$. 

The derivative of $\Lambda_0(u)$ is given by:
\begin{align}
\frac{d\Lambda_0(u)}{d u} = \frac{\sum_x  P_0(x)^{1-u} P_1(x)^{u}\log{\frac{P_1(x)}{P_0(x)}} }{e^{\Lambda_0(u)}}, \label{eq:derivative}
\end{align}
which is well-defined for all values of $u$.

The strict convexity of $\Lambda_0(u)$ can be proved using \Cref{propty:well_behaved}, because the two conditions $\mathbb{E}[T(X)|H_0]< 0$ and $ \mathbb{E}[T(X)|H_1]>0$ become equivalent to $\mathrm{D}(P_0||P_1)>0$ and $\mathrm{D}(P_1||P_0)>0$. A similar proof extends to $\Lambda_1(u)$.

Now, we move on to the next claim. Observe from \eqref{eq:derivative} that, the derivative is also continuous for all values of $u$ since both $P_0(x)$ and $P_1(x)$ are strictly greater than $0$ for all $x$. It is monotonically increasing because $\Lambda_0(u)$ is strictly convex. Also note that, as $u\to -\infty$, its derivative tends to $-\infty$. Similarly, as $u\to \infty$, its derivative tends to $\infty$. A similar proof extends to $\Lambda_1(u)$.

Lastly, because $\Lambda_0(u)$ is $0$ at $u=0$ and $u=1$, and is a continuous and strictly convex function, it attains its minima for $u$ in $(0,1)$. A similar proof extends to $\Lambda_1(u)$, validating the last claim as well.
\end{proof}

\begin{propty}[Connection to FL transforms] For well-behaved detectors, the following properties hold:
\begin{itemize}[itemsep=0pt, topsep=0pt]
\item If $\tau < \mathbb{E}[T(X)|H_1]$, then
$ \sup_{u<0} \left( u\tau- \Lambda_1(u)\right)  = \sup_{u \in \mathbb{R}} \left( u\tau- \Lambda_1(u)\right).$
\item If $\tau > \mathbb{E}[T(X)|H_0]$, then 
$\sup_{u>0} \left( u\tau- \Lambda_0(u)\right)  = \sup_{u\in \mathbb{R}} \left( u\tau- \Lambda_0(u)\right).$ 
\end{itemize}\label{propty:fl}
\end{propty}

Before the proof, we introduce a lemma that will be used in the proof.

\begin{lem}[Supporting line of a strictly convex function] For a strictly convex and differentiable function $f(u):\mathcal{R}\rightarrow \mathcal{R},$
$$ u_a \frac{d f(u)}{d u}|_{u=u_a} {-} f(u_a) {=} \sup_{u \in \mathcal{R}} \left(u\frac{d f(u)}{d u}|_{u=u_a} - f(u) \right).$$
\label{lem:supp_line}
\end{lem}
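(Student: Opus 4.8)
\textbf{Proof proposal for Lemma~\ref{lem:supp_line}.}

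The plan is to read off the claimed identity from the first-order optimality condition for the concave function $g(u) := u\,c - f(u)$, where $c := \frac{d f(u)}{d u}\big|_{u=u_a}$ is the fixed slope. First I would observe that since $f$ is convex, $-f$ is concave, and adding the linear term $u\,c$ preserves concavity; hence $g$ is a concave differentiable function on $\mathcal{R}$, so any stationary point of $g$ is a global maximizer. Computing $g'(u) = c - \frac{d f(u)}{d u}$, I see that $u = u_a$ is a stationary point by the very definition of $c$, so $\sup_{u\in\mathcal{R}} g(u) = g(u_a) = u_a c - f(u_a)$, which is exactly the claimed equality.

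The one subtlety I would want to address carefully is that a stationary point of a merely concave (not strictly concave) function is a global max, but $g$ here is in fact \emph{strictly} concave because $f$ is strictly convex, so $g'$ is strictly decreasing; this guarantees $u_a$ is the \emph{unique} maximizer, though uniqueness is not needed for the statement. To make the "stationary point is a global maximum" step fully rigorous without invoking differentiability lemmas, I would use the supporting-hyperplane characterization of convexity directly: strict convexity and differentiability of $f$ give $f(u) \ge f(u_a) + (u - u_a)\frac{d f(u)}{d u}\big|_{u=u_a} = f(u_a) + (u-u_a)c$ for all $u$, which rearranges to $u\,c - f(u) \le u_a\, c - f(u_a)$ for all $u\in\mathcal{R}$, with equality at $u = u_a$. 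Taking the supremum over $u$ then yields the result immediately.

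There is no real obstacle here; the statement is essentially the Fenchel–Young equality specialized to the point where the dual variable equals the gradient, and the whole argument is a two-line convexity inequality. The only thing to be mindful of is that the lemma as stated requires $f$ to be defined and strictly convex on all of $\mathcal{R}$ (so that the supporting line exists globally and the supremum is genuinely over $\mathbb{R}$); when it is applied to $\Lambda_0$ and $\Lambda_1$ via \Cref{propty:fl}, one should check those hypotheses hold, which they do by \Cref{propty:likelihood}. I would therefore present the proof as: (i) note $f(u) \ge f(u_a) + (u-u_a)\,\frac{d f}{du}\big|_{u=u_a}$ by convexity and differentiability; (ii) rearrange to $u\,\frac{d f}{du}\big|_{u=u_a} - f(u) \le u_a\,\frac{d f}{du}\big|_{u=u_a} - f(u_a)$; (iii) observe the bound is attained at $u=u_a$, hence taking $\sup_{u\in\mathcal{R}}$ gives equality.
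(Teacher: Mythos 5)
Your proof is correct and takes essentially the same route as the paper, which simply invokes the definition of (strict) convexity; your supporting-hyperplane argument $f(u)\ge f(u_a)+(u-u_a)\,f'(u_a)$, rearranged to $u\,f'(u_a)-f(u)\le u_a\,f'(u_a)-f(u_a)$ with equality at $u=u_a$, is exactly the content being appealed to, just spelled out. One small clarification: you invoke strict convexity for this inequality, but ordinary convexity already gives the $\ge$ bound and hence the lemma; strictness only buys uniqueness of the maximizer, which, as you note, is not needed here.
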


The proof of \Cref{lem:supp_line} holds from the definition of strict convexity.

\begin{proof}[Proof of \Cref{propty:fl}]
In general, $\sup_{u \in \mathcal{R}} \left(u\tau - \Lambda_1(u) \right) \geq \sup_{u <0} \left(u\tau - \Lambda_1(u) \right).$ But, here again,

\begin{align}
\sup_{u \in \mathcal{R}} \left(u\tau - \Lambda_1(u) \right) 
&\overset{(a)}{=}\sup_{u \in \mathcal{R}} \left(u\frac{d \Lambda_1(u)}{d u}|_{u=u_a} - \Lambda_1(u) \right)\overset{(b)}{=} u_a \frac{d \Lambda_1(u)}{d u}|_{u=u_a} {-} \Lambda_1(u_a) \nonumber\\
& \overset{(c)}{\leq} \sup_{u <0} \left(u\frac{d \Lambda_1(u)}{d u}|_{u=u_a}  - \Lambda_1(u) \right)  \overset{(d)}{=}\sup_{u <0} \left(u\tau - \Lambda_1(u) \right).
\end{align}
Here (a) holds because the derivative of $\Lambda_1(u)$ is continuous, monotonically increasing and takes all values from $(-\infty,\infty)$ (see \Cref{propty:likelihood}). Thus, for any $\tau$, there exists a single $u_a$ such that $\frac{d \Lambda_1(u)}{d u}|_{u=u_a} =\tau$. Next, (b) holds from \Cref{lem:supp_line}, whereas (c) holds because $\frac{d \Lambda_1(u)}{d u}|_{u=u_a} =\tau < \mathbb{E}[T(X)|H_1]=\frac{d \Lambda_1(u)}{d u}|_{u=0}$ and the derivative is monotonically increasing (see \Cref{propty:likelihood}) implying $u_a<0$. Lastly (d) holds by again substituting $\tau=\frac{d \Lambda_1(u)}{d u}|_{u=u_a}$. This proves the first claim.

Similarly, in general, we have $\sup_{u \in \mathcal{R}} \left(u\tau - \Lambda_0(u) \right) \geq \sup_{u >0} \left(u\tau - \Lambda_0(u) \right).$ But, here again,
\begin{align}
\sup_{u \in \mathcal{R}} \left(u\tau - \Lambda_0(u) \right) 
&\overset{(a)}{=}\sup_{u \in \mathcal{R}} \left(u\frac{d \Lambda_0(u)}{d u}|_{u=u_a} - \Lambda_0(u) \right)\overset{(b)}{=} u_a \frac{d \Lambda_0(u)}{d u}|_{u=u_a} {-} \Lambda_0(u_a) \nonumber\\
& \overset{(c)}{\leq} \sup_{u >0} \left(u\frac{d \Lambda_0(u)}{d u}|_{u=u_a}  - \Lambda_0(u) \right)  \overset{(d)}{=}\sup_{u >0} \left(u\tau - \Lambda_0(u) \right).
\end{align}
Here (a) holds because the derivative of $\Lambda_0(u)$ is continuous, monotonically increasing and takes all values from $(-\infty,\infty)$ (see \Cref{propty:likelihood}). Thus, for any $\tau$, there exists a single $u_a$ such that $\frac{d \Lambda_0(u)}{d u}|_{u=u_a} =\tau$. Next, (b) holds from \Cref{lem:supp_line}, whereas (c) holds because $\frac{d \Lambda_0(u)}{d u}|_{u=u_a} =\tau > \mathbb{E}[T(X)|H_0]=\frac{d \Lambda_0(u)}{d u}|_{u=0}$ and the derivative is monotonically increasing (see \Cref{propty:likelihood}) implying $u_a>0$. Lastly (d) holds by again substituting $\tau=\frac{d \Lambda_0(u)}{d u}|_{u=u_a}$.

\end{proof}

\subsection{Log Generating Functions for Gaussians}
\label{app:example}

Let $P_0(x)\sim\mathcal{N}({\mu_0},\sigma^2\mathbf{I})$ and $P_1(x)\sim\mathcal{N}(\mu_1,\sigma^2\mathbf{I})$, where $\mu_0$ and $\mu_1$ are vectors and $\mathbf{I}$ is an identity matrix. We derive the log-generating functions for likelihood ratio detectors corresponding to these two distributions.

\begin{align}
\Lambda_0(u)=\log{\int P_1(x)^uP_0(x)^{1-u} dx } 
& = \log{ \int e^{\frac{-u}{2\sigma^2}((x-\mu_1)^T(x-\mu_1)-(x-\mu_0)^T(x-\mu_0))}  P_0(x) dx } \nonumber \\
& = \log{  e^{\frac{-u}{2\sigma^2}(\mu_1^T\mu_1-\mu_0^T\mu_0)} \int e^{\frac{-u}{2\sigma^2}(-2x^T(\mu_1-\mu_0)))}  P_0(x) dx } \nonumber    \\
& \overset{(a)}{=} \log{  e^{\frac{-u}{2\sigma^2}(\mu_1^T\mu_1-\mu_0^T\mu_0)} 
e^{\frac{-u}{2\sigma^2}(-2\mu_0^T(\mu_1-\mu_0)))} e^{\frac{u^2}{2\sigma^2}(||\mu_1-\mu_0||^2_2))}   } \nonumber    \\
& = \log{  e^{\frac{-u}{2\sigma^2}(||\mu_1-\mu_0||^2_2)} 
 e^{\frac{u^2}{2\sigma^2}(||\mu_1-\mu_0||^2_2))}   } \nonumber \\
& = \frac{1}{2\sigma^2}||\mu_1-\mu_0||_2^2 u(u-1),
\end{align}
where (a) is derived using the expression of the moment generating function of a Gaussian distribution.

\subsection{Proof of \Cref{lem:separability}}
\label{app:chernoff_information}

\begin{proof}[Proof of \Cref{lem:separability}]

Under equal priors $\pi_0=\pi_1=\frac{1}{2}$, the detector that minimizes the Bayesian probability of error, i.e., $P_{e,T}(\tau)=\pi_0P_{\mathrm{FP},T}(\tau) + \pi_1P_{\mathrm{FN},T}(\tau)$ is the likelihood ratio detector given by $T(x)=\log{\frac{P_1(x)}{P_0(x)}}\geq0$ (for $\pi_0=\pi_1=\frac{1}{2}$). The proof is available in Theorem 3.1 of \cite{gallager2012detection}. 

Here, we will show that the Chernoff exponent of the probability of error for this detector, i.e., $E_{e,T}(0)$ is equal to $\mathrm{C}(P_0,P_1)=-\min_{u \in (0,1)} \log{\sum_{x}P_0(x)^{(1-u)}P_1(x)^u}$. 

Note that, 
\begin{align}
E_{\mathrm{FP},T}(0) & = \sup_{u>0}-\Lambda_0(u)  = -\min_{u \in (0,1)} \log{\sum_{x}P_0(x)^{(1-u)}P_1(x)^u},
\end{align}
where the last step follows because $\Lambda_0(u)$ attains its minima in the range $u \in (0,1)$ (see \Cref{propty:likelihood}).
\begin{align}
E_{\mathrm{FN},T}(0) = \sup_{u<0} - \Lambda_1(u) &\overset{(a)}{=} - \min_{u \in (-1,0)} \log{\sum_{x}P_0(x)^{(-u)}P_1(x)^{(1+u)}}\nonumber \\
& = -\min_{u'=u+1 \in (0,1)} \log{\sum_{x}P_0(x)^{(1-u')}P_1(x)^{(u')},}
\end{align}
where (a) also holds because $\Lambda_1(u)$ attains its minima in the range $u \in (-1,0)$ (see \Cref{propty:likelihood}). Lastly, 
\begin{align}
E_{e,T}(0) = \min \{E_{\mathrm{FP},T}(0),E_{\mathrm{FN},T}(0) \} = \mathrm{C}(P_0,P_1).
\end{align}
\end{proof}

\section{Appendix to \Cref{subsec:limit}}

Before the proofs, we introduce a lemma that will be used in the proofs.

\begin{lem} Let $P_0(x)$ and $P_1(x)$ be non-zero for all $x$ and $\mathrm{D}(P_0||P_1)$ and $\mathrm{D}(P_1||P_0)$ be strictly greater than $0$. For likelihood ratio detectors of the form $T_0(x)=\log{\frac{P_1(x)}{P_0(x)}}\geq \tau_0$, if $\tau_0 \neq 0$, then one of the following statements is true:
 $$E_{\mathrm{FN},T_0}(\tau_0) < \mathrm{C}(P_0,P_1) < E_{\mathrm{FP},T_0}(\tau_0), \text{or  }   E_{\mathrm{FP},T_0}(\tau_0) < \mathrm{C}(P_0,P_1) < E_{\mathrm{FN},T_0}(\tau_0) .$$ 
\label{lem:likelihood}
\end{lem}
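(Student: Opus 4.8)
\textbf{Proof proposal for Lemma~\ref{lem:likelihood}.}

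The plan is to exploit the geometric picture of the Chernoff exponents as y-intercepts of tangent lines to the convex log-generating functions $\Lambda_0(u)$ and $\Lambda_1(u)$, together with the shift identity $\Lambda_1(u)=\Lambda_0(u+1)$ from \Cref{propty:likelihood}. Fix a detector $T_0(x)=\log\frac{P_1(x)}{P_0(x)}\geq\tau_0$. By \Cref{propty:fl} (and since likelihood ratio detectors are well-behaved, as noted before \Cref{propty:likelihood}), for $\tau_0$ in the relevant range we may write $E_{\mathrm{FP},T_0}(\tau_0)=\sup_{u\in\mathbb{R}}(u\tau_0-\Lambda_0(u))=-(u_0\tau_0-\Lambda_0(u_0))$-type expressions; more precisely, by \Cref{lem:supp_line}, if $u_0$ is the unique point with $\Lambda_0'(u_0)=\tau_0$ then $E_{\mathrm{FP},T_0}(\tau_0)=u_0\tau_0-\Lambda_0(u_0)$, i.e.\ the negative y-intercept of the tangent to $\Lambda_0$ of slope $\tau_0$. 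Similarly $E_{\mathrm{FN},T_0}(\tau_0)=u_1\tau_0-\Lambda_1(u_1)$ where $\Lambda_1'(u_1)=\tau_0$; by the shift identity $u_1=u_0-1$, so the tangent to $\Lambda_1$ of slope $\tau_0$ is just the tangent to $\Lambda_0$ of slope $\tau_0$ shifted left by $1$.

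The key computation is then to compare these two y-intercepts as a function of $\tau_0$. Let $g(\tau_0)$ denote the (negative) y-intercept of the slope-$\tau_0$ tangent to $\Lambda_0$, so $E_{\mathrm{FP},T_0}(\tau_0)=g(\tau_0)$ and, using $\Lambda_1(u)=\Lambda_0(u+1)$, one finds $E_{\mathrm{FN},T_0}(\tau_0)=g(\tau_0)-\tau_0$. (Concretely: $E_{\mathrm{FN},T_0}(\tau_0)=u_1\tau_0-\Lambda_1(u_1)=(u_0-1)\tau_0-\Lambda_0(u_0)=g(\tau_0)-\tau_0$.) Hence $E_{\mathrm{FP},T_0}(\tau_0)-E_{\mathrm{FN},T_0}(\tau_0)=\tau_0$, which is strictly positive when $\tau_0>0$ and strictly negative when $\tau_0<0$. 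This already gives the strict ordering between $E_{\mathrm{FP},T_0}(\tau_0)$ and $E_{\mathrm{FN},T_0}(\tau_0)$ dictated by the sign of $\tau_0$; it remains to insert $\mathrm{C}(P_0,P_1)$ strictly between them.

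For that, recall from \Cref{lem:separability} that $\mathrm{C}(P_0,P_1)=E_{\mathrm{FP},T_0}(0)=E_{\mathrm{FN},T_0}(0)=g(0)=-\min_{u\in(0,1)}\Lambda_0(u)$. So I must show $E_{\mathrm{FP},T_0}(\tau_0)>g(0)$ and $E_{\mathrm{FN},T_0}(\tau_0)<g(0)$ when $\tau_0>0$ (and symmetrically for $\tau_0<0$). Since $g(\tau_0)=\sup_{u\in\mathbb{R}}(u\tau_0-\Lambda_0(u))$ is the Legendre transform of the strictly convex $\Lambda_0$, it is itself strictly convex in $\tau_0$, and its derivative at $\tau_0$ equals the maximizing $u_0$; at $\tau_0=0$ the maximizer lies in $(0,1)$ (strictly positive), so $g'(0)=u_0|_{\tau_0=0}\in(0,1)>0$, whence $g$ is strictly increasing near $0$ and, by strict convexity, $g(\tau_0)>g(0)$ for all $\tau_0\neq 0$ — this handles $E_{\mathrm{FP},T_0}(\tau_0)>\mathrm{C}(P_0,P_1)$ for $\tau_0>0$ and, since $E_{\mathrm{FN},T_0}(\tau_0)=g(\tau_0-1)-\text{(const)}$... actually cleaner: for the FNR side, note $E_{\mathrm{FN},T_0}(\tau_0)=\sup_{u<0}(u\tau_0-\Lambda_1(u))$ and by the analogous Legendre/convexity argument its value at $\tau_0=0$ is the maximum, attained at a strictly negative $u_1\in(-1,0)$, so it is strictly decreasing as $\tau_0$ moves positive; thus $E_{\mathrm{FN},T_0}(\tau_0)<\mathrm{C}(P_0,P_1)$ for $\tau_0>0$. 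Combining, $\tau_0>0$ gives $E_{\mathrm{FN},T_0}(\tau_0)<\mathrm{C}(P_0,P_1)<E_{\mathrm{FP},T_0}(\tau_0)$, and the case $\tau_0<0$ is symmetric, swapping the roles of FP and FN. The main obstacle I anticipate is purely bookkeeping: making sure the ranges of $u$ and the sign of the maximizer are handled rigorously (e.g.\ confirming the maximizer of $g$ at $\tau_0=0$ is strictly inside $(0,1)$, not at an endpoint, which uses assumptions A1–A2 via \Cref{propty:likelihood}), rather than any deep inequality — the whole lemma is essentially the observation that the Chernoff information sits at the "balance point" $\tau_0=0$ of a strictly convex trade-off curve.
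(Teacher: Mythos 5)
Your overall picture is the right one — the same one the paper uses: the two Chernoff exponents $E_{\mathrm{FP},T_0}$ and $E_{\mathrm{FN},T_0}$ are the y-intercepts of slope-$\tau_0$ tangents, they coincide with $\mathrm{C}(P_0,P_1)$ at $\tau_0=0$, and moving $\tau_0$ off zero splits them to either side. The identity $E_{\mathrm{FP},T_0}(\tau_0)-E_{\mathrm{FN},T_0}(\tau_0)=\tau_0$ you derive from $\Lambda_1(u)=\Lambda_0(u+1)$ is a nice observation the paper does not state explicitly, and it instantly gives the correct \emph{ordering} between the two exponents. However, two steps in your write-up are actually incorrect as stated, and a third case is silently skipped.

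First, the claim ``$g$ is strictly increasing near $0$ and, by strict convexity, $g(\tau_0)>g(0)$ for all $\tau_0\neq 0$'' is false. A strictly convex $g$ with $g'(0)=u_0|_{\tau_0=0}\in(0,1)>0$ is \emph{not} minimized at $0$; its minimizer is at the $\tau_0$ where the conjugate variable vanishes, namely $\tau_0=\Lambda_0'(0)=-\mathrm{D}(P_0\|P_1)<0$. Hence $g(\tau_0)<g(0)$ for $\tau_0$ slightly negative — which is exactly what the lemma requires for $E_{\mathrm{FP},T_0}$ in the $\tau_0<0$ branch, so your own misstatement would contradict the lemma if pushed to $\tau_0<0$. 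The correct inference from $g'(0)>0$ and convexity is only that $g$ is strictly increasing on $[0,\infty)$, i.e.\ $g(\tau_0)>g(0)$ for $\tau_0>0$; that suffices for the FP side in that case. Second, and analogously, ``$E_{\mathrm{FN},T_0}$ is strictly decreasing as $\tau_0$ moves positive'' holds only on $[0,\mathrm{D}(P_1\|P_0))$; for $\tau_0\geq\mathrm{D}(P_1\|P_0)$ the supremum over $u<0$ degenerates to the boundary value and $E_{\mathrm{FN},T_0}(\tau_0)=0$ is \emph{constant}, not strictly decreasing. Third, your entire tangent/Legendre bookkeeping (including the identity $E_{\mathrm{FP}}-E_{\mathrm{FN}}=\tau_0$) is valid only for $\tau_0\in(-\mathrm{D}(P_0\|P_1),\mathrm{D}(P_1\|P_0))$, via \Cref{propty:fl}. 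You flag ``the relevant range'' but never return to close it, yet the lemma asserts the conclusion for every $\tau_0\neq 0$. The paper handles this by an explicit case split (Case 1: $\tau_0\geq\mathrm{D}(P_1\|P_0)$, where $E_{\mathrm{FN},T_0}(\tau_0)\leq 0<\mathrm{C}(P_0,P_1)$; Case 2: the interior range). Your argument needs the same split, plus the observation that $\mathrm{C}(P_0,P_1)>0$ (established in \Cref{propty:likelihood}) so that the degenerate value $0$ still lies strictly below $\mathrm{C}(P_0,P_1)$. With these three fixes your proof goes through and is essentially the paper's argument phrased through the Legendre-transform lens.
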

\begin{proof}[Proof of \Cref{lem:likelihood}]

Let us analyze the scenario where $\tau_0>0$. Observe that,
\begin{align}
E_{\mathrm{FP}, T_0}(\tau_0)  = \sup_{u>0} (u\tau_0-\Lambda_0(u)) 
& \geq u^*_0\tau_0 -\Lambda_0(u^*_0) && [\text{for any }u_0^*>0]\nonumber \\
& > -\Lambda_0(u_0^*) && [\text{since }u_0^*\tau_0>0]\nonumber \\
&\overset{(a)}{=} \mathrm{C}(P_0,P_1),
\end{align}
where (a) follows if we choose $u_0^*= \arg \min \Lambda_0(u)$ (from \Cref{propty:likelihood}, $\Lambda_0(u)$ attains its minima for some $u \in (0,1)$) and  $\Lambda_0(u_0^*)=-\mathrm{C}(P_0,P_1)$ (by definition).  

Now, we will show that  $E_{\mathrm{FN}, T_0}(\tau_0){<} \mathrm{C}(P_0,P_1)$ when $\tau_0>0$.

\textbf{Case 1:} $\tau_0 \geq \frac{d \Lambda_1(u)}{du}|_{u=0}=\mathrm{D}(P_1||P_0)$
\begin{align}
E_{\mathrm{FN}, T_0}(\tau_0)=\sup_{u<0} (u\tau_0-\Lambda_1(u)) 
& \leq \sup_{u<0} (u \mathrm{D}(P_1||P_0)-\Lambda_1(u)) \ [\text{since }\tau_0 \geq \mathrm{D}(P_1||P_0)]\nonumber \\
& \leq \sup_{u\in \mathcal{R}} (u \mathrm{D}(P_1||P_0)-\Lambda_1(u)) \nonumber \\
&\overset{(a)}{=} (0\cdot \mathrm{D}(P_1||P_0)-\Lambda_1(0)) \overset{(b)}{=} 0 \overset{(c)}{<} \mathrm{C}(P_0,P_1),
\end{align}
where (a) holds from \Cref{lem:supp_line} because $\frac{d \Lambda_1(u)}{du}|_{u=0}=\mathrm{D}(P_1||P_0)$, and (b) and (c) hold from \Cref{propty:likelihood} since $\Lambda_1(0)=0$ and $\mathrm{C}(P_0,P_1)>0$.

\textbf{Case 2:} $0 < \tau_0 <  \frac{d \Lambda_1(u)}{du}|_{u=0}=\mathrm{D}(P_1||P_0)$
\begin{align}
 E_{\mathrm{FN}, T_0}(\tau_0)  = \sup_{u<0} (u\tau_0-\Lambda_1(u)) &\leq \sup_{u\in \mathcal{R}} (u\tau_0-\Lambda_1(u)) \nonumber \\
&\overset{(a)}{=} \sup_{u\in \mathcal{R}}(u \tau_0 -\Lambda_1(u)) \ \ [\text{where $\frac{d \Lambda_1(u)}{du}|_{u=u_a}= \tau_0$}]\nonumber \\
& \overset{(b)}{=}u_a\tau_0 -\Lambda_1(u_a) \ \ \nonumber \\
& \overset{(c)}{<} -\Lambda_1(u_a) \ \ \ \ \ [\text{since  } u_a\tau_0 < 0] \nonumber \\
& \leq -\min_{u}\Lambda_1(u)
\nonumber \\
& \overset{(d)}{=}-\min_{u\in (-1,0)}\Lambda_1(u) =\mathrm{C}(P_0,P_1)
\end{align}
Here, (a) holds because the derivative of $\Lambda_1(u)$ is continuous, monotonically increasing and takes all values from $-\infty$ to $\infty$ (see \Cref{propty:likelihood}). Thus, for any $\tau_0$, there exists a single $u_a$ such that $\frac{d \Lambda_1(u)}{du}|_{u=u_a}= \tau_0$. Next, (b) holds from \Cref{lem:supp_line}, (c) holds because $\frac{d \Lambda_1(u)}{du}|_{u=u_a}= \tau_0 < \frac{d \Lambda_1(u)}{du}|_{u=0}$, and the derivative is monotonically increasing, implying $u_a<0$. Lastly (d) holds because $\Lambda_1(u)$ attains its minima in the range $u \in (-1,0)$ (see \Cref{propty:likelihood}).

Thus, for $\tau_0>0$, we get $E_{\mathrm{FN}, T_0}(\tau_0) < \mathrm{C}(P_0,P_1) < E_{\mathrm{FP}, T_0}(\tau_0) .$

The proof is similar for the scenario where $\tau_0<0$, and leads to $E_{\mathrm{FP},T_0}(\tau_0) < \mathrm{C}(P_0,P_1) < E_{\mathrm{FN},T_0}(\tau_0).$  

\end{proof}

\subsection{Proof of \Cref{lem:trade-off}}

\begin{proof}[Proof of \Cref{lem:trade-off}]
Suppose there exists two likelihood ratio detectors for the two groups such that, $E_{\mathrm{FN},T_0}(\tau_0)=E_{\mathrm{FN},T_1}(\tau_1)$. Since $\mathrm{C}(P_0,P_1)<\mathrm{C}(Q_0,Q_1)$, \emph{at most} one of the two exponents $E_{\mathrm{FN},T_0}(\tau_0)$ and $E_{\mathrm{FN},T_1}(\tau_1)$ can be equal to their corresponding Chernoff information $\mathrm{C}(P_0,P_1)$ or $\mathrm{C}(Q_0,Q_1)$. Without loss of generality, we may assume that $E_{\mathrm{FN},T_0}(\tau_0)\neq \mathrm{C}(P_0,P_1)$. This implies that $\tau_0 \neq 0$ because in the proof of \Cref{lem:separability}, we already showed that when  $\tau_0=0$, we always have $E_{\mathrm{FN},T_0}(0)=E_{\mathrm{FP},T_0}(0)=\mathrm{C}(P_0,P_1).$ Since $\tau_0 \neq 0$, using \Cref{lem:likelihood}, we either have $E_{\mathrm{FN},T_0}(\tau_0) < \mathrm{C}(P_0,P_1) < E_{\mathrm{FP},T_0}(\tau_0) $ or $E_{\mathrm{FP},T_0}(\tau_0) < \mathrm{C}(P_0,P_1) < E_{\mathrm{FN},T_0}(\tau_0) .$ Thus,
\begin{align}
E_{e,T_0}(\tau_0) = \min \{E_{\mathrm{FP},T_0}(\tau_0),E_{\mathrm{FN},T_0}(\tau_0)  \} {<} \mathrm{C}(P_0,P_1).
\end{align}
\end{proof}

\subsection{Proof of \Cref{thm:separability}}
\label{app:fundamental_limits}

\begin{proof}[Proof of \Cref{thm:separability}]
The first claim follows directly from \Cref{lem:separability} by choosing the likelihood ratio detectors for the two groups with thresholds $\tau_0=\tau_1=0$, i.e., the Bayes optimal detector under equal priors.

Now, we prove the second claim. Suppose that we choose the Bayes optimal classifiers $T_0(x)\geq \tau_0$ and $T_1(x)\geq \tau_1$ for the two groups. Then, we have $E_{\mathrm{FN},T_0}(\tau_0)=C(P_0,P_1)$ and $E_{\mathrm{FN},T_1}(\tau_1)=C(Q_0,Q_1)$ which are not equal. Thus, $|E_{\mathrm{FN},T_0}(\tau_0)-E_{\mathrm{FN},T_1}(\tau_1)|\neq 0$. 

Assume (for the sake of contradiction) that there is a likelihood ratio detector such that $E_{e,T_0}(\tau_0)>\mathrm{C}(P_0,P_1)$. 

Now, if $\tau_0=0$, then we have $E_{e,T_0}(\tau_0) = \mathrm{C}(P_0,P_1) $ (from \Cref{lem:separability}). Alternately, if $\tau_0 \neq 0,$  then we either have  $E_{\mathrm{FN}, T_0}(\tau_0) < \mathrm{C}(P_0,P_1) < E_{\mathrm{FP}, T_0}(\tau_0)$ or  $E_{\mathrm{FP},T_0}(\tau_0) < \mathrm{C}(P_0,P_1) < E_{\mathrm{FN},T_0}(\tau_0)$ (from \Cref{lem:likelihood}). Thus, 
\begin{align}
E_{e,T_0}(\tau_0)=\min \{E_{\mathrm{FP},T_0}(\tau_0), E_{\mathrm{FN},T_0}(\tau_0) \} < \mathrm{C}(P_0,P_1).
\end{align}
For both cases, we have a contradiction, implying that $E_{e,T_0}(\tau_0)\leq \mathrm{C}(P_0,P_1)< \mathrm{C}(Q_0,Q_1)$ for all likelihood ratio detectors.
\end{proof}






\subsection{Proofs of \Cref{lem:both_group} and \Cref{lem:one_group}}
\label{app:group_cases}

\begin{proof}[Proof of \Cref{lem:both_group}]
Let $\tau_0^*=0$. Using \Cref{lem:separability}, this ensures, $$E_{\mathrm{FN},T_0}(0)=E_{\mathrm{FP},T_0}(0)=\mathrm{C}(P_0,P_1).$$
Now, we will show that the only value of $\tau_1^*$ that will satisfy $E_{\mathrm{FN},T_1}(\tau_1^*)=E_{\mathrm{FN},T_0}(0)$ is a $\tau_1^*{>}0$ such that $E_{\mathrm{FN},T_1}(\tau_1^*)=\mathrm{C}(P_0,P_1).$ To prove that such a $\tau_1^*$ exists, consider the function: $$g(u)=u\frac{d \Lambda_1(u)}{d (u)}-\Lambda_1(u),$$
where $\Lambda_1(u)$ is the log-generating transform for $z=1.$
The function $g(u)$ is continuous. 
At $u=0$, $g(u)=0$ and at $u=u_1^*$ (where $u_1^*=\arg \min \Lambda_1(u)$ and lies in $(-1,0)$ from \Cref{propty:likelihood}) we have $g(u)=\mathrm{C}(Q_0,Q_1)$. Because $g(u)$ is continuous, there exists a $u_a \in (u_1^*,0)$ such that $g(u_a)=\mathrm{C}(P_0,P_1)$ which lies between $0$ and $\mathrm{C}(Q_0,Q_1)$. If we set $\tau_1^*=\frac{d \Lambda_1(u)}{d (u)}|_{u=u_a},$ we have, $$\mathrm{C}(P_0,P_1)=g(u_a)\overset{\text{\Cref{lem:supp_line}}}{=}\sup_{u\in \mathcal{R}}(u \tau_1^* -\Lambda_1(u)).$$ 

Now, in general, $\sup_{u<0}(u \tau_1^* -\Lambda_1(u)) \leq \sup_{u\in \mathcal{R}}(u \tau_1^* -\Lambda_1(u)) =g(u_a)$. But again, $\sup_{u<0}(u \tau_1^* -\Lambda_1(u))\geq u_a \tau_1^* -\Lambda_1(u_a)=g(u_a)$ since $u_a \in (u_1^*,0)$. Thus, 
$$E_{\mathrm{FN},T_0}(\tau_1^*)=\sup_{u<0}(u \tau_1^* -\Lambda_1(u))=g(u_a)=\mathrm{C}(P_0,P_1).$$ 

Also note that $\tau_1^*>0$ because the derivative of $\Lambda_1(u)$ is monotonically increasing and $u_a>u_1^*$, leading to $ \tau_1^*=\frac{d \Lambda_1(u)}{d (u)}|_{u=u_a}>\frac{d \Lambda_1(u)}{d (u)}|_{u=u_1^*} =0$.

Now that we have a $\tau_1^*$ such that $E_{\mathrm{FN},T_1}(\tau_1^*)=\mathrm{C}(P_0,P_1)$ which is strictly less that $\mathrm{C}(Q_0,Q_1)$, we must have $E_{\mathrm{FP},T_1}(\tau_1^*)> \mathrm{C}(Q_0,Q_1)$ (from \Cref{lem:likelihood}).

This leads to, $$\min\{ E_{\mathrm{FP},T_0}(0),E_{\mathrm{FN},T_0}(0), E_{\mathrm{FP},T_1}(\tau^*_1),E_{\mathrm{FN},T_1}(\tau^*_1)\}{=} \mathrm{C}(P_0,P_1).$$ 

For any other choice of $\tau_0^*\neq0$, we either have $E_{\mathrm{FP},T_0}(\tau_0^*)< \mathrm{C}(P_0,P_1)< E_{\mathrm{FN},T_0}(\tau_0^*),$ or $E_{\mathrm{FN},T_0}(\tau_0^*)< \mathrm{C}(P_0,P_1)< E_{\mathrm{FP},T_0}(\tau_0^*),$ implying $$\min\{ E_{\mathrm{FP},T_0}(\tau_0^*),E_{\mathrm{FN},T_0}(\tau_0^*), E_{\mathrm{FP},T_1}(\tau^*_1),E_{\mathrm{FN},T_1}(\tau^*_1)\}{<} \mathrm{C}(P_0,P_1).$$ 
\end{proof}

\begin{proof}[Proof of \Cref{lem:one_group}]
We are given that,
$$E_{\mathrm{FN},T_1}(\tau_1)=E_{\mathrm{FP},T_1}(\tau_1)=\mathrm{C}(Q_0,Q_1).$$
Now, we will show that the only value of $\tau_0^*$ that will satisfy $E_{\mathrm{FN},T_0}(\tau_0^*)=\mathrm{C}(Q_0,Q_1)$ is a $\tau_0^*<0.$  To prove that such a $\tau_0^*$ exists, consider the function $$g(u)=u\frac{d \Lambda_1(u)}{d (u)}-\Lambda_1(u),$$
where $\Lambda_1(u)$ is the log-generating transform for $z=0.$
The function $g(u)$ is continuous. 
At $u=u_1^*$ (where $u_1^*=\arg \min \Lambda_1(u)$ and lies in $(-1,0)$ from \Cref{propty:likelihood}), we have $g(u_1^*)=\mathrm{C}(P_0,P_1)$ and as $u\to -\infty$, we have $g(u) \to \infty$. Because $g(u)$ is continuous, there exists a $u_a \in (-\infty,u_1^*)$ such that $g(u_a)=\mathrm{C}(Q_0,Q_1)$ which lies between $\mathrm{C}(P_0,P_1)$ and $\infty$. If we set $\tau_0^*=\frac{d \Lambda_1(u)}{d (u)}|_{u=u_a},$ we have, $$\mathrm{C}(Q_0,Q_1)=g(u_a)\overset{ \text{\Cref{lem:supp_line}}}{=}\sup_{u\in \mathcal{R}}(u \tau_0^* -\Lambda_1(u)).$$ 

Now, in general, $\sup_{u<0}(u \tau_0^* -\Lambda_1(u)) \leq \sup_{u\in \mathcal{R}}(u \tau_0^* -\Lambda_1(u))= g(u_a)$. But again, $\sup_{u<0}(u \tau_0^* -\Lambda_1(u)) \geq u_a \tau_0^* -\Lambda_1(u_a)=g(u_a)$ since $u_a<u_1^*<0.$ Thus,
$$E_{\mathrm{FN},T_0}(\tau_0^*)=\sup_{u<0}(u \tau_0^* -\Lambda_1(u))=g(u_a)=\mathrm{C}(Q_0,Q_1).$$ 

This $\tau_0^*$ is less than $0$ because the derivative of $\Lambda_1(u)$ is monotonically increasing and $u_a<u_1^*$, leading to $\tau_0^*= \frac{\Lambda_1(u)}{d (u)}|_{u=u_a}< \frac{\Lambda_1(u)}{d (u)}|_{u=u_1^*}=0$.

Now that we have a $\tau_0^*$ such that $E_{\mathrm{FN},T_0}(\tau_0^*)=\mathrm{C}(Q_0,Q_1)$ which is strictly greater that $\mathrm{C}(P_0,P_1)$, we must have $E_{\mathrm{FP},T_0}(\tau_0^*)< \mathrm{C}(P_0,P_1)$ (from \Cref{lem:likelihood}).

This leads to, $$\min\{ E_{\mathrm{FP},T_0}(\tau^*_0),E_{\mathrm{FN},T_0}(\tau_0^*)\} <\mathrm{C}(P_0,P_1).$$ 

\end{proof}

\section{Appendix to Section~\ref{subsec:accord}}
\label{app:mismatched}

\begin{proof}[Proof of \Cref{thm:feasibility}]
From \Cref{lem:one_group}, there exists a likelihood ratio detector of the form $T_0(x)=\log{\frac{P_1(x)}{P_0(x)}}\geq\tau_0^*$ such that 
\begin{equation}
E_{\mathrm{FN},T_0}(\tau_0^*)=\mathrm{C}(Q_0,Q_1). \label{eq:exponent}
\end{equation}
In the proof of \Cref{lem:one_group}, we showed that this $\tau_0^*<0.$

Now, we will show that there exists $\widetilde{P}_0(x)$ and $\widetilde{P}_1(x)$ such that their optimal detector $\widetilde{T_0}(x)=\log{\frac{\widetilde{P}_1(x)}{\widetilde{P}_0(x)}}\geq0$ is equivalent to the detector $T_0(x)\geq\tau_0^*.$

Let $\widetilde{P}_0(x)=\frac{P_0(x)^{(1-w)}P_1(x)^w }{\sum_x P_0(x)^{(1-w)}P_1(x)^w }$  and $\widetilde{P}_1(x)=\frac{P_0(x)^{(1-v)}P_1(x)^v }{\sum_x P_0(x)^{(1-v)}P_1(x)^v }$ for some $w,v\in \mathcal{R}$ with $w\neq v$. Observe that,

\begin{align}
 \widetilde{T_0}(x)=\log{\frac{\widetilde{P}_1(x)}{\widetilde{P}_0(x)}} &= (v-w)\log{\frac{P_1(x)}{P_0(x)}} + \log{\frac{\sum_x P_0(x)^{(1-w)}P_1(x)^w }{\sum_x P_0(x)^{(1-v)}P_1(x)^v }} \nonumber \\
& = (v-w)\log{\frac{P_1(x)}{P_0(x)}} + \Lambda_0(w)-\Lambda_0(v) \nonumber \\
& =(v-w)\left(\log{\frac{P_1(x)}{P_0(x)}} - \frac{\Lambda_0(v)-\Lambda_0(w)}{v-w}  \right).
\end{align}

Because $\Lambda_0(u)$ is strictly convex with its derivative taking all values from $-\infty$ to $\infty$, one can always find a tangent to $\Lambda_0(u)$ that has a slope $\tau_0^*$ at (say) $u=u_a$. Thus, one can always find pairs of points $(w,v)$ on either sides of $u=u_a$ such that $\tau_0^*=\frac{\Lambda_0(v)-\Lambda_0(w)}{v-w},$ which are essentially pairs of points $(w,v)$ at which a straight line with slope $\tau_0^*$ cuts $\Lambda_0(u)$. In particular,  we can fix $v=1$ and always find a $w<0$ such that 
\begin{equation}
\tau_0^*=\frac{\Lambda_0(v)-\Lambda_0(w)}{v-w} = \frac{-\Lambda_0(w)}{1-w}, \label{eq:substitution}
\end{equation}
because $\Lambda_0(u)$ is continuous taking values $0$ at $u=0$ and $u=1$, and takes all values from $(0,\infty)$ in the range $(-\infty,0)$. Thus, the first claim is proved.

Now, we calculate $\mathrm{C}(\widetilde{P}_0,\widetilde{P}_1)$. 
\allowdisplaybreaks
\begin{align}
\mathrm{C}(\widetilde{P}_0,\widetilde{P}_1)
 = \max_{u\in(0,1)} -\log{\sum_x\widetilde{P}_0(x)^{1-u}\widetilde{P}_1(x)^u } 
& \overset{(a)}{=} \max_{u\in \mathcal{R}} -\log{\sum_x\widetilde{P}_0(x)^{1-u}\widetilde{P}_1(x)^u } \nonumber \\
& \overset{(b)}{=} \max_{u\in \mathcal{R}} -\log{\sum_xP_0(x)^{(1-w)(1-u)}P_1(x)^{w(1-u)+u}} + (1-u)\Lambda_0(w) \nonumber \\
& \overset{(c)}{=}\max_{u\in \mathcal{R}} -\log{\sum_xP_0(x)^{(1-w)(1-u)}P_1(x)^{w(1-u)+u}} + (1-u)(w-1)\tau_0^* \nonumber \\
& \overset{(d)}{=}\max_{u\in \mathcal{R}} (1-u)(w-1)\tau_0^*-\Lambda_1((1-u)(w-1)) \nonumber \\
& \overset{(e)}{=} \sup_{u'\in \mathcal{R}} (u'\tau_0^*-\Lambda_1(u')) \ \ [u'=(1-u)(w-1)] \nonumber \\
& \overset{(f)}{=} \sup_{u'<0} (u'\tau_0^*-\Lambda_1(u')) \ \ [u'=(1-u)(w-1)] \nonumber \\
& \overset{(g)}{=} \mathrm{C}(Q_0,Q_1).
\end{align}
Here (a) holds because the log-generating function $-\log{\sum_x\widetilde{P}_0(x)^{1-u}\widetilde{P}_1(x)^u }$ of a likelihood ratio detector attains its global minima at $(0,1)$ (see \Cref{propty:likelihood}) and (b) holds by substituting $\widetilde{P}_0(x)=\frac{P_0(x)^{(1-w)}P_1(x)^w }{\sum_x P_0(x)^{(1-w)}P_1(x)^w }$  and $\widetilde{P}_1(x)=\frac{P_0(x)^{(1-v)}P_1(x)^v }{\sum_x P_0(x)^{(1-v)}P_1(x)^v }$ with $v=1$. Next, (c) holds by using $\tau_0^*=\frac{\Lambda_0(v)-\Lambda_0(w)}{v-w} = \frac{-\Lambda_0(w)}{1-w}$ (see \eqref{eq:substitution}), (d) holds from the definition of $\Lambda_1((1-u)(w-1))$, (e) holds by a change of variable $u'=(1-u)(w-1),$ (f) holds because $\tau_0^*<0 \leq \mathrm{D}(\widetilde{P}_1||\widetilde{P}_0)=\mathbb{E}[\widetilde{T_0}(X)|\widetilde{H_1}] $ and the detector is well-behaved (see \Cref{propty:fl}), and lastly $(g)$ holds because $E_{\mathrm{FN},T_0}(\tau_0^*)=\mathrm{C}(Q_0,Q_1)$ (see \eqref{eq:exponent}).
\end{proof}

\section{Appendix to Section~\ref{subsec:explainability}}
\label{app:explainability}

\subsection{Proof of \Cref{thm:explainability}}

\begin{proof}[Proof of \Cref{thm:explainability}]

We remind the readers that,
\begin{align}
 \frac{W_0(x,x')}{P_0(x)}=\Pr{(X'=x'|X=x,Z=0,Y=0)}, \text{ and }
\frac{W_1(x,x')}{P_1(x)}=\Pr{(X'=x'|X=x,Z=0,Y=1)}.
\end{align}

First, we would like to prove: $I(X';Y|X,Z=0)> 0 \implies \mathrm{C}(W_0,W_1)>\mathrm{C}(P_0,P_1).$

Suppose that $X'$ is not independent of $Y$ given $X$ and $Z=0$, i.e., $I(X';Y|X,Z=0)> 0.$ This implies that there exists at least one $X=x_a$ such that the distributions of $X'|_{X=x_a,Z=0,Y=0}$ and $X'|_{X=x_a,Z=0,Y=1}$ are different. Therefore, there exists at least one pair $(x',x)=(x_a',x_a)$ for which the following AM-GM inequality (\Cref{lem:AM-GM}) holds with strict inequality for all $u \in (0,1)$, i.e,
\begin{align}
& \left(\frac{W_0(x_a,x_a')}{P_0(x_a)}\right)^{1-u}\left(\frac{W_1(x_a,x_a')}{P_1(x_a)}\right)^{u} 
< (1-u)\frac{W_0(x_a,x_a')}{P_0(x_a)}+u \frac{W_1(x_a,x_a')}{P_1(x_a)}.\label{eq:1}
\end{align}
For all other $(x',x)\neq (x_a',x_a)$, we have (from the AM-GM inequality in \Cref{lem:AM-GM}):
\begin{align}
 \left(\frac{W_0(x,x')}{P_0(x)}\right)^{1-u}\left(\frac{W_1(x,x')}{P_1(x)}\right)^{u} \leq (1-u)\frac{W_0(x,x')}{P_0(x)}+u \frac{W_1(x,x')}{P_1(x)}. \label{eq:2}
\end{align}

Using \eqref{eq:1} and \eqref{eq:2},
\begin{align}
\sum_{x'}\left(\frac{W_0(x_a,x')}{P_0(x_a)}\right)^{1-u}\left(\frac{W_1(x_a,x')}{P_1(x_a)}\right)^{u} 
& < \sum_{x'}\left((1-u)\frac{W_0(x_a,x')}{P_0(x_a)}+u \frac{W_1(x_a,x')}{P_1(x_a)}\right)=1.
\end{align}

This leads to,
\begin{align}
\sum_{x'}W_0(x_a,x')^{1-u}W_1(x_a,x')^u <  P_0(x_a)^{1-u}P_1(x_a)^u. \label{eq:3}
\end{align}

For all other $x\neq x_a$, we have (using \eqref{eq:2} alone),
\begin{align}
\sum_{x'}\big(\frac{W_0(x,x')}{P_0(x)}\big)^{1-u}\big(\frac{W_1(x,x')}{P_1(x)}\big)^{u}  \leq \sum_{x'}\big((1-u)\frac{W_0(x,x')}{P_0(x)}+u \frac{W_1(x,x')}{P_1(x)}\big)=1,
\end{align}
leading to
\begin{align}
\sum_{x'}W_0(x,x')^{1-u}W_1(x,x')^u \leq  P_0(x)^{1-u}P_1(x)^u. \label{eq:4}
\end{align}

Lastly, using \eqref{eq:3} and \eqref{eq:4},
\begin{align}
\sum_{x}\sum_{x'}W_0(x,x')^{1-u}W_1(x,x')^u < \sum_{x} P_0(x)^{1-u}P_1(x)^u,
\end{align}
leading to the claim:
\begin{align}
\mathrm{C}(W_0,W_1)  =- \min_{u \in (0,1)} \log{\sum_{x}\sum_{x'}W_0(x,x')^{1-u}W_1(x,x')^u} 
& > - \min_{u \in (0,1)} \log{\sum_{x} P_0(x)^{1-u}P_1(x)^u}  = \mathrm{C}(P_0,P_1).
\end{align}

We would now like to prove:\\ $\mathrm{C}(W_0,W_1)>\mathrm{C}(P_0,P_1) \implies I(X';Y|X,Z=0)> 0,$
or, $I(X';Y|X,Z=0){=}0 {\implies} \mathrm{C}(W_0,W_1) {\ngtr} \mathrm{C}(P_0,P_1).$

First note that, from the previous proof, $\mathrm{C}(W_0,W_1)\geq \mathrm{C}(P_0,P_1) $ always holds using the AM-GM inequality. Thus, $\mathrm{C}(W_0,W_1) {\ngtr} \mathrm{C}(P_0,P_1)$ is same as $\mathrm{C}(W_0,W_1) {=} \mathrm{C}(P_0,P_1).$

Suppose that $X'$ is independent of $Y$ given $X$ and $Z=0$, i.e., $I(X';Y|X,Z=0)=0$. This implies that,
\begin{align}
&  \Pr(X'=x'|X,Z=0,Y=0) =\Pr(X'=x'|X,Z=0,Y=1) \ \forall x'\nonumber \\
& \Rightarrow \frac{W_0(x,x')}{P_0(x)}= \frac{W_1(x,x')}{P_1(x)}\ \ \ \ \forall x',x\nonumber \\
& \Rightarrow \sum_{x'}\big(\frac{W_0(x,x')}{P_0(x)}\big)^{1-u}\big(\frac{W_1(x,x')}{P_1(x)}\big)^{u} = 1 \ \forall x \ \nonumber \\
& \Rightarrow \sum_{x}\sum_{x'}W_0(x,x')^{1-u}W_1(x,x')^u = \sum_{x} P_0(x)^{1-u}P_1(x)^u.
\end{align}

This leads to
\begin{align}
\mathrm{C}(W_0,W_1) =- \min_{u \in (0,1)} \log{\sum_{x}\sum_{x'}W_0(x,x')^{1-u}W_1(x,x')^u} 
& = - \min_{u \in (0,1)} \log{\sum_{x} P_0(x)^{1-u}P_1(x)^u} = \mathrm{C}(P_0,P_1).
\end{align}

\end{proof}

\section{Unequal Priors}
\label{app:unequal}

\subsection{Unequal Priors on $Y$ but Equal Priors on $Z$}
\label{app:unequal_y}

When the prior probabilities are unequal, we can write $P_{e,T_z}(\tau_z)$ as: $$
P_{e,T_z}(\tau_z){=}\frac{1}{2}(2\pi_0 P_{\mathrm{FP},T_z}(\tau_z)){+}\frac{1}{2}(2\pi_1P_{\mathrm{FN},T_z}(\tau_z))
,$$ and define the Chernoff exponent of $P_{e,T_z}(\tau_z)$, i.e., ${E_{e,T_z}(\tau_z)}$ more generally as follows:
\begin{align*}\min \{ E_{\mathrm{FP},T_z}(\tau_z){-}\log{2\pi_0}, E_{\mathrm{FN},T_z}(\tau_z){-}\log{2\pi_1} \}. 
\end{align*}

\begin{lem}Let the absolute continuity and distinct hypotheses assumptions of \Cref{sec:preliminaries} hold, and $T_z(x)$ be the likelihood ratio detector for the group $Z=z$. 
Then, the value of $\tau_z$ that maximizes  $E_{e,T_z}(\tau_z)$, i.e., \begin{align*}\max_{\tau_z}\ \min \{  E_{\mathrm{FP},T_z}(\tau_z)-\log{2\pi_0}, E_{\mathrm{FN},T_z}(\tau_z)-\log{2\pi_1} \}, \end{align*} is given by $\tau_z^*=\log{\frac{\pi_0}{\pi_1}},$ which is the same as the value of $\tau_z$ that minimizes $P_{e,T_z}(\tau_z),$ i.e.,
\begin{align*}\min_{\tau_z}  \pi_0 P_{\mathrm{FP},T_z}(\tau_z)+\pi_1P_{\mathrm{FN},T_z}(\tau_z). \end{align*}\label{lem:unequal_priors}  
\end{lem}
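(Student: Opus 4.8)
The plan is to treat the two assertions separately. The claim that $\tau_z^\ast=\log\tfrac{\pi_0}{\pi_1}$ minimizes $P_{e,T_z}(\tau_z)=\pi_0 P_{\mathrm{FP},T_z}(\tau_z)+\pi_1 P_{\mathrm{FN},T_z}(\tau_z)$ is just the Bayes rule: since $T_z(x)=\log\tfrac{P_1(x)}{P_0(x)}$ is a log-likelihood ratio, the detector minimizing the weighted error declares $Y=1$ exactly when $\pi_1 P_1(x)\ge\pi_0 P_0(x)$, i.e.\ when $T_z(x)\ge\log\tfrac{\pi_0}{\pi_1}$; this is the general-prior version of Theorem~3.1 of \cite{gallager2012detection} invoked in the proof of \Cref{lem:separability}. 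So it remains to show that this same $\tau_z^\ast$ maximizes the Chernoff exponent $E_{e,T_z}(\tau_z)=\min\{f(\tau_z),g(\tau_z)\}$, where $f(\tau):=E_{\mathrm{FP},T_z}(\tau)-\log 2\pi_0$ and $g(\tau):=E_{\mathrm{FN},T_z}(\tau)-\log 2\pi_1$.

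First I would record two structural facts. (i) \emph{Monotonicity}: by \Cref{defn:chernoff}, $E_{\mathrm{FP},T_z}(\tau)=\sup_{u>0}(u\tau-\Lambda_0(u))$ is a supremum of affine functions of $\tau$ with nonnegative slopes, hence nondecreasing in $\tau$ (continuous, equal to $0$ for $\tau\le-\mathrm{D}(P_0\|P_1)$); likewise $E_{\mathrm{FN},T_z}(\tau)=\sup_{u<0}(u\tau-\Lambda_1(u))$ is nonincreasing; and both are strictly monotone on $I=(\,\mathbb{E}[T_z(X)|Y{=}0,Z{=}z],\ \mathbb{E}[T_z(X)|Y{=}1,Z{=}z]\,)=(-\mathrm{D}(P_0\|P_1),\ \mathrm{D}(P_1\|P_0))$ by strict convexity of $\Lambda_0,\Lambda_1$ for a likelihood ratio detector (\Cref{propty:likelihood}). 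Therefore $\min\{f,g\}$ is maximized at any crossing point $\tau^\dagger$ with $f(\tau^\dagger)=g(\tau^\dagger)$: for $\tau\le\tau^\dagger$, $\min\{f(\tau),g(\tau)\}\le f(\tau)\le f(\tau^\dagger)$, and for $\tau\ge\tau^\dagger$, $\min\{f(\tau),g(\tau)\}\le g(\tau)\le g(\tau^\dagger)=f(\tau^\dagger)$; such a $\tau^\dagger$ exists since $f,g$ are continuous with $f\to-\log 2\pi_0$, $g\to+\infty$ as $\tau\to-\infty$ and $f\to+\infty$, $g\to-\log 2\pi_1$ as $\tau\to+\infty$. (ii) \emph{An exact relation on $I$}: by \Cref{propty:fl}, for $\tau\in I$ both suprema may be taken over all of $\mathbb{R}$, so using $\Lambda_1(u)=\Lambda_0(u+1)$ (\Cref{propty:likelihood}) and $v=u+1$,
\begin{equation}
E_{\mathrm{FN},T_z}(\tau)=\sup_{v\in\mathbb{R}}\big((v-1)\tau-\Lambda_0(v)\big)=E_{\mathrm{FP},T_z}(\tau)-\tau.\nonumber
\end{equation}

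Combining (i) and (ii): on $I$, $f(\tau)=g(\tau)$ is equivalent to $E_{\mathrm{FP},T_z}(\tau)-\log 2\pi_0=E_{\mathrm{FP},T_z}(\tau)-\tau-\log 2\pi_1$, i.e.\ to $\tau=\log 2\pi_0-\log 2\pi_1=\log\tfrac{\pi_0}{\pi_1}$. Hence, provided $\log\tfrac{\pi_0}{\pi_1}\in I$, it is the unique crossing point and, by the min-of-monotone-functions argument, the unique maximizer of $E_{e,T_z}$; this coincides with $\tau_z^\ast$ from the Bayes part, completing the argument.

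The step I expect to be the main obstacle is the degenerate regime of very skewed priors, where $\log\tfrac{\pi_0}{\pi_1}\notin I$: then on part of the line one of $E_{\mathrm{FP},T_z},E_{\mathrm{FN},T_z}$ ``saturates'' (its constrained supremum is attained only as $u\to 0^{\mp}$ and equals $0$), so $\min\{f,g\}$ has a plateau rather than a single peak and $\tau_z^\ast$ is \emph{a} maximizer but not the unique one. I would handle this with the universal bounds $E_{\mathrm{FP},T_z}(\tau)\ge\tau$ and $E_{\mathrm{FN},T_z}(\tau)\ge-\tau$ (plug $u=1$ and $u=-1$ into \Cref{defn:chernoff} and use $\Lambda_0(1)=\Lambda_1(-1)=0$ from \Cref{propty:likelihood}): e.g.\ if $\log\tfrac{\pi_0}{\pi_1}\ge\mathrm{D}(P_1\|P_0)$ then $g\equiv-\log 2\pi_1$ on $[\mathrm{D}(P_1\|P_0),\infty)$, while $f(\log\tfrac{\pi_0}{\pi_1})=E_{\mathrm{FP},T_z}(\log\tfrac{\pi_0}{\pi_1})-\log 2\pi_0\ge\log\tfrac{\pi_0}{\pi_1}-\log 2\pi_0=-\log 2\pi_1$, so $E_{e,T_z}(\log\tfrac{\pi_0}{\pi_1})=-\log 2\pi_1$ equals the plateau value and is the maximum; the symmetric bound covers $\log\tfrac{\pi_0}{\pi_1}\le-\mathrm{D}(P_0\|P_1)$. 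This is a routine but fiddly case analysis, and the statement is read as exhibiting $\tau_z^\ast$ as simultaneously optimal for both objectives; everything else is convex-conjugate bookkeeping.
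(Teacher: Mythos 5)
Your proposal is correct, and it takes a genuinely different route from the paper's. The paper absorbs the prior ratio into shifted log-generating functions $\widetilde{\Lambda}_0(u)=\Lambda_0(u)-u\log\tfrac{\pi_0}{\pi_1}+\log\pi_0$ and $\widetilde{\Lambda}_1(u)=\Lambda_1(u)-u\log\tfrac{\pi_0}{\pi_1}+\log\pi_1$, proves a companion lemma that these inherit the structure of \Cref{propty:likelihood} (including $\widetilde{\Lambda}_1(u)=\widetilde{\Lambda}_0(u+1)$), and then does an explicit two-case computation according to the sign of $\frac{d\widetilde\Lambda_1}{du}\big|_{u=0}=\mathrm{D}(P_1\|P_0)-\log\tfrac{\pi_0}{\pi_1}$, with each case split into two sub-cases for $\tau'\gtrless 0$. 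Your argument instead works directly with the original exponents and reduces the problem to two structural observations: (a) $E_{\mathrm{FP},T_z}$ is nondecreasing and $E_{\mathrm{FN},T_z}$ is nonincreasing in $\tau$, so the min is maximized at their crossing; and (b) on the interior interval $I$ the identity $E_{\mathrm{FN},T_z}(\tau)=E_{\mathrm{FP},T_z}(\tau)-\tau$ (obtained from \Cref{propty:fl} plus $\Lambda_1(u)=\Lambda_0(u+1)$) pins the crossing to $\tau=\log\tfrac{\pi_0}{\pi_1}$. What your route buys is that the ``generic'' case becomes a one-line computation rather than two sub-case estimates, and the identity in (b) makes transparent exactly why the Chernoff-exponent optimizer and the Bayes optimizer coincide. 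What the paper's route buys is a uniform treatment: once $\widetilde{\Lambda}_0,\widetilde{\Lambda}_1$ are set up, the degenerate (very skewed prior) case is the same kind of tangent-line estimate as the generic one, whereas you have to switch to the universal bounds $E_{\mathrm{FP}}(\tau)\ge\tau$, $E_{\mathrm{FN}}(\tau)\ge-\tau$ and argue separately about the plateau; you correctly identify this as the fiddly step, and your sketch of it is sound (the crossing then necessarily lies on the plateau, where the crossing value is $-\log 2\pi_1$, matching what $\tau_z^\ast$ achieves), though you stop short of writing out that the plateau value is indeed the global max, which follows from your monotonicity observation (a).
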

This likelihood ratio detector $T_z(x){\geq} \log{\frac{\pi_0}{\pi_1}}$ is the Bayes optimal detector for the group. 

Before we proceed to the proof, we discuss another result. Observe that,
\begin{align}
u\tau_0 -\Lambda_0(u)-\log{2\pi_0} 
 = u(\tau_0-\log{\frac{\pi_0}{\pi_1}}) + u \log{\frac{\pi_0}{\pi_1}} -\Lambda_0(u)-\log{2\pi_0} = u\tau' -\widetilde{\Lambda}_0(u)-\log{2},\label{eq:uneq0}
\end{align}
where $\tau'=\tau_0-\log{\frac{\pi_0}{\pi_1}}$, and $\widetilde{\Lambda}_0(u)=\Lambda_0(u)- u \log{\frac{\pi_0}{\pi_1}} +\log{\pi_0}$. Similarly,
\begin{align}
u\tau_0 -\Lambda_1(u)-\log{2\pi_1}  = u(\tau_0-\log{\frac{\pi_0}{\pi_1}}) + u \log{\frac{\pi_0}{\pi_1}} -\Lambda_1(u)-\log{2\pi_1}  = u\tau' -\widetilde{\Lambda}_1(u)-\log{2},\label{eq:uneq1}
\end{align}
where $\tau'=\tau_0-\log{\frac{\pi_0}{\pi_1}}$, and $\widetilde{\Lambda}_1(u)=\Lambda_1(u)- u \log{\frac{\pi_0}{\pi_1}} +\log{\pi_1}$.

We first derive some properties of $\widetilde{\Lambda}_0(u)$ and $\widetilde{\Lambda}_1(u)$.
\begin{lem} Let $P_0(x)$ and $P_1(x)$ be strictly greater than $0$ everywhere and $\mathrm{D}(P_0||P_1)$ and $\mathrm{D}(P_1||P_0)$ be strictly greater than $0$ and $\pi_0$ and $\pi_1$ lie in $(0,1)$. Then, the following properties hold:
\begin{itemize}[itemsep=0pt, topsep=0pt]
\item $\widetilde{\Lambda}_0(u)$ and $\widetilde{\Lambda}_1(u)$ are continuous, differentiable and strictly convex.
\item The derivatives of $\widetilde{\Lambda}_0(u)$ and $\widetilde{\Lambda}_1(u)$ are continuous, monotonically increasing, and take all values from $-\infty$ to $\infty$.
\item $\widetilde{\Lambda}_1(u)=\widetilde{\Lambda}_0(u+1)$.
\end{itemize} \label{lem:unequal_log_gen}
\end{lem}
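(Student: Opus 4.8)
The plan is to observe that $\widetilde{\Lambda}_0$ and $\widetilde{\Lambda}_1$ are just affine perturbations of the log-generating functions $\Lambda_0$ and $\Lambda_1$ of the likelihood ratio detector $T_z(x)=\log\frac{P_1(x)}{P_0(x)}$: indeed $\widetilde{\Lambda}_0(u)=\Lambda_0(u)-u\log\frac{\pi_0}{\pi_1}+\log\pi_0$ and $\widetilde{\Lambda}_1(u)=\Lambda_1(u)-u\log\frac{\pi_0}{\pi_1}+\log\pi_1$, with both added constants finite because $\pi_0,\pi_1\in(0,1)$. Since the hypotheses of the present lemma ($P_0,P_1>0$ everywhere, $\mathrm{D}(P_0||P_1),\mathrm{D}(P_1||P_0)>0$) are exactly those under which \Cref{propty:likelihood} was established, I may invoke that property verbatim: $\Lambda_0$ and $\Lambda_1$ are continuous, differentiable, strictly convex, their derivatives are continuous, monotonically increasing and range over all of $\mathbb{R}$, and $\Lambda_1(u)=\Lambda_0(u+1)$. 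The lemma then follows because each of these features is preserved under adding an affine function of $u$.

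Concretely, first I would handle the smoothness and strict convexity: adding the affine term $-u\log\frac{\pi_0}{\pi_1}+\log\pi_0$ leaves continuity and differentiability untouched and does not affect second-order behaviour, so $\widetilde{\Lambda}_0$ inherits strict convexity from $\Lambda_0$; the identical argument gives strict convexity of $\widetilde{\Lambda}_1$. Next, for the derivative claim I would differentiate to get $\widetilde{\Lambda}_0'(u)=\Lambda_0'(u)-\log\frac{\pi_0}{\pi_1}$ and $\widetilde{\Lambda}_1'(u)=\Lambda_1'(u)-\log\frac{\pi_0}{\pi_1}$; subtracting a constant preserves continuity, strict monotonicity (which also follows from the strict convexity just shown), and surjectivity onto $\mathbb{R}$, all of which hold for $\Lambda_0'$ and $\Lambda_1'$ by \Cref{propty:likelihood}.

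Finally, for the shift identity I would substitute and simplify: $\widetilde{\Lambda}_0(u+1)=\Lambda_0(u+1)-(u+1)\log\frac{\pi_0}{\pi_1}+\log\pi_0$, and using $\Lambda_0(u+1)=\Lambda_1(u)$ together with the elementary identity $-(u+1)\log\frac{\pi_0}{\pi_1}+\log\pi_0=-u\log\frac{\pi_0}{\pi_1}+\log\pi_1$, the right-hand side becomes $\Lambda_1(u)-u\log\frac{\pi_0}{\pi_1}+\log\pi_1=\widetilde{\Lambda}_1(u)$, as claimed.

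There is essentially no real obstacle here: the lemma is a bookkeeping consequence of \Cref{propty:likelihood} and the trivial fact that affine perturbations preserve convexity, differentiability, and the shape of the derivative. The only points worth a moment of care are confirming that the added constants $\log\pi_0$, $\log\pi_1$ and $\log\frac{\pi_0}{\pi_1}$ are finite (which is exactly why the hypothesis $\pi_0,\pi_1\in(0,1)$ is imposed) and that the hypotheses of the present lemma match those of \Cref{propty:likelihood} so that it can be applied directly.
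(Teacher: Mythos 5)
Your proposal is correct and follows essentially the same route as the paper's own proof: both recognize $\widetilde{\Lambda}_0$ and $\widetilde{\Lambda}_1$ as affine perturbations of $\Lambda_0$ and $\Lambda_1$, invoke \Cref{propty:likelihood} for the underlying properties, note that affine shifts preserve smoothness, strict convexity, and the derivative's continuity/monotonicity/surjectivity, and verify the shift identity by the same algebraic simplification $-(u+1)\log\frac{\pi_0}{\pi_1}+\log\pi_0=-u\log\frac{\pi_0}{\pi_1}+\log\pi_1$. You are a bit more explicit than the paper in writing out the derivative formulas, but there is no substantive difference in approach.
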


\begin{proof}[Proof of \Cref{lem:unequal_log_gen}]
Note that, $\widetilde{\Lambda}_0(u)$ is the sum of $\Lambda_0(u)$ and an affine function $- u \log{\frac{\pi_0}{\pi_1}} +\log{\pi_0}$. Because $\Lambda_0(u)$ is continuous, differentiable and strictly convex (from \Cref{propty:likelihood}), $\widetilde{\Lambda}_0(u)$ also satisfies those properties.
The second claim also holds for the same reason because the derivative of $\Lambda_0(u)$ satisfies all these properties (from \Cref{propty:likelihood}).

Lastly,
\begin{align}
\widetilde{\Lambda}_0(u+1)=\Lambda_0(u+1) - (u+1) \log{\frac{\pi_0}{\pi_1}} +\log{\pi_0} 
& =\Lambda_0(u+1) - u \log{\frac{\pi_0}{\pi_1}} +\log{\pi_1} \nonumber \\
& \overset{(a)}{=} \Lambda_1(u) - u \log{\frac{\pi_0}{\pi_1}} +\log{\pi_1} =\widetilde{\Lambda}_1(u),
\end{align}
where (a) holds because $\Lambda_1(u)=\Lambda_0(u+1)$ from \Cref{propty:likelihood}.

\end{proof}

\begin{proof}[Proof of \Cref{lem:unequal_priors}]
We specifically consider the case where $\pi_0\neq \pi_1$ in this proof because the case of equal priors $\pi_0= \pi_1$ can be proved using \Cref{lem:separability} and \Cref{lem:likelihood}.

Without loss of generality, we assume $\pi_0> \pi_1$. Thus, $\log{\frac{\pi_0}{\pi_1}}>0$.

\textbf{Case 1:} $\frac{d \widetilde{\Lambda}_1(u)}{d u}|_{u=0}= \mathrm{D}(P_1||P_0)-\log{\frac{\pi_0}{\pi_1}}>0.$

Observe that, $\frac{d \widetilde{\Lambda}_1(u)}{d u}|_{u=-1}=-\mathrm{D}(P_0||P_1)-\log{\frac{\pi_0}{\pi_1}} <0$ and 
$\frac{d \widetilde{\Lambda}_1(u)}{d u}|_{u=0}= \mathrm{D}(P_1||P_0)-\log{\frac{\pi_0}{\pi_1}}>0.$ Thus, the strictly convex function $\widetilde{\Lambda}_1(u)$ attains its minima in $(-1,0)$ (using \Cref{lem:unequal_log_gen}). Next, using $\widetilde{\Lambda}_0(u+1)=\widetilde{\Lambda}_1(u)$ (also from \Cref{lem:unequal_log_gen}), we have $\widetilde{\Lambda}_0(u)$ attaining its minima in $(0,1)$.

For $\tau'= 0$ (equivalently $\tau_0=\log{\frac{\pi_0}{\pi_1}}$), we have
\begin{align}
 E_{\mathrm{FP}, T_0}(\log{\frac{\pi_0}{\pi_1}})-\log{2\pi_0} 
 \overset{(a)}{=} \sup_{u>0} (u\cdot0 -\widetilde{\Lambda}_0(u)-\log{2}) 
& \overset{(b)}{=} -\min_{u} \widetilde{\Lambda}_0(u) -\log{2} \nonumber\\
& \overset{(c)}{=} -\min_{u} \widetilde{\Lambda}_1(u) -\log{2} \nonumber \\
& \overset{(d)}{=} \sup_{u<0} (u\cdot0 -\widetilde{\Lambda}_1(u)-\log{2}) \nonumber \\
& \overset{(e)}{=} E_{\mathrm{FN}, T_0}(\log{\frac{\pi_0}{\pi_1}})-\log{2\pi_1}. \label{eq:equal_exp}
\end{align}
Here, (a) holds from \eqref{eq:uneq0}, (b) holds because $\widetilde{\Lambda}_0(u)$ attains its minima in $(0,1)$, (c) holds from $\widetilde{\Lambda}_0(u+1)=\widetilde{\Lambda}_1(u)$ (see \Cref{lem:unequal_log_gen}), (d) holds because $\widetilde{\Lambda}_1(u)$ attains its minima in $(-1,0)$, and (e) holds from \eqref{eq:uneq1}.

Next, we will show that, for any other value of $\tau'\neq 0$ ($\tau_0\neq \log{\frac{\pi_0}{\pi_1}}$), we either have 
\begin{align}
& E_{\mathrm{FP}, T_0}(\tau_0)-\log{2\pi_0}  < E_{\mathrm{FP}, T_0}(\log{\frac{\pi_0}{\pi_1}})-\log{2\pi_0}< E_{\mathrm{FN}, T_0}(\tau_0)-\log{2\pi_1}, \text{ or}\nonumber \\
&E_{\mathrm{FN}, T_0}(\tau_0)-\log{2\pi_1}  < E_{\mathrm{FP}, T_0}(\log{\frac{\pi_0}{\pi_1}})-\log{2\pi_0} < E_{\mathrm{FP}, T_0}(\tau_0)-\log{2\pi_0}.
\end{align}
Let $\tau'>0$. Then,
\begin{align}
 E_{\mathrm{FP}, T_0}(\tau_0)-\log{2\pi_0} 
 \overset{(a)}{=} \sup_{u>0} (u\tau' -\widetilde{\Lambda}_0(u)-\log{2}) 
 \overset{(b)}{\geq} (u_0^*\tau' -\widetilde{\Lambda}_0(u_0^*)-\log{2}) 
& \overset{(c)}{>} -\widetilde{\Lambda}_0(u_0^*)-\log{2} \nonumber \\
& \overset{(d)}{=} E_{\mathrm{FP}, T_0}(\log{\frac{\pi_0}{\pi_1}})-\log{2\pi_0}.
\end{align}
Here (a) holds from \eqref{eq:uneq0}, (b) holds for any $u_0^*>0$, (c) holds because $u_0\tau'>0$, and (d) holds if we set $u_0^*=\arg\min \widetilde{\Lambda}_0(u)$ since $\widetilde{\Lambda}_0(u)$ attains its minima in $(0,1)$.

\textbf{Sub-case 1a:} $\tau'\geq \frac{d \widetilde{\Lambda}_1(u)}{d u}|_{u=0}= \mathrm{D}(P_1||P_0)-\log{\frac{\pi_0}{\pi_1}}$
\begin{align}
E_{\mathrm{FN}, T_0}(\tau_0)- \log{2\pi_1}=\sup_{u<0} (u\tau'-\widetilde{\Lambda}_1(u)-\log{2}) 
& \overset{(a)}{\leq} \sup_{u<0} (u \frac{d \widetilde{\Lambda}_1(u)}{d u}|_{u=0} -\widetilde{\Lambda}_1(u)-\log{2}) \nonumber \\
& \leq \sup_{u\in \mathcal{R}} (u \frac{d \widetilde{\Lambda}_1(u)}{d u}|_{u=0}-\widetilde{\Lambda}_1(u)-\log{2}) \nonumber \\
&\overset{(b)}{=} (0\frac{d \widetilde{\Lambda}_1(u)}{d u}|_{u=0}-\widetilde{\Lambda}_1(0)-\log{2}) \nonumber \\
& = (-\widetilde{\Lambda}_1(0)-\log{2}) \nonumber \\
& \overset{(c)}{<} -\min_{u}\widetilde{\Lambda}_1(u)-\log{2} \nonumber \\
& \overset{(d)}{=} E_{\mathrm{FP}, T_0}(\log{\frac{\pi_0}{\pi_1}})-\log{2\pi_0},
\end{align}
where (a) holds because $\tau' \geq \frac{d \widetilde{\Lambda}_1(u)}{d u}|_{u=0}$, (b) holds from \Cref{lem:supp_line}, (c) holds from the strict convexity of $\widetilde{\Lambda}_1(u)$ because it attains its minima in $(-1,0)$, and (d) holds from \eqref{eq:equal_exp}.

\textbf{Sub-case 1b:} $0 < \tau' <  \frac{d \widetilde{\Lambda}_1(u)}{du}|_{u=0}$
\begin{align}
 E_{\mathrm{FN}, T_0}(\tau_0) -\log{2\pi_0} = \sup_{u<0} (u\tau'-\widetilde{\Lambda}_1(u) - \log{2}) 
&\leq \sup_{u\in \mathcal{R}} (u\tau'-\widetilde{\Lambda}_1(u)- \log{2}) \nonumber \\
&\overset{(a)}{=}u_a\tau' -\widetilde{\Lambda}_1(u_a)- \log{2} \nonumber \\
& \overset{(b)}{<} -\widetilde{\Lambda}_1(u_a) - \log{2}\ \ \ \ \ [\text{since  } u_a\tau' < 0] \nonumber \\
& \leq -\min_{u}\Lambda_1(u) - \log{2}
\nonumber \\
& \overset{(c)}{=}E_{\mathrm{FP}, T_0}(\log{\frac{\pi_0}{\pi_1}})-\log{2\pi_0}
\end{align}
Here, (a) holds from \Cref{lem:supp_line} because $\widetilde{\Lambda}_1(u)$ is a strictly convex and differentiable function, and its derivative is also continuous, monotonically increasing and takes all values from $-\infty$ to $\infty$ (see \Cref{lem:unequal_log_gen}). Thus, there exists a single $u_a$ such that $\frac{d \widetilde{\Lambda}_1(u)}{du}|_{u=u_a}= \tau'.$
Next, (b) holds because $\frac{d \widetilde{\Lambda}_1(u)}{du}|_{u=u_a}= \tau' < \frac{d \widetilde{\Lambda}_1(u)}{du}|_{u=0}$, and the derivative is monotonically increasing, implying $u_a<0$. Lastly (c) holds from \eqref{eq:equal_exp}.

Thus,
\begin{align}
 E_{\mathrm{FN}, T_0}(\tau_0)-\log{2\pi_1}  < E_{\mathrm{FP}, T_0}(\log{\frac{\pi_0}{\pi_1}})-\log{2\pi_0}
< E_{\mathrm{FP}, T_0}(\tau_0)-\log{2\pi_0}.
\end{align}

For $\tau'<0$, a similar proof holds, leading to 
\begin{align}
 E_{\mathrm{FP}, T_0}(\tau_0)-\log{2\pi_0}  < E_{\mathrm{FP}, T_0}(\log{\frac{\pi_0}{\pi_1}})-\log{2\pi_0}
< E_{\mathrm{FN}, T_0}(\tau_0)-\log{2\pi_1} ,
\end{align}

Then, the value of $\tau_0$ that maximizes the Chernoff exponent $E_{e,T_0}(\tau_0)$, i.e., $$\max_{\tau_0}\ \min \{  E_{\mathrm{FP}, T_0}(\tau_0)-\log{2\pi_0}, E_{\mathrm{FN}, T_0}(\tau_0)-\log{2\pi_1} \},$$ is given by $\tau_0^*=\log{\frac{\pi_0}{\pi_1}}$ ($\tau'=0$). 

This matches with the detector that minimizes the Bayesian probability of error under unequal priors (see Theorem 3.1 in \cite{gallager2012detection}).

\textbf{Case 2:} $\frac{d \widetilde{\Lambda}_1(u)}{d u}|_{u=0}= \mathrm{D}(P_1||P_0)-\log{\frac{\pi_0}{\pi_1}}\leq 0.$

For this case, note that, both $\widetilde{\Lambda}_1(u)$ and $\widetilde{\Lambda}_0(u)$ attain their minima in $u \in [0,\infty)$.

For $\tau'= 0$ (equivalently $\tau_0=\log{\frac{\pi_0}{\pi_1}}$), we have
\begin{align}
 E_{\mathrm{FN}, T_0}(\log{\frac{\pi_0}{\pi_1}})-\log{2\pi_1}  = \sup_{u<0} (u\cdot0 -\widetilde{\Lambda}_1(u)-\log{2}) 
 =  -\widetilde{\Lambda}_1(0) -\log{2}. 
\end{align}
And,
\begin{align}
 E_{\mathrm{FP}, T_0}(\log{\frac{\pi_0}{\pi_1}})-\log{2\pi_0}= 
 \sup_{u>0} (u\cdot0 -\widetilde{\Lambda}_0(u)-\log{2}) 
& = -\min_{u} \widetilde{\Lambda}_0(u) -\log{2} \nonumber\\
& =  -\min_{u} \widetilde{\Lambda}_1(u) -\log{2} \nonumber \\
&  \geq -\widetilde{\Lambda}_1(0) -\log{2}. 
\end{align}
Thus,
\begin{align}
\min \{  E_{\mathrm{FP}, T_0}(\log{\frac{\pi_0}{\pi_1}})-\log{2\pi_0}, E_{\mathrm{FN}, T_0}(\log{\frac{\pi_0}{\pi_1}})-\log{2\pi_1} \} 
 = -\widetilde{\Lambda}_1(0) -\log{2}.
\end{align}

Now, we will show that any other value of $\tau'\neq 0$ (equivalently $\tau_0\neq \log{\frac{\pi_0}{\pi_1}}$) cannot increase the Chernoff exponent of the probability of error beyond $-\widetilde{\Lambda}_1(0) -\log{2}.$

\textbf{Sub-case 2a:} $\tau'\geq \frac{d \widetilde{\Lambda}_1(u)}{d u}|_{u=0}= \mathrm{D}(P_1||P_0)-\log{\frac{\pi_0}{\pi_1}}$
\begin{align}
E_{\mathrm{FN}, T_0}(\tau_0)- \log{2\pi_1}=\sup_{u<0} (u\tau'-\widetilde{\Lambda}_1(u)-\log{2}) 
& \overset{(a)}{\leq} \sup_{u<0} (u \frac{d \widetilde{\Lambda}_1(u)}{d u}|_{u=0} -\widetilde{\Lambda}_1(u)-\log{2}) \nonumber \\
& \leq \sup_{u\in \mathcal{R}} (u \frac{d \widetilde{\Lambda}_1(u)}{d u}|_{u=0}-\widetilde{\Lambda}_1(u)-\log{2}) \nonumber \\
&\overset{(b)}{=} (0\frac{d \widetilde{\Lambda}_1(u)}{d u}|_{u=0}-\widetilde{\Lambda}_1(0)-\log{2}) \nonumber \\
& = (-\widetilde{\Lambda}_1(0)-\log{2}),
\end{align}
where (a) holds because $\tau' \geq \frac{d \widetilde{\Lambda}_1(u)}{d u}|_{u=0}$ and (b) holds from \Cref{lem:supp_line}. Thus,
\begin{align}
\min \{  E_{\mathrm{FP}, T_0}(\tau_0)-\log{2\pi_0}, E_{\mathrm{FN}, T_0}(\tau_0)-\log{2\pi_1} \}  \leq -\widetilde{\Lambda}_1(0) -\log{2}.
\end{align}

\textbf{Sub-case 2b:} $\tau'< \frac{d \widetilde{\Lambda}_1(u)}{d u}|_{u=0}= \mathrm{D}(P_1||P_0)-\log{\frac{\pi_0}{\pi_1}}$
\begin{align}
 E_{\mathrm{FP}, T_0}(\tau_0)-\log{2\pi_0} = \sup_{u>0} (u\tau' -\widetilde{\Lambda}_0(u)-\log{2}) 
& \overset{(a)}{\leq} \sup_{u>0} (u \frac{d \widetilde{\Lambda}_1(u)}{d u}|_{u=0} -\widetilde{\Lambda}_0(u)-\log{2}) \nonumber \\
& \overset{(b)}{\leq} \sup_{u>0} (u \frac{d \widetilde{\Lambda}_0(u)}{d u}|_{u=1} -\widetilde{\Lambda}_0(u)-\log{2}) \nonumber \\
& \overset{(c)}{\leq} \sup_{u\in \mathcal{R}} (u \frac{d \widetilde{\Lambda}_0(u)}{d u}|_{u=1} -\widetilde{\Lambda}_0(u)-\log{2}) \nonumber \\
& \overset{(d)}{=} \frac{d \widetilde{\Lambda}_0(u)}{d u}|_{u=1} -\widetilde{\Lambda}_0(1)-\log{2} \nonumber \\
& \overset{(e)}{\leq}  -\widetilde{\Lambda}_0(1)-\log{2} \nonumber \\
& \overset{(f)}{=} -\widetilde{\Lambda}_1(0)-\log{2}.
\end{align}
Here (a) holds because $ \tau' < \frac{d \widetilde{\Lambda}_1(u)}{d u}|_{u=0}$, (b) holds from \Cref{lem:unequal_log_gen} since $\widetilde{\Lambda}_1(u)=\widetilde{\Lambda}_0(u+1)$, (c) holds because the supremum is taken over a larger superset, (d) holds from \Cref{lem:supp_line}, (e) holds because $\frac{d \widetilde{\Lambda}_0(u)}{d u}|_{u=1}= \frac{d \widetilde{\Lambda}_1(u)}{d u}|_{u=0} = \mathrm{D}(P_1||P_0)-\log{\frac{\pi_0}{\pi_1}} \leq 0$, and (f) holds again from  from \Cref{lem:unequal_log_gen} since $\widetilde{\Lambda}_1(u)=\widetilde{\Lambda}_0(u+1)$. Thus, 
\begin{align}
\max_{\tau_0} \min \{  E_{\mathrm{FP}, T_0}(\tau_0)-\log{2\pi_0}, E_{\mathrm{FN}, T_0}(\tau_0)-\log{2\pi_1} \} = -\widetilde{\Lambda}_1(0) -\log{2},
\end{align}
which is attained at $\tau_0=\log{\frac{\pi_0}{\pi_1}}$.

\end{proof}

\subsection{Unequal priors on both $Z$ and $Y$}
\label{app:unequal_z}

Here we discuss a modification of optimization \eqref{opt:both_group} proposed in \Cref{subsec:limit} to account for the case of unequal priors on both $Z$ and $Y$. 

Let $\Pr(Z=0)=\lambda_0$ and $\Pr(Z=1)=\lambda_1$. Also let, $\Pr(Y=0|Z=0)=\pi_{00}$, $\Pr(Y=1|Z=0)=\pi_{10}$, $\Pr(Y=0|Z=1)=\pi_{01}$ and $\Pr(Y=1|Z=1)=\pi_{11}$.

Then, the overall probability of error considering both groups together is given by:
\begin{align}
& \lambda_0 P_{e}^{T_0}(\tau_0)+ \lambda_1 P_{e}^{T_1}(\tau_1) \nonumber \\
& = \frac{1}{2}(2\lambda_0) P_{e}^{T_0}(\tau_0)+ \frac{1}{2}(2\lambda_1) P_{e}^{T_1}(\tau_1) \nonumber \\
& = \frac{1}{4}(4\lambda_0\pi_{00}) P_{\mathrm{FP},T_0}(\tau_0)+\frac{1}{4}(4\lambda_0\pi_{10}) P_{\mathrm{FN},T_0}(\tau_0) + \frac{1}{4}(4\lambda_1\pi_{01}) P_{\mathrm{FP},T_1}(\tau_1) + \frac{1}{4}(4\lambda_1\pi_{11}) P_{\mathrm{FN},T_1}(\tau_1).
 \end{align}

Then, the error exponent of the overall probability of error considering both groups is defined as:
\begin{align}
&\min\{ E_{\mathrm{FP},T_0}(\tau_0)-4\pi_{00}\lambda_0,E_{\mathrm{FN},T_0}(\tau_0)-4\pi_{10}\lambda_0,  E_{\mathrm{FP},T_1}(\tau_1)-4\pi_{01}\lambda_1,E_{\mathrm{FN},T_1}(\tau_1)-4\pi_{11}\lambda_1\}. 
\end{align}

These log-generating functions can be plotted, and the intercepts made by their tangents can be examined again to obtain the error exponents, leading to the optimal detector.

\end{document}